\documentclass{article}

\usepackage[left=1in, right=1in, top=1in, bottom=1in]{geometry}
    %\usepackage{showframe} %This line can be used to clearly show the new margins

%\newgeometry{vmargin={15mm}, hmargin={35mm,35mm}}   % set the margins

%\newgeometry{vmargin={15mm}, hmargin={25mm,25mm}}   % set the margins
%\usepackage[round]{natbib}
\usepackage[square,sort,comma,numbers]{natbib}

\usepackage{microtype}
\usepackage{graphicx}
\usepackage{subfigure}
\usepackage{array}
\usepackage{booktabs} % for professional tables
\usepackage{amsfonts}
\usepackage{amssymb,amsmath,amsthm,bm}
\newtheorem{remark}{Remark}
\newtheorem{theorem}{Theorem}
\newtheorem{definition}{Definition}
\newtheorem{lemma}{Lemma}
\newtheorem{proposition}{Proposition}
\usepackage{soul}
\usepackage{lmodern}
\usepackage{booktabs}
\usepackage{threeparttable}
\usepackage{mathtools}
\usepackage{tablefootnote}

\usepackage{cuted}
\usepackage{lipsum, color}

\usepackage{verbatim}

\usepackage{hyperref}

\usepackage{algorithm}
\usepackage[noend]{algpseudocode}

% Attempt to make hyperref and algorithmic work together better:
%\newcommand{\theHalgorithm}{\arabic{algorithm}}

% Highlight a newly defined term

\DeclareMathOperator{\DTFT}{\mathrm{DTFT}}

% Vectors

% Matrix

\def \Ib {{\mathbf{Ib}}}
\def \ub {{\mathbf{u}}}
\def \cM {{\mathcal{M}}}
\def \cS {{\mathcal{S}}}
\def \cR {{\mathcal{R}}}
\def \PP {{\mathbb{P}}}
\def \RR {{\mathbb{R}}}
\def \EE {{\mathbb{E}}}

\newcommand{\la}{\langle}
\newcommand{\ra}{\rangle}
\def \wb {{\mathbf w}} 
\def \nb {{\mathbf n}} 

\def \bx {{\bm x}}
\def \by {{\bm y}}
\def \Ib {{\mathbf{I}}}
\def \Ab {{\mathbf{A}}}
\def \Bb {{\mathbf{B}}}
\def \Ub {{\mathbf{U}}}
\def \bLambda {{\bm \Lambda}}

 % Laplace distribution

 % See usage in notation.tex. Chosen to match Daphne's book.

\DeclareMathOperator{\Tr}{Tr}

\usepackage{soul}

% \newcommand{\vi}{\mathbf{v}}

% \newcommand{\1}{\mathbf{1}}

%\newcommand{\spann}{\operatorname{span}}

% \newcommand{\R}{\mathbb{R}}

% \newcommand{\v}{\vv}

% \newcommand{\E}{\mathds{E}}
% \newcommand{\Ll}{\mathcal{L}}

%\title{Smeared Gradient Descent}
\title{DP-LSSGD: A Stochastic Optimization Method to Lift the Utility in Privacy-Preserving ERM}

% The \author macro works with any number of authors. There are two
% commands used to separate the names and addresses of multiple
% authors: \And and \AND.
%
% Using \And between authors leaves it to LaTeX to determine where to
% break the lines. Using \AND forces a line break at that point. So,
% if LaTeX puts 3 of 4 authors names on the first line, and the last
% on the second line, try using \AND instead of \And before the third
% author name.

\author{
Bao Wang \\
  Department of Mathematics\\
  University of California, Los Angeles\\
  \texttt{wangbaonj@gmail.com}\\
  \and
  Quanquan Gu \\
  Department of Computer Science\\
  University of California, Los Angeles\\
  \texttt{qgu@cs.ucla.edu}\\
  \and
  March Boedihardjo \\
  Department of Mathematics\\
  University of California, Los Angeles\\
  \texttt{march@math.ucla.edu} \\
  \and
  Farzin Barekat\\
  Department of Mathematics\\
  University of California, Los Angeles\\
  \texttt{fbarekat@math.ucla.edu} \\
  \and
  Stanley J. Osher \\
  Department of Mathematics\\
  University of California, Los Angeles\\
   \texttt{sjo@math.ucla.edu} \\
}

\begin{document}
% \nipsfinalcopy is no longer used

\maketitle

\begin{abstract}
Machine learning (ML) models trained by differentially private stochastic gradient descent (DP-SGD) have much lower utility than the non-private ones. To mitigate this degradation, we propose a DP Laplacian smoothing SGD (DP-LSSGD) to train ML models with differential privacy (DP) guarantees. At the core of DP-LSSGD is the Laplacian smoothing, which smooths out the Gaussian noise used in the Gaussian mechanism. Under the same amount of noise used in the Gaussian mechanism, DP-LSSGD attains the same DP guarantee, but in practice, DP-LSSGD makes training both convex and nonconvex ML models more stable and enables the trained models to generalize better. The proposed algorithm is simple to implement and the extra computational complexity and memory overhead compared with DP-SGD are negligible. DP-LSSGD is applicable to train a large variety of ML models, including DNNs. The code is available at \url{https://github.com/BaoWangMath/DP-LSSGD}.
%generally 
%applicable to train a large variety of ML models, including deep neural nets.
\end{abstract}

\section{Introduction}
Many released machine learning (ML) models are trained on sensitive data that are often crowdsourced or contain private information \citep{Yuen:2011,Feng:2017,Liu:2017}. With overparameterization, deep neural nets (DNNs) can memorize the private training data, and it is possible to recover them and break the privacy by attacking the released models \citep{Shokri:2017}. For example, Fredrikson et al. demonstrated that a model-inversion attack can recover training images from a facial recognition system \citep{Fredrikson:2015}. Protecting the private data is one of the most critical tasks in ML.

Differential privacy (DP) \citep{Dwork:2006-Basic} is a theoretically rigorous tool for designing algorithms on aggregated databases with a privacy guarantee. The idea is to add a certain amount of noise to randomize the output of a given algorithm such that the attackers cannot distinguish outputs of any two adjacent input datasets that differ in only one entry. 
%Two types of noise are typically injected to the algorithm for DP guarantee: Laplace noise and Gaussian noise.

For repeated applications of additive noise based mechanisms, many tools have been invented to analyze the DP guarantee for the model obtained at the final stage. These include the basic and strong composition theorems  %\citep{Dwork:2009-Basic,} 
and their refinements \citep{Dwork:2006-Basic,Dwork:2010,Kairouz:2015}, the moments accountant \citep{Abadi:2016}, etc. Beyond the original notion of DP, there are also many other ways to define the privacy, e.g., local DP \citep{Duchi:2014}, concentrated/zero-concentrated DP \citep{Dwork:2016,Bun:2016}, and R\'enyi-DP (RDP) \citep{mironov2017renyi}.

Differentially private stochastic gradient descent (DP-SGD) reduces the utility of the trained models severely compared with SGD. As shown in Figure~\ref{Degrade-performance}, the training and validation losses of the logistic regression on the MNIST dataset increase rapidly when the DP guarantee becomes stronger. 
%\footnote{detailed experimental setting will be discussed in Section~\ref{Section-Experiments}.}
%Also, 
The convolutional neural net (CNN) \footnote{\url{github.com/tensorflow/privacy/blob/master/tutorials/mnist_dpsgd_tutorial.py}} trained by DP-SGD has much lower testing accuracy than the non-private one on the MNIST. We will discuss the detailed experimental settings in Section~\ref{Section-Experiments}. A natural question raised from such performance degradations is:

\emph{Can we improve DP-SGD, with negligible extra computational complexity and memory cost, such that it can be used to train general ML models with improved utility?
%and possibly better DP guarantee?
}

%\textcolor{red}{Logistic regression: train 50 epoch with batch size 128, ..., say details are discussed in the numerical results section. For CNN, also refer to numerical results section and footnote the neural net architecture.}

\begin{figure}[!ht]
\centering
\begin{tabular}{ccc}
\includegraphics[clip, trim=0.1cm 4.5cm 0.1cm 5cm, width=0.3\columnwidth]{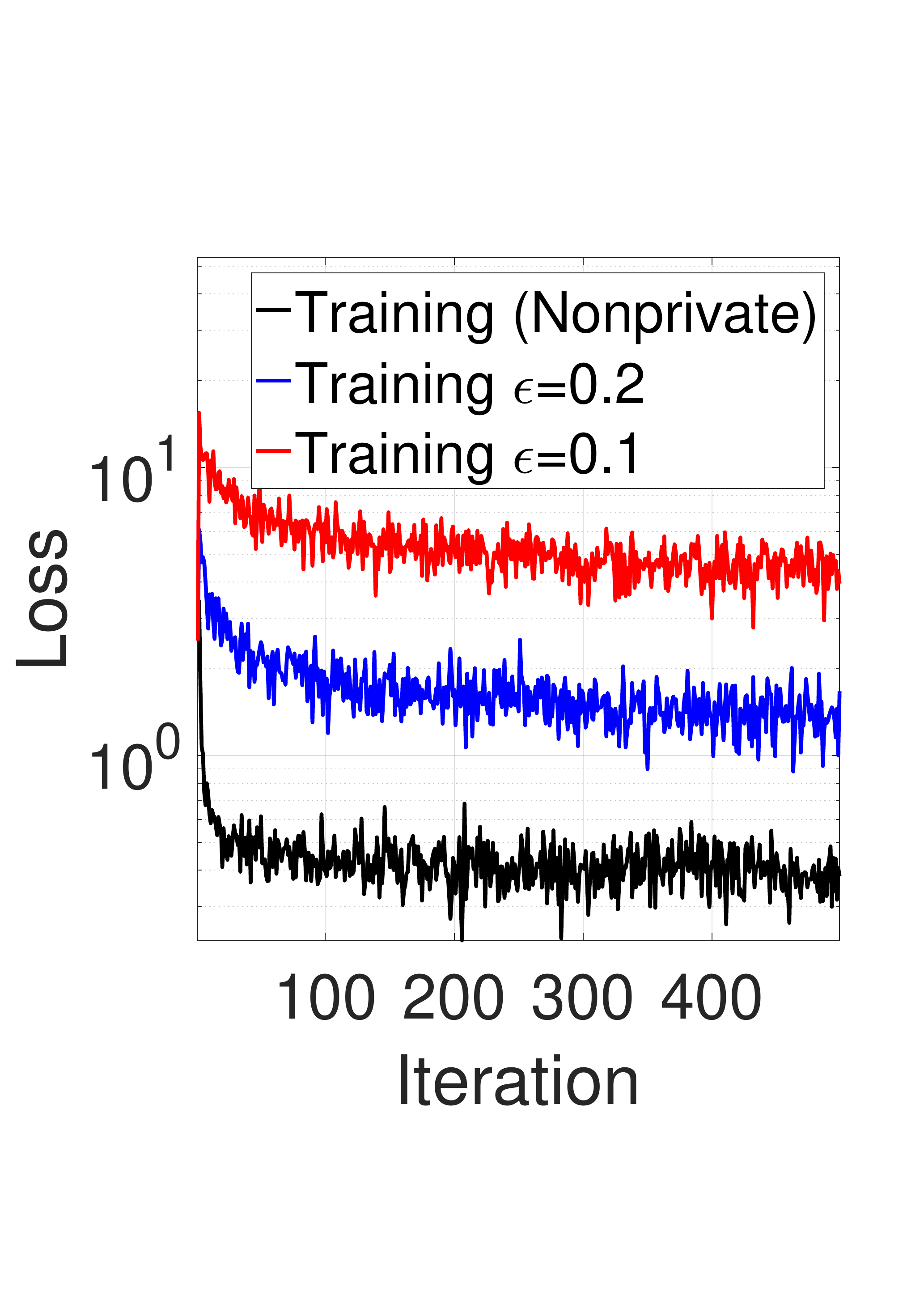}&
\includegraphics[clip, trim=0.1cm 4.5cm 0.1cm 5cm, width=0.3\columnwidth]{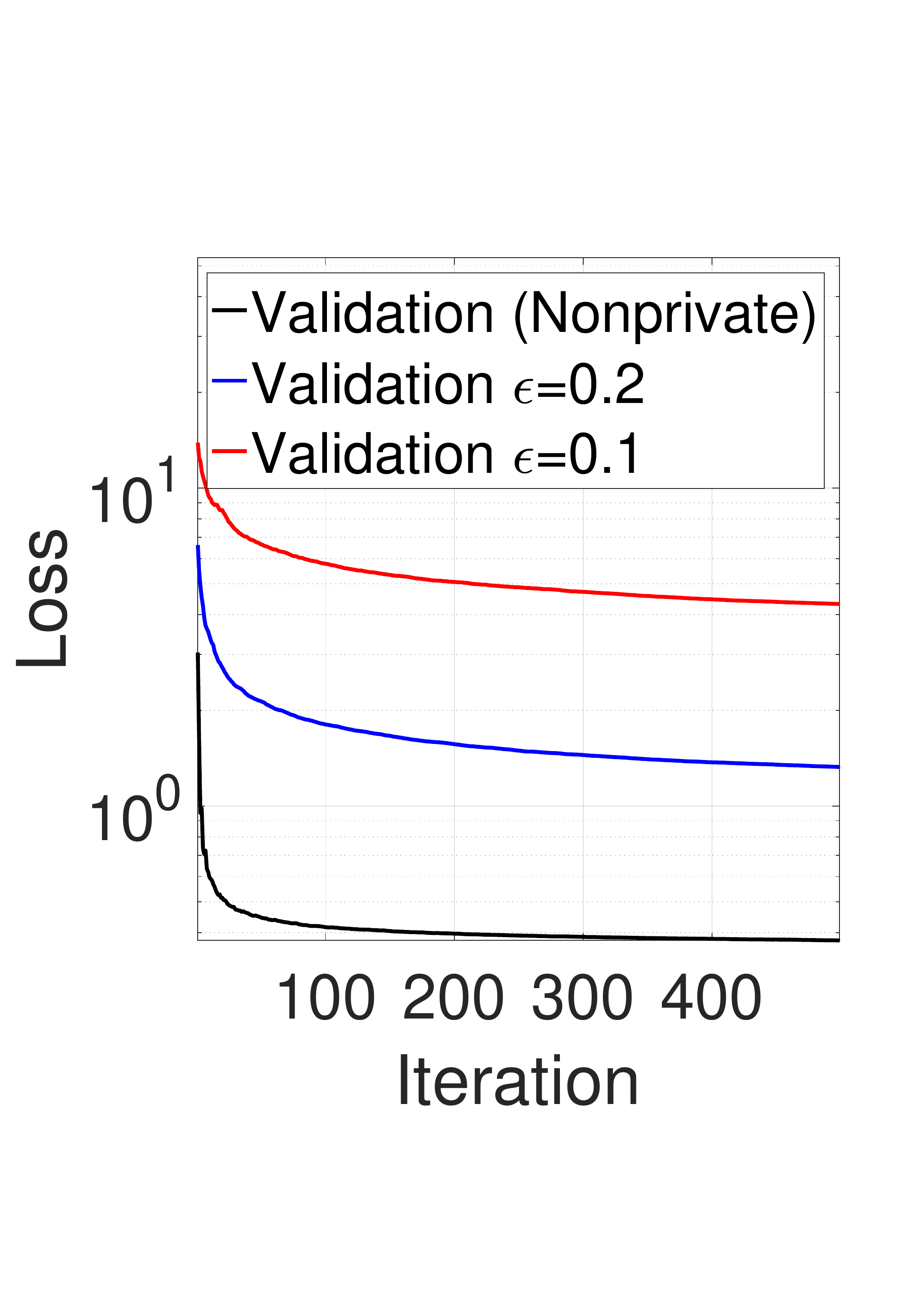}&
\includegraphics[clip, trim=0.1cm 4.5cm 0.1cm 5cm, width=0.3\columnwidth]{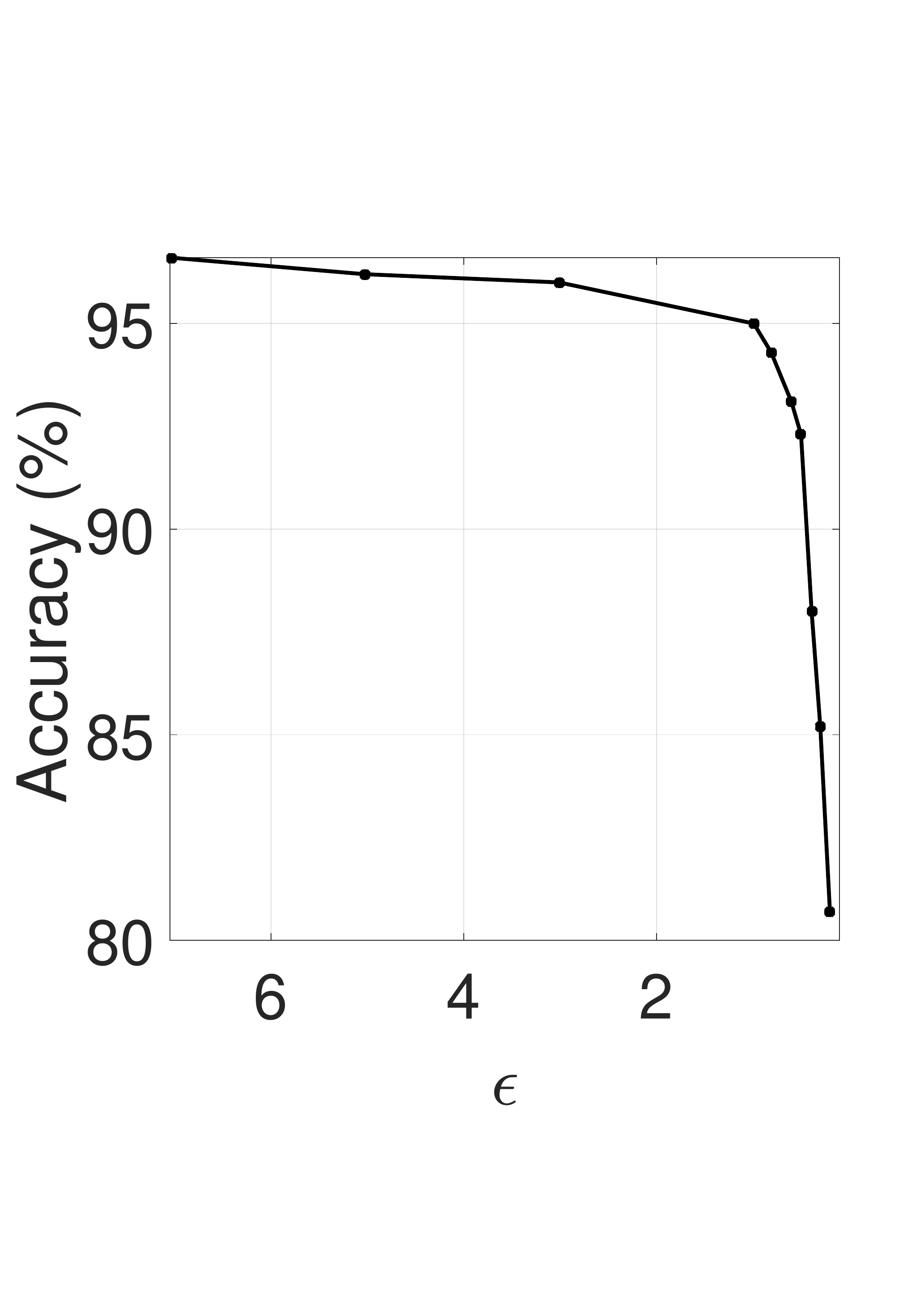}\\
\end{tabular}
%\vskip -0.20cm
\caption{Training (left) and validation (middle) losses of the logistic regression on the MNIST trained by DP-SGD with $(\epsilon, \delta=10^{-5})$-DP guarantee. (right): testing accuracy of a simple CNN on the MNIST trained by DP-SGD with $(\epsilon, \delta=10^{-5})$-DP guarantee.}
\label{Degrade-performance}
\end{figure}
We answer the above question affirmatively by proposing differentially private Laplacian smoothing SGD (DP-LSSGD) to improve the utility in privacy-preserving empirical risk minimization (ERM). 
%It gives both theoretical and empirical advantages compared with DP-SGD.
DP-LSSGD leverages the Laplacian smoothing \citep{LSGD:2018} as a post-processing to smooth the injected Gaussian noise in the differentially private SGD (DP-SGD) to improve the convergence of DP-SGD in training ML models with DP guarantee.

%\textcolor{red}{Make a detailed description of the work, for instance, algorithm description. Decent work!}

\subsection{Our Contributions}
The main contributions of our work are %is 
highlighted as follows:
%excluding a small %an intrinsic 
\begin{itemize}%[leftmargin=*]
\item We propose DP-LSSGD and prove its privacy and utility guarantees for convex/nonconvex optimizations. We prove that under the same privacy budget, DP-LSSGD achieves better utility, excluding a small term that is usually dominated by the other terms, than DP-SGD by a factor that is much less than one for convex optimization.

\item We perform a large number of experiments %on both 
logistic regression and CNN to verify the utility improvement by using DP-LSSGD. Numerical results show that DP-LSSGD remarkably reduces training and validation losses and improves the generalization of the trained private models.
\end{itemize}

In Table~\ref{Utility:Comparison}, we compare the privacy and utility guarantees of DP-LSSGD and DP-SGD.
%The privacy guarantee of DP-LSSGD for convex optimization is under the same number of iterations used as DP-SGD, while theoretically DP-LSSGD takes less iterations to reach the utility bound. 
For the utility, the notation $\tilde{O}(\cdot)$ hides the same constant and log factors for each bound. The constants $d$ and $n$ denote the dimension of the model's parameters and the number of training points, respectively. The numbers $\gamma$ and $\beta$ are positive constants that are strictly less than one, and $D_0, D_\sigma, G$ are positive constants, which will be defined in Section~\ref{Theory-Section}.

\begin{table}[!ht]
\fontsize{8.0}{8.0}\selectfont
\centering
\begin{threeparttable}
\caption{Utility and Differential Privacy Guarantees. 
%The DP guarantee for DPLSSGD is under the condition that it takes the same number of iterations as DPSGD, it indeed can take less iteration.
}\label{Utility:Comparison}
\begin{tabular}{cccccc}
\toprule[1.0pt]
%Optimization 
Algorithm  &DP & Assumption  & Utility & Measurement & Reference \cr
\midrule[0.8pt]
DP-SGD    & $(\epsilon, \delta)$ & convex         & 
$\Tilde{\mathcal{O}}\left(\frac{\sqrt{(D_0+G^2)d}}{(\epsilon n)}\right)$ & optimality gap&
\cite{Bassily:2014} \cr
DP-SGD    & $(\epsilon, \delta)$ & nonconvex         & $\Tilde{\mathcal{O}}\left(\sqrt{d}/(\epsilon n)\right)$
 & $\ell_2$-norm of gradient &
\cite{Zhang:2017} \cr
DP-LSSGD  & $(\epsilon, \delta)$ & convex         %&$\Tilde{\mathcal{O}}\left(\frac{\sqrt{\gamma d}}{\epsilon n}\right)$  
&$\Tilde{\mathcal{O}}\left(\frac{\sqrt{\gamma(D_\sigma+G^2) d}}{(\epsilon n)}\right)$ & optimality gap
& {\bf This Work} \cr
DP-LSSGD  & $(\epsilon, \delta)$ & nonconvex        &  %$\Tilde{\mathcal{O}}\left(\frac{\sqrt{\beta d}}{\epsilon n}\right)$
$\Tilde{\mathcal{O}}\left(\sqrt{\beta d}/(\epsilon n)\right)$
\tnote{1} &  $\ell_2$-norm of gradient & {\bf This Work} \cr
\bottomrule[1.0pt]
\end{tabular}
\begin{tablenotes}
\item[1] Measured in the norm induced by $\Ab_\sigma^{-1}$, we will discuss this in detail in Section~\ref{Section-Experiments}.
\end{tablenotes}
\end{threeparttable}
\end{table}

\subsection{Related Work}
There is a massive volume of research over the past decade on designing algorithms for privacy-preserving ML. Objective perturbation, output perturbation, and gradient perturbation are the three major approaches to perform ERM with a DP guarantee. 
%We discuss some related works in this part. There are many more exciting works that cannot be discussed here.
\cite{Chaudhuri:2008,Chaudhuri:2011} considered both output and objective perturbations for privacy-preserving ERM, and gave theoretical guarantees for both privacy and utility for logistic regression and SVM. \cite{Song:2013} numerically studied the effects of learning rate and batch size in DP-ERM. \cite{Wang:2016} studied stability, learnability and other properties of DP-ERM. \cite{Lee:2018} proposed an adaptive per-iteration privacy budget in concentrated DP gradient descent. 
%Variance reduction techniques, e.g., SVRG, have also been introduced to DP-ERM \citep{Wang:2017}. 
The utility bound of DP-SGD has also been analyzed for both convex and nonconvex smooth objectives \citep{Bassily:2014,Zhang:2017}. \cite{Jayaraman:2018} analyzed the excess empirical risk of DP-ERM in a distributed setting. Besides ERM, many other ML models have been made differentially private. These include: clustering \citep{Su:2015,Wang:2015-Clustering,Balcan:2017}, matrix completion \citep{Jain:2018}, online learning \citep{Jain:2012}, sparse learning \citep{talwar2015nearly,wang2019differential}, and topic modeling \citep{Park:2016}. \cite{Gilbert:2017} exploited the ill-conditionedness of inverse problems to design algorithms to release differentially private measurements of the physical system. %\cite{pmlr-v97-wang19m} considered sparse linear regression in the local DP models.

% For deep learning
\cite{Shokri:2015} proposed distributed selective SGD to train deep neural nets (DNNs) with a DP guarantee in a distributed system, however, the obtained privacy guarantee was very loose. \cite{Abadi:2016} considered applying DP-SGD to train DNNs in a centralized setting. They clipped the gradient $\ell_2$ norm to bound the sensitivity and invented the moment accountant to get better privacy loss estimation. \cite{Papernot:2017} proposed Private Aggregation of Teacher Ensembles/PATE based on the semi-supervised transfer learning to train DNNs, and this framework improves both privacy and utility on top of the work by \cite{Abadi:2016}.
%and the trained model have even better utility and differential privacy than the work by \cite{Abadi:2016}. 
Recently \cite{Papernot:2018} introduced new noisy aggregation mechanisms for teacher ensembles that enable a tighter theoretical DP guarantee. The modified PATE is scalable to the large dataset and applicable to more diversified ML tasks. 
%\cite{Geyer:2017} considered general ML with a DP guarantee under federated settings. \cite{Rahman:2018} numerically studied the vulnerability and privacy-utility trade-off of DNNs trained with a DP guarantee to adversarial attacks.

Laplacian smoothing (LS) can be regarded as a denoising technique that performs post-processing on the Gaussian noise injected stochastic gradient. Denoising has been used in the DP 
%context 
earlier:
Post-processing can enforce consistency of contingency table releases \citep{barak2007privacy} and leads to accurate estimation of the degree distribution of private network \citep{hay2009accurate}.
\cite{nikolov2013geometry} showed that %simple 
post-processing by projecting linear regression solutions, when the ground truth solution is sparse, to a given $\ell_1$-ball can remarkably reduce the estimation error. \cite{bernstein2017differentially} used Expectation-Maximization to denoise a class of graphical models' parameters. \cite{balle2018improving} showed that in the output perturbation based differentially private algorithm design, denoising dramatically improves the accuracy of the Gaussian mechanism in the high-dimensional regime.
%considered the adaptive estimation and showed that denoising can be helpful even without the prior knowledge which is required in the previous works. 
To the best of our knowledge, we are the first to design a denoising technique on the Gaussian noise injected gradient to improve the utility of the trained private ML models.

\subsection{Notation}
We use boldface upper-case letters $\Ab$, $\Bb$ to denote matrices and boldface lower-case letters $\bx$, $\by$ to denote vectors. For vectors $\bx$ and $\by$ and positive definite matrix $\Ab$, we use $\|\bx\|_2$ and $\|\bx\|_\Ab$ to denote the $\ell_2$-norm and the induced norm by $\Ab$, respectively; $\la\bx, \by\ra$ denotes the inner product of $\bx$ and $\by$; and $\lambda_i(\Ab)$ denotes the $i$-th largest eigenvalue of $\Ab$. We denote the set of numbers from $1$ to $n$ by $[n]$. $\mathcal{N}(\mathbf{0}, \Ib_{d\times d})$ represents $d$-dimensional standard Gaussian.
\subsection{Organization}
This paper is organized in the following way: In Section~\ref{Algorithm-Section}, we introduce the DP-LSSGD algorithm.
%, which merely injects an appropriate Gaussian noise to guarantee the privacy of LSSGD. 
In Section~\ref{Theory-Section}, we analyze the privacy and utility guarantees of DP-LSSGD for both convex and nonconvex optimizations. We numerically verify the efficiency of DP-LSSGD in Section~\ref{Section-Experiments}. We conclude this work and point out some future directions in Section~\ref{Conclusion}.

\section{Problem Setup and Algorithm}\label{Algorithm-Section}
\subsection{Laplacian Smoothing Stochastic Gradient Descent (LSSGD)}
In this paper, we consider empirical risk minimization problem as follows. Given a training set $S=\{(\mathbf{x}_1,y_1),\ldots,(\mathbf{x}_n,y_n)\}$ drawn from some unknown but fixed distribution, we aim to find an empirical risk minimizer that minimizes the empirical risk as follows,
\begin{align}\label{Finite-Sum}
    \min_{\wb} F(\wb) :=\frac{1}{n}\sum_{i=1}^nf_i(\wb),\ \ \wb\in\RR^d,
\end{align}
where $F(\wb)$ is the empirical risk (a.k.a., training loss), $f_i(\wb) = \ell(\wb;\mathbf{x}_i,y_i)$ is the loss function of a given ML model defined on the $i$-th training example $(\mathbf{x}_i,y_i)$, and $\wb \in \RR^d$ is the model parameter we want to learn. Empirical risk minimization serves as the mathematical foundation for training many ML models that are mentioned above. The LSSGD \citep{LSGD:2018} for solving \eqref{Finite-Sum} is given by 
%this finite-sum optimization is given by
\begin{equation}
\label{LS-SGD}
\wb^{k+1} = \wb^k -\eta \Ab_\sigma^{-1}\bigg(\frac{1}{b}\sum_{i_k\in \mathcal{B}_k}\nabla f_{i_k}(\wb^k)\bigg),%f_{i_k}(\mathbf{w}^k),
\end{equation}
where $\eta$ is the learning rate, $\nabla f_{i_k}$ denotes the stochastic gradient of $F$ evaluated from the pair of input-output $\{\bx_{i_k}, y_{i_k}\}$, and $\mathcal{B}_k$ is a random subset of size $b$ from $[n]$. Let $\Ab_\sigma = \Ib - \sigma\mathbf{L}$ for $\sigma \geq 0$ being a constant, where $\Ib \in \RR^{d\times d}$ and $\mathbf{L} \in \RR^{d\times d}$ are the identity and the discrete one-dimensional Laplacian matrix with periodic boundary condition, respectively. Therefore,
\begin{equation}\label{eq:tri-diag}
\Ab_\sigma := 
\begin{bmatrix}
1+2\sigma   & -\sigma &  0&\dots &0& -\sigma \\
-\sigma     & 1+2\sigma & -\sigma & \dots &0&0 \\
0 & -\sigma  & 1+2\sigma & \dots & 0 & 0 \\
\dots     & \dots & \dots &\dots & \dots & \dots\\
-\sigma     &0& 0 & \dots &-\sigma & 1+2\sigma
\end{bmatrix}
\end{equation}
When $\sigma=0$, LSSGD reduces to SGD.
%It is worth noting that $\Ab_\sigma^{-1}$ has a condition number at most $1+4\sigma$ which does not change the SGD much.

Note that $\Ab_\sigma$ is positive definite with condition number $1+4\sigma$ that is independent of $\Ab_\sigma$'s dimension, and LSSGD guarantees the same convergence rate as SGD in both convex and nonconvex optimization. Moreover, Laplacian smoothing (LS) 
%can help to avoid spurious minima, 
can reduce the variance of SGD on-the-fly, and lead to better generalization in training many ML models including DNNs \citep{LSGD:2018}. For $\mathbf{v}\in \RR^d$, let $\ub:=\Ab_\sigma^{-1} \mathbf{v}$, i.e., $\mathbf{v}=\Ab_\sigma \ub$. Note $\Ab_\sigma$ is a convolution matrix, therefore, $\mathbf{v}=\Ab_\sigma \ub = \ub - \sigma\mathbf{d}*\ub$, where $\mathbf{d} = [-2, 1, 0, \cdots, 0, 1]^T$ and $*$ is the convolution operator. By the fast Fourier transform (FFT), we have
%Computationally, we use the fast Fourier transform (FFT) to perform gradient smoothing in the following way
$$
%\Ab_\sigma^{-1} \mathbf{v} = \ub = {\rm ifft}\left(\frac{{\rm fft}(\mathbf{v})}{\mathbf{1} -\sigma \cdot {\rm fft}(\mathbf{d})}\right),
\Ab_\sigma^{-1} \mathbf{v} = \ub = {\rm ifft}\left({\rm fft}(\mathbf{v})/(\mathbf{1} -\sigma \cdot {\rm fft}(\mathbf{d}))\right),
$$
where the division in the right hand side parentheses is performed in a coordinate wise way.
%where $\mathbf{v}$ is any stochastic gradient vector and $\mathbf{d} = [-2, 1, 0, \cdots, 0, 1]^T$.

\subsection{DP-LSSGD}
DP ERM aims to learn a DP model, $\wb$, for the problem \eqref{Finite-Sum}. A common approach is injecting Gaussian noise into the stochastic gradient, and it resulting in the following DP-SGD
\begin{equation}
\label{DP-SGD-Nov}
\wb^{k+1} = \wb^k - \eta\bigg(\frac{1}{b}\sum_{i_k\in \mathcal{B}_k}\nabla f_{i_k}(\wb^k)+\nb\bigg),
\end{equation}
%We harness the power of LSSGD to improve the privacy-utility trade-off in finite-sum optimization.
where $\nb$ is the injected Gaussian noise for DP guarantee. Note that the LS matrix $\Ab_\sigma^{-1}$ can remove the noise in $\mathbf{v}$. If we assume $\mathbf{v}$ is the initial signal, then $\Ab_\sigma^{-1}\mathbf{v}$ can be regarded as performing an approximate diffusion step on the initial noisy signal which removes the noise from $\mathbf{v}$. We will provide a detailed argument for the diffusion process in the appendix. As numerical illustrations, we consider the following two signals:
\begin{itemize}
\item 1D: $\mathbf{v}_1 = \{\sin(2i\pi/100) + 0.1\mathcal{N}(0, 1)| i = 1, 2, \cdots, 100\}$. 

\item 2D: $\mathbf{v}_2 = \{\sin(2i\pi/100)\sin(2j\pi/100) + 0.2\mathcal{N}(0, \Ib_{2\times 2})| i, j= 1, 2, \cdots, 100\}$.
\end{itemize}
We reshape $\mathbf{v}_2$ into 1D with row-major ordering and then perform LS. Figure~\ref{fig-LS-Illustration} shows that LS can remove noise efficiently. This noise removal property enables LSSGD to be more stable to the noise injected stochastic gradient, therefore improves training DP models with gradient perturbations.

%\vskip -0.3cm
\begin{figure}[!ht]
\centering
\begin{tabular}{cccc}
\includegraphics[clip, trim=0.0cm 3.5cm 0.5cm 4cm,width=0.19\columnwidth]{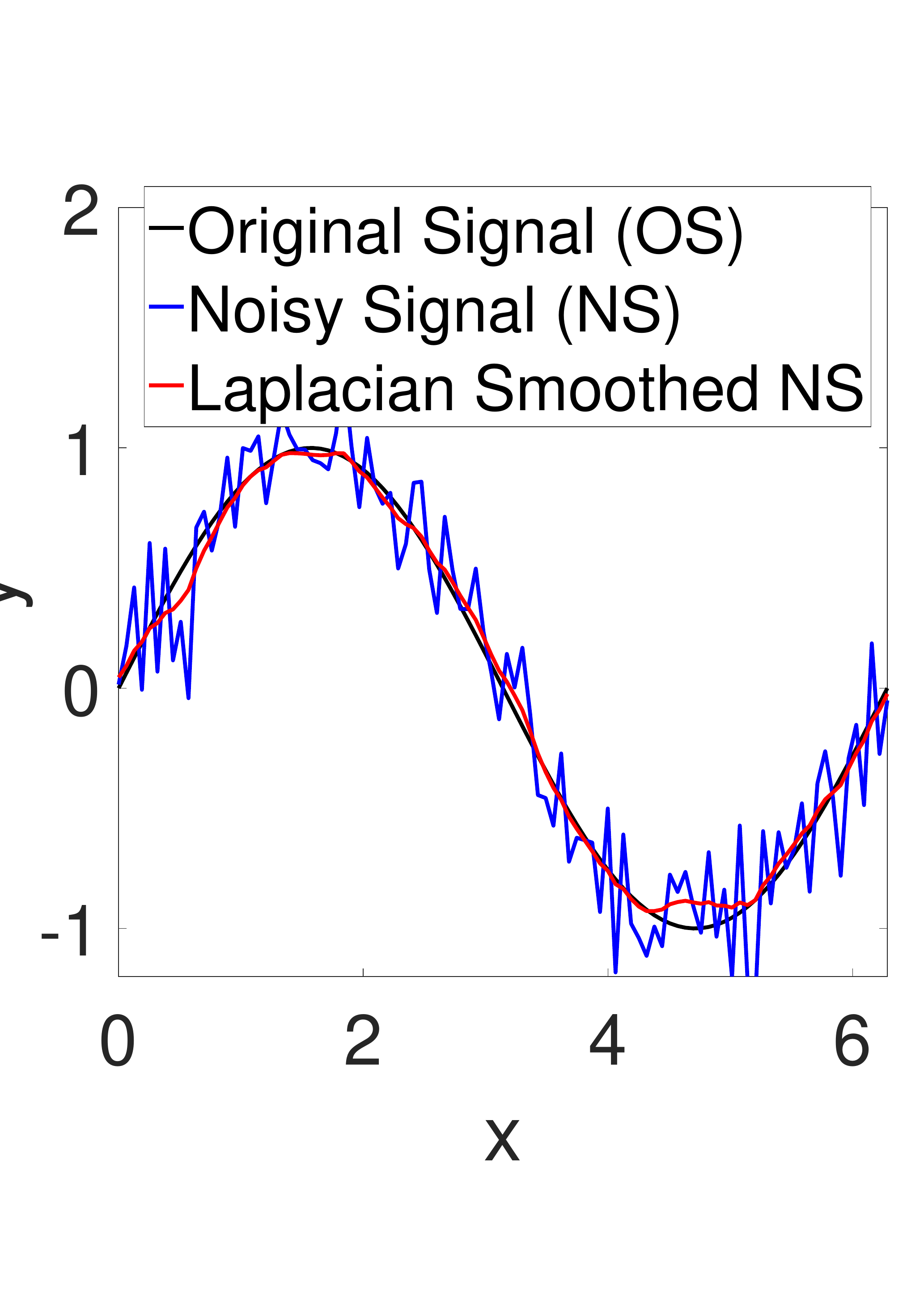}&
\includegraphics[clip, trim=2.5cm 6.5cm 2.5cm 7cm,width=0.21\columnwidth]{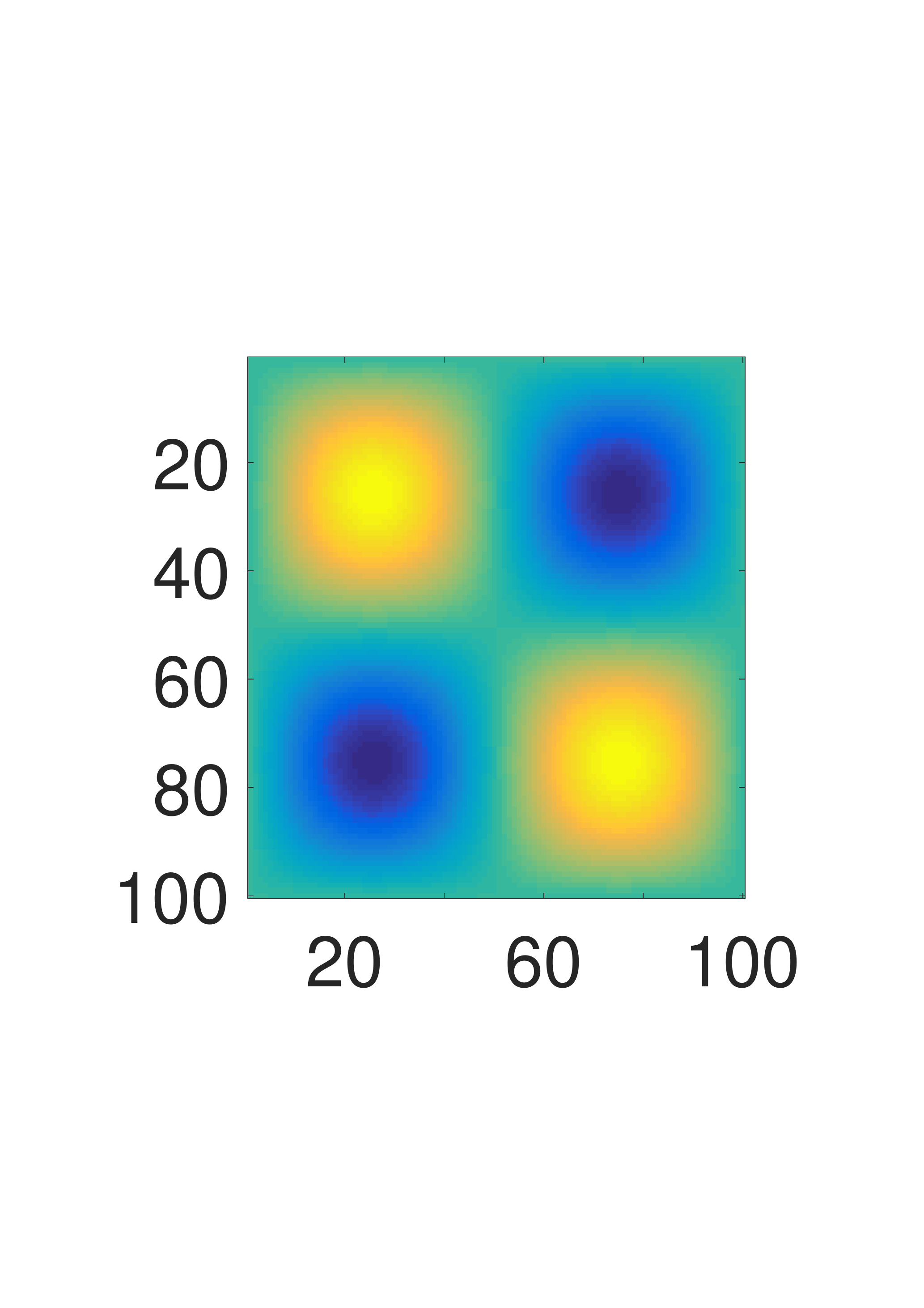}&
\includegraphics[clip, trim=2.5cm 6.5cm 2.5cm 7cm,width=0.21\columnwidth]{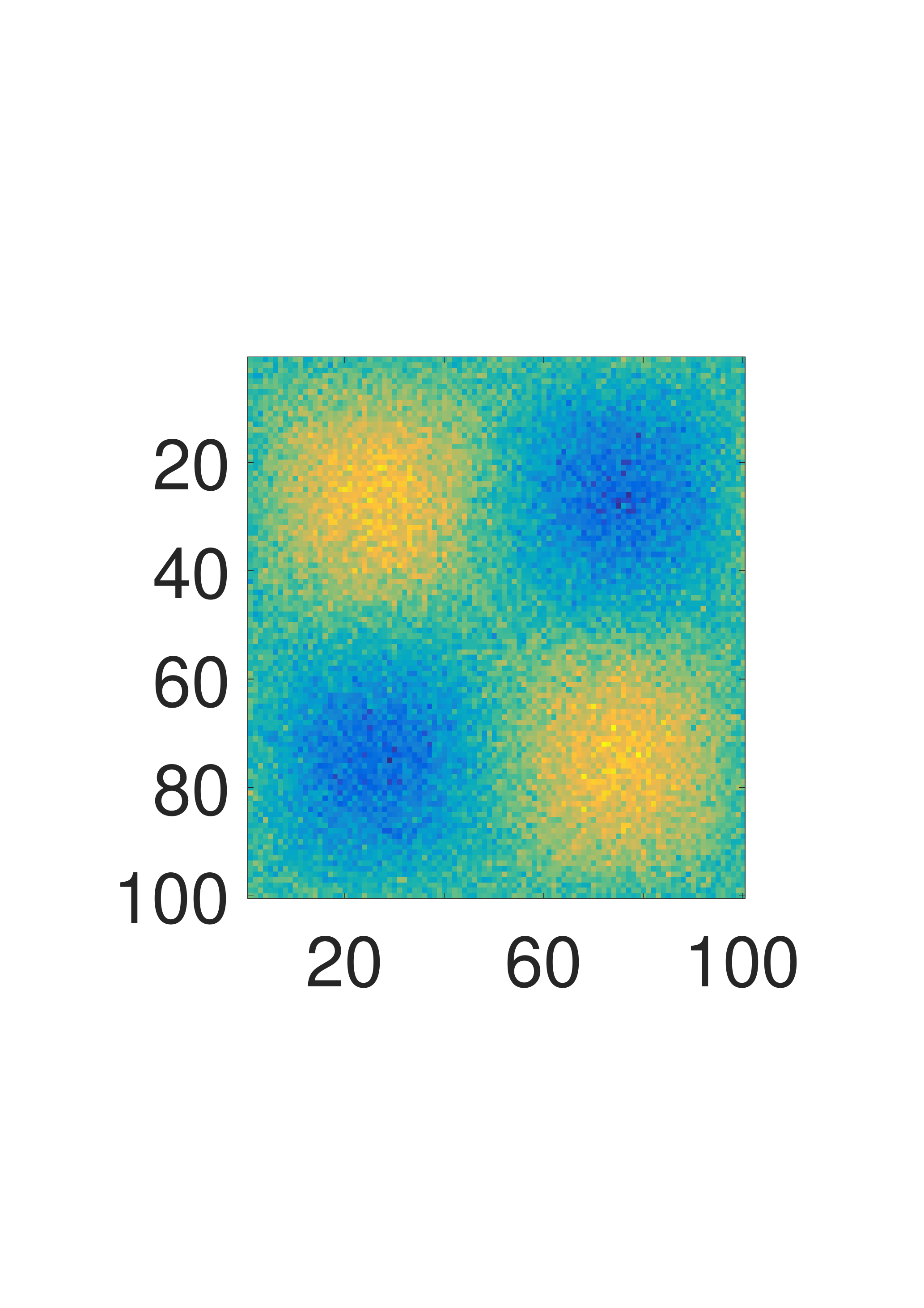}&
\includegraphics[clip, trim=2.5cm 6.5cm 2.5cm 7cm,width=0.21\columnwidth]{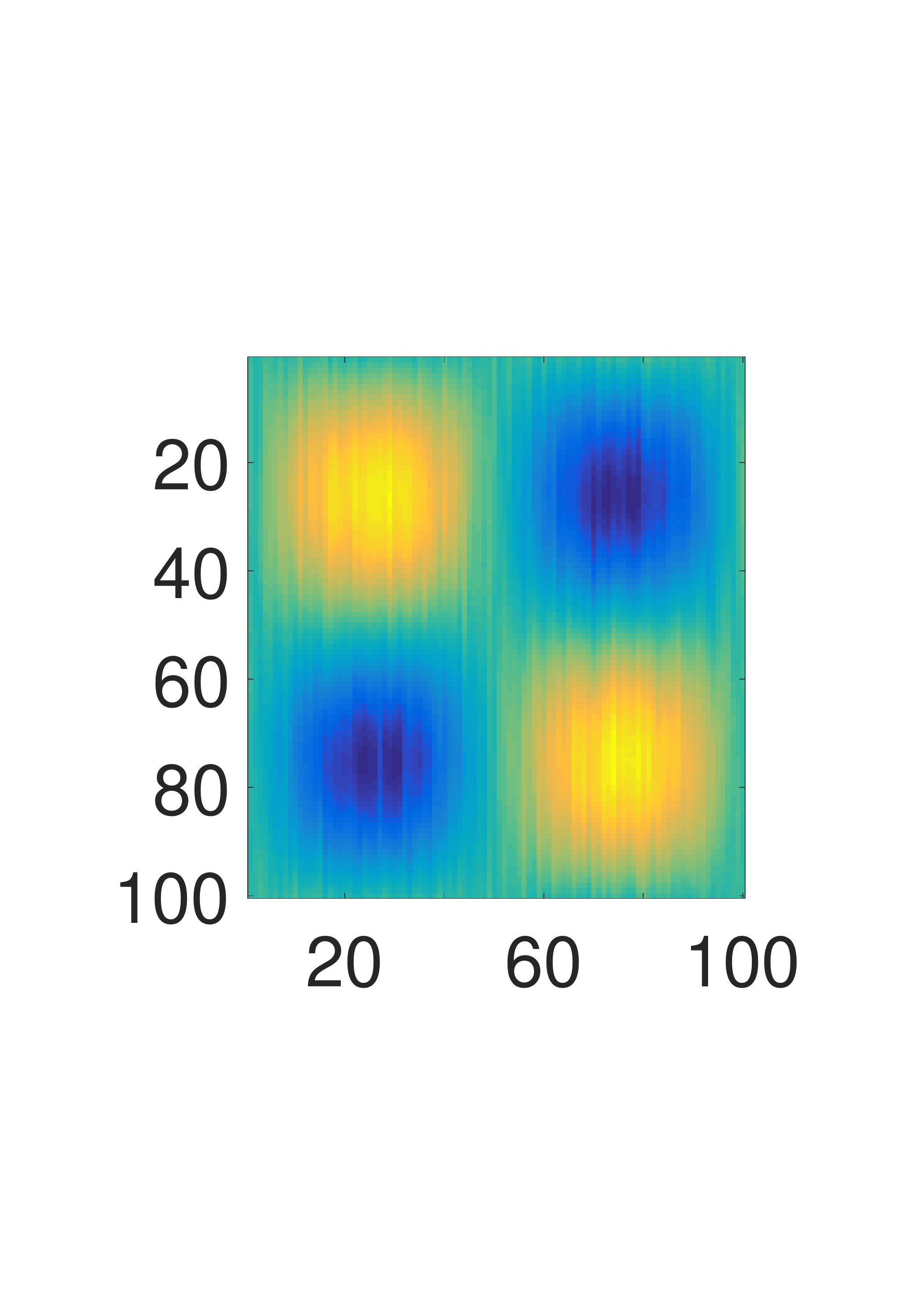}\\
(a)  & (b)  & (c)  & (d)  \\
\end{tabular}
%\vskip -0.35cm
\caption{Illustration of LS ($\sigma=10$ for $\mathbf{v}_1$ and $\sigma=100$ for $\mathbf{v}_2$). % for 1D and 2D signals. 
(a): 
1D signal %($100$ points) 
sampled uniformly from $\sin(x)$ for $x\in[0, 2\pi]$. (b), (c), (d): 2D original, noisy, and Laplacian Smoothed noisy signals 
%($100$ points along each direction) 
sampled uniformly from $\sin(x)\sin(y)$ for $(x, y) \in [0, 2\pi]\times [0, 2\pi]$.}
\label{fig-LS-Illustration}
\end{figure}

We propose the following DP-LSSGD for solving \eqref{Finite-Sum} with DP guarantee
\begin{equation}
\label{SGD-1}
\wb^{k+1} = \wb^k - \eta\Ab_\sigma^{-1}\bigg(\frac{1}{b}\sum_{i_k\in \mathcal{B}_k}\nabla f_{i_k}(\wb^k)+\nb\bigg).
\end{equation}
%where $\nabla f_{i_k}$ denotes the stochastic gradient of $F$ evaluated from the pair of input-output $\{\bx_{i_k}, y_{i_k}\}$ and $\nb$ is the injected Gaussian noise. 
In this scheme, we first inject the noise $\nb$ to the stochastic gradient $\nabla f_{i_k}(\wb^k)$, and then 
apply the LS operator $\Ab_\sigma^{-1}$ to denoise the noisy stochastic gradient, $\nabla f_{i_k}(\wb^k) + \nb$, on-the-fly.
We assume that each component function $f_i$ in \eqref{Finite-Sum} is $G$-Lipschitz. The DP-LSSGD %algorithm 
for finite-sum optimization is summarized in Algorithm~\ref{DP-LSSGD-Pseudocode}. Compared with LSSGD, the main difference of DP-LSSGD lies in injecting Gaussian noise into the stochastic gradient, before applying the Laplacian smoothing, to guarantee the DP.

%\textcolor{red}{Compared with LSSGD, the main difference lies in injecting Gaussian noise to the stochastic gradient, before Laplacian smoothing, to achieve privacy guarantee.}

\begin{algorithm}[t]
\caption{DP-LSSGD}\label{DP-LSSGD-Pseudocode}
\begin{algorithmic}
\State \textbf{Input: } $f_i(\wb)$ is $G$-Lipschitz for $i=1, 2, \cdots, n$. \\
$\wb^0$: initial guess of $\wb$, $(\epsilon, \delta)$: the privacy budget, $\eta$: the step size, $T$: the total number of iterations.
\State \textbf{Output: } $(\epsilon, \delta)$-differentially private classifier $\wb_{\rm priv}$.
\For {$k=0, 1, \cdots, T-1$}
\State $\wb^{k+1} = \wb^k - \eta\Ab_\sigma^{-1}\left(\frac{1}{b}\sum_{i_k\in \mathcal{B}_k}\nabla f_{i_k}(\wb^k)+\nb\right)$, where $\mathbf{n}\sim \mathcal{N}(\mathbf{0}, \nu^2\mathbf{I})$ and $\nu$ is defined in Theorem~\ref{Theorem-privacy-guarantee}, and $\mathcal{B}_k\subset [n]$.
\EndFor
\Return $\wb^T$
\end{algorithmic}
\end{algorithm}

\section{Main Theory}\label{Theory-Section}
In this section, we present the privacy and utility guarantees for DP-LSSGD. The technical proofs are provided in the appendix.
\begin{definition}[$(\epsilon, \delta)$-DP] \label{DP-Def-Dwork} (\cite{Dwork:2006-Basic}) %(\cite{Dwork:2014})
A randomized mechanism $\cM:\cS^N\rightarrow\cR$ satisfies $(\epsilon,\delta)$-DP if for any two adjacent datasets $S,S'\in \cS^N$ differing by one element, and any output subset $O\subseteq \cR$, it holds that
\begin{align*}
\PP[\cM(S)\in O]\leq e^\epsilon\cdot \PP[\cM(S')\in O]+\delta.
\end{align*}
\end{definition}

%The DP guarantee pf DP-LSSGD is given in Theorem~\ref{Theorem-privacy-guarantee}.
% \begin{theorem}[Privacy Guarantee]\label{Theorem-privacy-guarantee}
% 	Suppose that each component function $f_i$ is $G$-Lipschitz. Given the total number of iterations $T$, for any $\delta>0$ and privacy budget $\epsilon^2\leq 5T\log(1/\delta)b^2/n^{2}$, DP-LSSGD, with injected Gaussian noise $\mathcal{N}(0, \nu^2)$ for each coordinate, satisfies $(\epsilon,\delta)$-DP with $\nu^2=8T\alpha G^2/(n^2\epsilon)$, where $\alpha=2\log(1/\delta)/\epsilon+1$.
% \end{theorem}
\begin{theorem}[Privacy Guarantee]\label{Theorem-privacy-guarantee}
	Suppose that each component function $f_i$ is $G$-Lipschitz. Given the total number of iterations $T$, for any $\delta>0$ and privacy budget $\epsilon$, DP-LSSGD, with injected Gaussian noise $\mathcal{N}(0, \nu^2)$ for each coordinate, satisfies $(\epsilon,\delta)$-DP with $\nu^2=20T\alpha G^2/(\mu n^2\epsilon)$, where $\alpha=\log(1/\delta)/\big((1-\mu)\epsilon\big)+1$, if there exits $\mu\in(0,1)$ such that  $\alpha \leq \log\big(\mu n^3\epsilon/(5b^3T\alpha+\mu b n^2\epsilon)\big)$ and $5b^2T\alpha/(\mu n^2\epsilon)\geq 1.5$.
\end{theorem}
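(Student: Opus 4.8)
The plan is to exploit the fact that the Laplacian smoothing operator $\Ab_\sigma^{-1}$ is a fixed, data-independent linear map applied \emph{after} the Gaussian noise $\nb$ has been injected. By the post-processing immunity of differential privacy, the privacy loss of the update \eqref{SGD-1} is identical to that of the plain noisy subsampled gradient $\frac{1}{b}\sum_{i_k\in\mathcal{B}_k}\nabla f_{i_k}(\wb^k)+\nb$; in other words, Laplacian smoothing alters the utility but not the privacy, which is exactly why DP-LSSGD inherits the DP guarantee of DP-SGD under the same amount of noise. Hence it suffices to analyze the $T$-fold composition of the subsampled Gaussian mechanism. First I would bound the per-iteration $\ell_2$-sensitivity: since each $f_i$ is $G$-Lipschitz we have $\|\nabla f_{i_k}\|_2\le G$, so replacing a single data point changes $\frac{1}{b}\sum_{i_k}\nabla f_{i_k}$ by at most $\Delta:=2G/b$ in $\ell_2$ norm.

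Next I would carry out the accounting in the R\'enyi/moments-accountant framework, with $\alpha$ playing the role of the R\'enyi order. The core estimate is a bound on the R\'enyi divergence (equivalently, the log-moment of the privacy-loss variable) of one step of the subsampled Gaussian mechanism with sampling ratio $q=b/n$, sensitivity $\Delta=2G/b$, and noise level $\nu$. Writing $\sigma^2:=\nu^2/\Delta^2=\nu^2 b^2/(4G^2)$ for the effective noise multiplier, the subsampling-amplified log-moment at order $\alpha$ is controlled by a quantity of order $q^2\alpha\Delta^2/\nu^2$, valid in a regime where $q$ and $\alpha$ are small relative to $\sigma$. Since $T$ applications compose additively in R\'enyi divergence, the total order-$\alpha$ divergence is at most $T$ times the single-step bound.

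Finally I would convert the resulting $(\alpha,\tau)$-RDP bound into $(\epsilon,\delta)$-DP via the standard relation $\epsilon=\tau+\log(1/\delta)/(\alpha-1)$, splitting the budget so that the accumulated divergence absorbs $\tau=\mu\epsilon$ and the conversion tail absorbs $(1-\mu)\epsilon$. Setting $(1-\mu)\epsilon=\log(1/\delta)/(\alpha-1)$ yields precisely $\alpha=\log(1/\delta)/\big((1-\mu)\epsilon\big)+1$, while forcing $T\cdot(\text{single-step bound})\le\mu\epsilon$ and solving for the variance reproduces $\nu^2=20T\alpha G^2/(\mu n^2\epsilon)$, the constant $20$ collecting the factor from $\Delta^2=4G^2/b^2$, the sampling ratio $q^2=b^2/n^2$, and the numerical constant in the log-moment estimate. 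Substituting this $\nu^2$ gives $\sigma^2=5b^2T\alpha/(\mu n^2\epsilon)$, so the two hypotheses are just the validity conditions of that estimate rewritten: $5b^2T\alpha/(\mu n^2\epsilon)\ge 1.5$ is the requirement $\sigma^2\ge 1.5$, and $\alpha\le\log\big(\mu n^3\epsilon/(5b^3T\alpha+\mu b n^2\epsilon)\big)$ is the admissible-order constraint, since the right-hand side equals $\log\big(1/(q(\sigma^2+1))\big)$ after substituting $\sigma^2=5b^2T\alpha/(\mu n^2\epsilon)$ and $q=b/n$.

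The main obstacle is the single-step log-moment estimate for the subsampled Gaussian mechanism: because subsampling turns the output distribution into a two-component mixture, bounding the moment generating function of the privacy-loss random variable with explicit constants and an explicit validity range in $q$, $\alpha$, and $\sigma$ is delicate, and is the technical heart of the argument. Everything else (post-processing invariance, the sensitivity bound, additive composition, and the RDP-to-DP conversion) is routine once that estimate is in hand.
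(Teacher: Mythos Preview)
Your proposal is correct and follows essentially the same route as the paper's own proof: reduce DP-LSSGD to DP-SGD via post-processing invariance of $\Ab_\sigma^{-1}$, bound the sensitivity by $2G/b$ from the $G$-Lipschitz assumption, invoke the subsampled Gaussian R\'enyi bound (the paper quotes this as a lemma from \cite{wang2019efficient} rather than proving it), compose over $T$ steps, and convert to $(\epsilon,\delta)$-DP with the $\mu/(1-\mu)$ budget split. Your identification of the two hypotheses as exactly the validity conditions $\sigma'^2\ge 1.5$ and $\alpha\le\log\big(1/(\tau(1+\sigma'^2))\big)$ of that lemma, after substituting $\sigma'^2=5b^2T\alpha/(\mu n^2\epsilon)$ and $\tau=b/n$, matches the paper precisely.
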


\begin{remark}
It is straightforward to show that the noise in Theorem~\ref{Theorem-privacy-guarantee} is in fact also tight to guarantee the $(\epsilon, \delta)$-DP for DP-SGD. . We will omit the dependence of $\mu$ in our results in the rest of the paper since $\mu$ is a constant.
%, since the same amount of Gaussian noise guarantees the same DP for both DP-SGD and DP-LSSGD.
%, in the numerical experiments we simply compare the performance of DP-SGD and DP-LSSGD with the same injected Gaussian noise.
\end{remark}

For convex ERM, DP-LSSGD guarantees the following utility in terms of the gap between the ergodic average of the points along the DP-LSSGD path and the optimal solution $\wb^*$.

\begin{theorem}[Utility Guarantee for convex optimization]\label{Convex-Utility}
Suppose $F$ is convex and each component function $f_i$ is $G$-Lipschitz. Given $\epsilon,\delta>0$, under the same conditions of Theorem \ref{Theorem-privacy-guarantee} on $\nu^2,\alpha$, if we choose $\eta_k = 1/\sqrt{T}$ and
%$\eta_k = \eta=1/\sqrt{T}$ and
$T=C_1(D_\sigma+G^2/b)n^2\epsilon^2/\big(dG^2\log(1/\delta)\big)$, where $D_\sigma=\|\wb^0-\wb^*\|_{\Ab_\sigma}^2$ and $\wb^*$ is the global minimizer of $F$, the DP-LSSGD output $\tilde \wb=\sum_{k=0}^{T-1}\eta_k/\big(\sum_{i=0}^{T-1}\eta_i\big)\wb^k$ satisfies the following utility
\begin{align*}
    \EE\big(F(\tilde \wb)-F(\wb^*)\big)\leq \frac{C_2G\sqrt{6\gamma (D_\sigma+G^2/b)d\log(1/\delta)}}{n\epsilon},
%    \EE\big(F(\tilde \wb)-F(\wb^*)\big)\leq 2G\sqrt{6\gamma (D+G^2)d\log(1/\delta)}/(n\epsilon),
\end{align*}
where $\gamma = 1/d\sum_{i=1}^d 1/[1+2\sigma - 2\sigma \cos(2\pi i/d)]$, $C_1,C_2$ are universal constants.
\end{theorem}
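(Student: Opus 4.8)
The plan is to run a preconditioned stochastic-gradient descent analysis carried out in the geometry induced by $\Ab_\sigma$ rather than the Euclidean one, so that the denoising operator $\Ab_\sigma^{-1}$ in the update \eqref{SGD-1} is absorbed into the metric. I would take as Lyapunov function $\|\wb^k-\wb^*\|_{\Ab_\sigma}^2=(\wb^k-\wb^*)^\t\Ab_\sigma(\wb^k-\wb^*)$ and expand $\|\wb^{k+1}-\wb^*\|_{\Ab_\sigma}^2$ along one step. Writing $\g_k=\frac1b\sum_{i_k\in\mathcal B_k}\nabla f_{i_k}(\wb^k)$ and using that $\Ab_\sigma$ is symmetric so that $\Ab_\sigma\Ab_\sigma^{-1}=\I$ collapses the cross term back to the Euclidean inner product, one obtains the exact identity
\begin{align*}
\|\wb^{k+1}-\wb^*\|_{\Ab_\sigma}^2=\|\wb^k-\wb^*\|_{\Ab_\sigma}^2-2\eta\la\wb^k-\wb^*,\g_k+\nb\ra+\eta^2\|\g_k+\nb\|_{\Ab_\sigma^{-1}}^2 .
\end{align*}
The reappearance of the plain inner product in the middle term is precisely what lets convexity enter in the next step.

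Next I would condition on $\wb^k$ and take expectation over the minibatch and the fresh Gaussian noise. Unbiasedness of the minibatch gradient and $\EE\,\nb=\vzero$ reduce the middle term to $\la\wb^k-\wb^*,\nabla F(\wb^k)\ra$, which convexity of $F$ lower-bounds by $F(\wb^k)-F(\wb^*)$. Independence of $\nb$ from $\g_k$ eliminates the mixed contribution in the quadratic term, giving $\EE\|\g_k+\nb\|_{\Ab_\sigma^{-1}}^2=\EE\|\g_k\|_{\Ab_\sigma^{-1}}^2+\EE\|\nb\|_{\Ab_\sigma^{-1}}^2$. The gradient moment is controlled by the $G$-Lipschitz hypothesis together with $\lambda_{\max}(\Ab_\sigma^{-1})=1$ (as $\lambda_{\min}(\Ab_\sigma)=1$), contributing at the scale set by the stochastic-gradient second moment; the noise term is the decisive one.

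The heart of the argument, and the source of the improvement over DP-SGD, is the evaluation $\EE\|\nb\|_{\Ab_\sigma^{-1}}^2=\nu^2\,\Tr(\Ab_\sigma^{-1})$, valid because $\nb\sim\mathcal N(\vzero,\nu^2\I)$. Since $\Ab_\sigma$ is circulant, its eigenvalues are $1+2\sigma-2\sigma\cos(2\pi i/d)$, so summing reciprocals yields $\Tr(\Ab_\sigma^{-1})=d\gamma$ with $\gamma$ exactly as in the statement; this replaces the factor $d$ that a plain analysis would produce by $d\gamma<d$. I would then telescope the one-step inequality over $k=0,\dots,T-1$, discard the nonnegative terminal Lyapunov value, and apply Jensen's inequality to the ergodic average $\tilde\wb$ (which is the uniform average here since $\eta_k\equiv1/\sqrt T$) to reach a bound of the form
\begin{align*}
\EE\big(F(\tilde\wb)-F(\wb^*)\big)\leq\frac{D_\sigma}{2\eta T}+\frac{\eta}{2}\big(\text{gradient moment}+\nu^2 d\gamma\big).
\end{align*}

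Finally I would substitute $\eta=1/\sqrt T$ and the privacy-calibrated $\nu^2=20T\alpha G^2/(\mu n^2\epsilon)$ from Theorem~\ref{Theorem-privacy-guarantee}, turning the noise term into a quantity growing like $\sqrt T\,\gamma d G^2\log(1/\delta)/(n^2\epsilon^2)$ while the initialization term decays like $D_\sigma/\sqrt T$; choosing $T$ of the prescribed order balances the two and produces the claimed $\Tilde{\mathcal O}\!\big(\sqrt{\gamma(D_\sigma+G^2/b)d\log(1/\delta)}/(n\epsilon)\big)$ rate, with $C_1,C_2$ absorbing numerical factors and the (constant) dependence on $\mu$. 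The single genuinely non-routine computation is the circulant trace identity $\Tr(\Ab_\sigma^{-1})=d\gamma$; everything else is standard SGD telescoping transplanted to the $\Ab_\sigma$-metric. I expect the main obstacle to be the bookkeeping in this last substitution: one must track how $\gamma$, $D_\sigma$, and the $G^2/b$ second-moment scale propagate through the choice of $T$ so that they land precisely under the square root, and check that this $T$ remains compatible with the side conditions on $\alpha$ imposed in Theorem~\ref{Theorem-privacy-guarantee}.
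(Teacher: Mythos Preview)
Your proposal is essentially the paper's proof: the same $\Ab_\sigma$-weighted Lyapunov function, the same collapse of the cross term to the Euclidean inner product, the same trace identity $\EE\|\nb\|_{\Ab_\sigma^{-1}}^2=\nu^2\Tr(\Ab_\sigma^{-1})=\nu^2 d\gamma$ (this is exactly Lemma~\ref{expectedl2Gu1} in the paper), and the same telescoping/Jensen/substitute-$\nu^2$/optimize-$T$ endgame.

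The one place you diverge is the handling of $\EE\|\g_k\|_{\Ab_\sigma^{-1}}^2$. You bound it directly by $G^2$ via $\lambda_{\max}(\Ab_\sigma^{-1})=1$ and $\|\g_k\|_2\le G$. The paper instead splits $\g_k=(\g_k-\nabla F(\wb^k))+\nabla F(\wb^k)$, bounds the variance piece by $G^2/b$, and then absorbs the remaining $\eta_k^2\|\nabla F(\wb^k)\|_{\Ab_\sigma^{-1}}^2$ back into the convexity term using the descent-lemma inequality $(1-L\eta_k/2)\eta_k\|\nabla F(\wb^k)\|_2^2\le F(\wb^k)-F(\wb^*)$ together with $\eta_k\le 1/(2L)$; this is what produces the $2/3$ in front of $\eta_k\EE(F(\wb^k)-F(\wb^*))$ and the $G^2/b$ (rather than $G^2$) in the final bound. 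Your route is cleaner and avoids the smoothness constant $L$ that the paper's proof quietly invokes even though the theorem statement does not list it; the paper's route buys the sharper $b$-dependence that appears explicitly in the statement. Up to that factor the two arguments are identical.
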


%\textcolor{red}{Change $\alpha$ to $\theta$ in both Prop~1, 3 and their proofs.}
\begin{proposition}\label{Gamma-Value}
%\begin{equation}\label{gamma}
%In Theorem~\ref{Convex-Utility}, $\gamma = \frac{1+\alpha^d}{(1-\alpha^d)\sqrt{4\sigma+1}},$
In Theorem~\ref{Convex-Utility}, $\gamma = \frac{1+\omega^d}{(1-\omega^d)\sqrt{4\sigma+1}},$
%\end{equation}
%where $\alpha = \frac{2\sigma+1 - \sqrt{4\sigma+1}}{2\sigma} < 1.$ That is, $\gamma$ converge to $0$ almost exponentially as the dimension, $d$, increases. 
where $\omega = \frac{2\sigma+1 - \sqrt{4\sigma+1}}{2\sigma} < 1.$ That is, $\gamma$ converge to $0$ almost exponentially as the dimension, $d$, increases. 
\end{proposition}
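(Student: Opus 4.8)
The plan is to recognize $\gamma$ as a normalized trace. The numbers $1+2\sigma-2\sigma\cos(2\pi i/d)$ are exactly the eigenvalues of $\Ab_\sigma=\Ib-\sigma\mathbf{L}$ (the circulant Laplacian $\mathbf{L}$ has eigenvalues $2-2\cos(2\pi i/d)$), so $\gamma=\frac1d\sum_{i=1}^d 1/\lambda_i(\Ab_\sigma)=\frac1d\Tr(\Ab_\sigma^{-1})$. Because $\Ab_\sigma$ is circulant, $\Ab_\sigma^{-1}$ is circulant too, hence all its diagonal entries coincide and $\gamma=(\Ab_\sigma^{-1})_{11}=\mathbf{e}_1^\top\Ab_\sigma^{-1}\mathbf{e}_1$. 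Thus it suffices to solve $\Ab_\sigma\mathbf{u}=\mathbf{e}_1$ and read off the coordinate of $\mathbf{u}$ at the source position; I will index coordinates by $\mathbb{Z}/d\mathbb{Z}$ and place the source at $0$, so that $\gamma=u_0$.

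Away from the source, the periodic system gives the homogeneous recurrence $-\sigma u_{k-1}+(1+2\sigma)u_k-\sigma u_{k+1}=0$, whose characteristic equation $\sigma r^2-(1+2\sigma)r+\sigma=0$ has roots $\omega$ and $1/\omega$ (their product is $1$), with $\omega=\big(2\sigma+1-\sqrt{4\sigma+1}\big)/(2\sigma)\in(0,1)$ exactly as stated. Since $\Ab_\sigma$ is reflection-invariant and $\mathbf{e}_1$ is fixed by the reflection $k\mapsto -k$ about the source, uniqueness of the solution forces $u_k=u_{d-k}$; combined with the general solution $u_k=A\omega^k+B\omega^{-k}$ this symmetry yields $B=A\omega^d$, i.e. $u_k=A\big(\omega^k+\omega^{d-k}\big)$. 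The single remaining constant $A$ is then fixed by the one inhomogeneous equation at $k=0$, namely $(1+2\sigma)u_0-2\sigma u_1=1$ (using $u_{-1}=u_{d-1}=u_1$).

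Substituting the ansatz into the source equation gives $A\big[(1+2\sigma)(1+\omega^d)-2\sigma(\omega+\omega^{d-1})\big]=1$. The key simplification is to eliminate $\sigma$ using the defining quadratic in the form $1+2\sigma=\sigma(\omega+\omega^{-1})$: the $\omega^0$ part of the bracket becomes $\sigma(\omega^{-1}-\omega)$ and the $\omega^d$ part becomes $-\sigma\omega^{d-1}(1-\omega^2)$, so the bracket equals $\sigma(1-\omega^2)\omega^{-1}(1-\omega^d)$ and $A=\omega/\big[\sigma(1-\omega^2)(1-\omega^d)\big]$. Hence $\gamma=u_0=A(1+\omega^d)=\frac{\omega(1+\omega^d)}{\sigma(1-\omega^2)(1-\omega^d)}$. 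It then remains to verify the scalar identity $\omega/\big(\sigma(1-\omega^2)\big)=1/\sqrt{4\sigma+1}$; writing $s=\sqrt{4\sigma+1}$ one checks $1+2\sigma=(s^2+1)/2$, $1-\omega=(s-1)/(2\sigma)$, $1+\omega=s(s-1)/(2\sigma)$, whence $\sigma(1-\omega^2)=s(s-1)^2/(4\sigma)=\omega s$. Combining gives $\gamma=\frac{1+\omega^d}{(1-\omega^d)\sqrt{4\sigma+1}}$, the claimed formula.

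For the asymptotic statement, since $0<\omega<1$ the factor $\frac{1+\omega^d}{1-\omega^d}=1+\frac{2\omega^d}{1-\omega^d}$ differs from $1$ by a quantity decaying like $\omega^d$, i.e. exponentially in $d$, which is the sense in which the $d$-dependence of $\gamma$ vanishes almost exponentially (the residual value $1/\sqrt{4\sigma+1}$ is $<1$ and becomes small for large $\sigma$). I expect the main obstacle to be the symmetry-reduction step — justifying rigorously that the circulant point-source solution is symmetric, so that the two-parameter homogeneous solution collapses to a single constant — together with keeping the boundary bookkeeping consistent under the periodic indexing; the subsequent elimination of $\sigma$ via the characteristic quadratic is the other delicate place, though it is purely algebraic. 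As an alternative that sidesteps the symmetry argument, one could instead invoke the classical closed form for $\sum_k 1/(c-\cos(2\pi k/d))$ with $c=(1+2\sigma)/(2\sigma)$ and specialize, arriving at the same expression.
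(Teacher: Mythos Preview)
Your argument is correct, but it follows a genuinely different route from the paper's. The paper proceeds via Fourier analysis: it first computes the inverse discrete-time Fourier transform of $F(\theta)=1/(1+2\sigma(1-\cos\theta))$ by a contour-integral/residue calculation, obtaining $f[k]=\omega^{|k|}/\sqrt{4\sigma+1}$, and then invokes the sampling/aliasing identity $\frac{1}{d}\sum_{j=0}^{d-1}F(2\pi j/d)=\sum_{\ell\in\mathbb{Z}}f[\ell d]$ to reduce $\gamma$ to a two-sided geometric series. Your approach instead identifies $\gamma$ with the (constant) diagonal entry of the circulant inverse $\Ab_\sigma^{-1}$ and computes that entry as the discrete periodic Green's function, solving the second-order recurrence with a point source directly. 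This is more elementary---no complex analysis or Fourier machinery---and the symmetry step you worry about is unproblematic (the reflection permutation commutes with $\Ab_\sigma$ and fixes $\mathbf{e}_1$, so uniqueness forces $u_k=u_{d-k}$). Conversely, the paper's DTFT computation yields all entries of $\Ab_\sigma^{-1}$ simultaneously and transfers with only minor changes to the $\beta$ sum of Proposition~\ref{Beta-Value}, where the integrand has a double pole; your recurrence method would need a separate calculation there. The ``alternative'' you mention at the end---appealing to the classical closed form for $\sum_k 1/(c-\cos(2\pi k/d))$---is essentially the paper's argument. Finally, your reading of the asymptotic claim is the right one: the formula shows $\gamma\to 1/\sqrt{4\sigma+1}$ exponentially fast in $d$, not $\gamma\to 0$; the proposition's phrasing is loose on this point, and your clarification is accurate.
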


\begin{remark}
In the above utility bound for convex optimization, for different $\sigma$ ($\sigma=0$ corresponds to DP-SGD), the only difference lies in the term $\gamma(D_\sigma + G^2)$.
%In the above utility bound for convex optimization, the only difference is in the term $\gamma(D_\sigma + G^2)$ for different $\sigma$. 
The first part $\gamma D_\sigma$ depends on the gap between initialization $\wb^0$ and the optimal solution $\wb^*$. The second part $\gamma G^2$ decrease monotonically as $\sigma$ increases. $\sigma$ should be selected to get an optimal trade-off between these two parts. Based on our test on multi-class logistic regression for MNIST classification, $\sigma \neq 0$ always outperforms the case when $\sigma=0$.
\end{remark}

\begin{comment}
\begin{remark}
%Note that $\gamma$ is strictly between $0$ and $1$. 
Compared with the extra utility bound of DP-SGD $\mathcal{O}\bigg(G\sqrt{ G^2d\log(1/\delta)}/(n\epsilon)\bigg)$, DP-LSSGD has a strictly better extra utility bound
$\mathcal{O}\bigg(G\sqrt{\gamma G^2d\log(1/\delta)}/(n\epsilon)\bigg)$
by a factor of $\sqrt{\gamma}$, except for the term $\mathcal{O}\bigg(G\sqrt{\gamma D_\sigma d\log(1/\delta)}/(n\epsilon)\bigg)$. In practice, for both logistic regression and SVM, $\mathcal{O}\bigg(G\sqrt{\gamma D_\sigma d\log(1/\delta)}/(n\epsilon)\bigg)$ is dominated by $\mathcal{O}\bigg(G\sqrt{\gamma G^2d\log(1/\delta)}/(n\epsilon)\bigg)$, and DP-LSSGD improves the utility of both models.
\end{remark}
\end{comment}

\begin{comment}
\begin{remark}
\textcolor{red}{Note that to achieve the above optimality gap, DPLSSGD takes less iterations than DPSGD (set $\sigma=0$ in Theorem~\ref{Convex-Utility}). Thus DPLSSGD actually has a better DP guarantee than DPSGD in convex optimization. With the same amount of noise, DPLSSGD takes less iterations than DPSGD.}
\end{remark}
\end{comment}

For nonconvex ERM,  DP-LSSGD has the following utility bound measured in gradient norm.

\begin{theorem}[Utility Guarantee for nonconvex optimization]\label{Nonconvex-Utility}
Suppose that $F$ is nonconvex and each component function $f_i$ is $G$-Lipschitz and has $L$-Lipschitz continuous gradient. Given $\epsilon,\delta>0$, under the same conditions of Theorem \ref{Theorem-privacy-guarantee} on $\nu^2,\alpha$, if we choose $\eta=1/\sqrt{T}$ and $T=C_1(D_F+LG^2/b)n^2\epsilon^2/\big(dLG^2\log(1/\delta)\big)$, where $D_F=F(\wb^0)-F(\wb^*)$ with $\wb^*$ being the global minimum of $F$, then the DP-LSSGD output $\tilde \wb=\sum_{k=0}^{T-1}\wb^k/T$ satisfies the following utility
\begin{align*}
    \EE\|\nabla F(\tilde \wb)\|_{\Ab_{\sigma}^{-1}}^2\leq C_2\frac{ G\sqrt{\beta dL(2D_F+LG^2/b)\log(1/\delta)}}{n\epsilon},
\end{align*}
where $\beta = 1/d\sum_{i=1}^d 1/[1+2\sigma - 2\sigma \cos(2\pi i/d)]^2$, $C_1,C_2$ are universal constants.
%, as $d \rightarrow \infty$, $\beta \rightarrow %(1+2\sigma)/(1+4\sigma)^{3/2}$.
\end{theorem}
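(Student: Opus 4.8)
The plan is to run the standard descent analysis for smooth nonconvex SGD, but carried out in the geometry induced by $\Ab_\sigma^{-1}$ so that the Laplacian-smoothing matrix enters only through the trace $\Tr(\Ab_\sigma^{-2})$, which turns out to equal $d\beta$. Write $\g^k := \frac1b\sum_{i_k\in\mathcal{B}_k}\nabla f_{i_k}(\wb^k)$, so the update reads $\wb^{k+1}=\wb^k-\eta\Ab_\sigma^{-1}(\g^k+\nb)$. Starting from the $L$-smoothness inequality $F(\wb^{k+1})\le F(\wb^k)+\langle\nabla F(\wb^k),\wb^{k+1}-\wb^k\rangle+\frac{L}{2}\|\wb^{k+1}-\wb^k\|_2^2$ and substituting the update, I would take the conditional expectation $\EE_k[\cdot]$ given $\wb^k$. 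Using that the minibatch gradient is unbiased ($\EE_k\g^k=\nabla F(\wb^k)$) and the Gaussian noise is zero-mean and independent, the linear term collapses to $-\eta\,\nabla F(\wb^k)^\top\Ab_\sigma^{-1}\nabla F(\wb^k)=-\eta\|\nabla F(\wb^k)\|_{\Ab_\sigma^{-1}}^2$, which is exactly the quantity I want to control.

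The second step is to bound the quadratic term $\frac{L\eta^2}{2}\EE_k\|\Ab_\sigma^{-1}(\g^k+\nb)\|_2^2$. Since $\nb$ is independent and mean zero, this splits into a stochastic-gradient part and a noise part. For the noise part, $\nb\sim\mathcal{N}(\vzero,\nu^2\Ib)$ gives $\EE\|\Ab_\sigma^{-1}\nb\|_2^2=\nu^2\Tr(\Ab_\sigma^{-2})$; because $\Ab_\sigma$ is circulant its eigenvalues are $1+2\sigma-2\sigma\cos(2\pi i/d)$, so $\Tr(\Ab_\sigma^{-2})=\sum_{i=1}^d[1+2\sigma-2\sigma\cos(2\pi i/d)]^{-2}=d\beta$, producing the $\beta d\nu^2$ factor. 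For the stochastic-gradient part I would decompose $\g^k=\nabla F(\wb^k)+(\g^k-\nabla F(\wb^k))$; the ``signal'' contributes $\|\nabla F(\wb^k)\|_{\Ab_\sigma^{-2}}^2$ and the mean-zero minibatch fluctuation contributes (its cross term with the signal vanishing in conditional expectation) at most $\|\Ab_\sigma^{-1}\|_{\mathrm{op}}^2\cdot G^2/b=G^2/b$, using that each $\nabla f_i$ has norm $\le G$ so the minibatch variance is at most $G^2/b$. The key structural observation is that all eigenvalues of $\Ab_\sigma$ are $\ge1$, hence $\Ab_\sigma^{-2}\preceq\Ab_\sigma^{-1}\preceq\Ib$; this lets me bound $\|\nabla F(\wb^k)\|_{\Ab_\sigma^{-2}}^2\le\|\nabla F(\wb^k)\|_{\Ab_\sigma^{-1}}^2$ and absorb it into the descent term.

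Collecting terms, the per-step inequality becomes $\EE_k F(\wb^{k+1})\le F(\wb^k)-\eta(1-\tfrac{L\eta}{2})\|\nabla F(\wb^k)\|_{\Ab_\sigma^{-1}}^2+\frac{L\eta^2}{2}(G^2/b+\nu^2 d\beta)$. For $\eta=1/\sqrt{T}$ and $T$ large the factor $1-\tfrac{L\eta}{2}\ge\tfrac12$, so after taking full expectations, telescoping over $k=0,\dots,T-1$ and using $F(\wb^T)\ge F(\wb^*)$, I obtain $\frac1T\sum_{k}\EE\|\nabla F(\wb^k)\|_{\Ab_\sigma^{-1}}^2\le\frac{1}{\sqrt T}\big(2D_F+LG^2/b+L\nu^2 d\beta\big)$. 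Substituting $\nu^2=20T\alpha G^2/(\mu n^2\epsilon)$ with $\alpha=\Theta(\log(1/\delta)/\epsilon)$ turns the right-hand side into the sum of a term $\propto(2D_F+LG^2/b)/\sqrt T$ and a term $\propto\sqrt T\,LG^2 d\beta\log(1/\delta)/(n^2\epsilon^2)$; these are respectively decreasing and increasing in $T$, and the stated choice $T=C_1(D_F+LG^2/b)n^2\epsilon^2/(dLG^2\log(1/\delta))$ balances them, yielding the claimed $C_2 G\sqrt{\beta dL(2D_F+LG^2/b)\log(1/\delta)}/(n\epsilon)$ after absorbing $\mu$, $\alpha$, and the balancing constants into $C_1,C_2$. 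One point to flag is that the bound is naturally on $\frac1T\sum_k\EE\|\nabla F(\wb^k)\|_{\Ab_\sigma^{-1}}^2$; to state it for the single output $\tilde\wb$ one reads $\tilde\wb$ as an iterate drawn uniformly from $\{\wb^0,\dots,\wb^{T-1}\}$ (equivalently the bound controls $\min_k$), since for nonconvex $F$ one cannot push the average inside $\|\nabla F(\cdot)\|^2$ by Jensen.

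The main obstacle I anticipate is the careful bookkeeping of the two matrix norms: the signal term comes out in the $\Ab_\sigma^{-2}$ geometry while the descent term lives in the $\Ab_\sigma^{-1}$ geometry, and the whole argument hinges on the spectral ordering $\Ab_\sigma^{-2}\preceq\Ab_\sigma^{-1}$ to merge them; getting the minibatch-variance constant ($G^2/b$) and the trace identity $\Tr(\Ab_\sigma^{-2})=d\beta$ exactly right is what determines that the final constants assemble into the advertised $\beta$-dependence. The remaining optimization over $T$ is routine.
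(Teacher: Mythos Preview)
Your proposal is essentially identical to the paper's own proof: the paper also starts from the $L$-smoothness inequality, takes conditional expectation to collapse the linear term to $-\eta\|\nabla F(\wb^k)\|_{\Ab_\sigma^{-1}}^2$, splits the quadratic term into signal, minibatch fluctuation (bounded by $G^2/b$), and Gaussian noise (whose second moment is $\Tr(\Ab_\sigma^{-2})\nu^2=d\beta\nu^2$ via Lemma~\ref{expectedl2Gu2}), uses the spectral ordering $\Ab_\sigma^{-2}\preceq\Ab_\sigma^{-1}$ to absorb the signal contribution into the descent term, telescopes, and then balances $T$ against the stated noise variance $\nu^2$. Your flag at the end is well taken: the paper simply writes $\EE\|\nabla F(\tilde\wb)\|_{\Ab_\sigma^{-1}}^2$ in place of $\frac1T\sum_k\EE\|\nabla F(\wb^k)\|_{\Ab_\sigma^{-1}}^2$ without comment, so your reading of $\tilde\wb$ as a uniformly sampled iterate is the correct way to make the statement rigorous.
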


\begin{proposition}\label{Beta-Value}
In Theorem~\ref{Nonconvex-Utility}, 
%$
%\beta = \frac{2\alpha^{2d+1}-\xi\alpha^{2d}+2\xi d\alpha^d-2\alpha+\xi}{\sigma^2\xi^3(1-\alpha^d)^2},
%$
$
\beta = \frac{2\omega^{2d+1}-\xi\omega^{2d}+2\xi d\omega^d-2\omega+\xi}{\sigma^2\xi^3(1-\omega^d)^2},
$
where
$
\omega = \frac{2\sigma+1 - \sqrt{4\sigma+1}}{2\sigma}
$
and
$
\xi = -\frac{\sqrt{1+4\sigma}}{\sigma}.
$ 
Therefore, $\beta\in (0, 1)$.
%Note $\beta$ is strictly less than $1$.
\end{proposition}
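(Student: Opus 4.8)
The plan is to evaluate the trigonometric sum by recognizing it as a normalized trace. Since $\Ab_\sigma = \Ib - \sigma\mathbf{L}$ is a symmetric circulant matrix, its eigenvalues are exactly $\lambda_i = 1 + 2\sigma - 2\sigma\cos(2\pi i/d)$ for $i \in [d]$, so the quantity to evaluate is
\[
\beta = \frac{1}{d}\sum_{i=1}^d \frac{1}{\lambda_i^2} = \frac{1}{d}\Tr(\Ab_\sigma^{-2}).
\]
Because $\Ab_\sigma^{-1}$ is again a real symmetric circulant matrix, it is determined by its first row $(c_0, c_1, \dots, c_{d-1})$ with $c_m = c_{d-m}$, and a direct computation (equivalently, Parseval's identity for the DFT) gives $\frac{1}{d}\Tr(\Ab_\sigma^{-2}) = \sum_{m=0}^{d-1} c_m^2$. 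Thus the problem reduces to finding the entries $c_m$ of $\Ab_\sigma^{-1}$ in closed form and summing their squares.

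First I would determine the $c_m$ by solving $\Ab_\sigma\vc = \ve_0$. Off the diagonal this is the homogeneous recurrence $\sigma c_{m+1} - (1+2\sigma)c_m + \sigma c_{m-1} = 0$, whose characteristic equation $\sigma r^2 - (1+2\sigma)r + \sigma = 0$ has reciprocal roots $\omega$ and $1/\omega$ with $\omega = (2\sigma+1-\sqrt{4\sigma+1})/(2\sigma)$. Periodicity together with the symmetry $c_m = c_{d-m}$ forces the ansatz $c_m = A(\omega^m + \omega^{d-m})$, and the single inhomogeneous equation at $m = 0$, namely $(1+2\sigma)c_0 - 2\sigma c_1 = 1$, pins down $A$. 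Using the characteristic relation to simplify, this yields the compact value $A = 1/\big((1-\omega^d)\sqrt{4\sigma+1}\big)$; in particular $c_0 = A(1+\omega^d)$ reproduces exactly the expression for $\gamma$ in Proposition~\ref{Gamma-Value}, which serves as a consistency check.

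Next I would substitute $c_m = A(\omega^m + \omega^{d-m})$ into $\beta = \sum_{m=0}^{d-1} c_m^2$ and expand: the cross terms contribute $2d\omega^d$, while the square terms collapse through the geometric series $\sum_{m=0}^{d-1}\omega^{2m} = (1-\omega^{2d})/(1-\omega^2)$, giving
\[
\beta = A^2\Big[\tfrac{(1+\omega^2)(1-\omega^{2d})}{1-\omega^2} + 2d\omega^d\Big].
\]
The remaining step is to match this to the stated form, and here lies the main obstacle: the algebraic bookkeeping. It rests on two identities, $\sigma(1-\omega^2) = \omega\sqrt{4\sigma+1}$ and, crucially, $\xi = \omega - 1/\omega = -\sqrt{1+4\sigma}/\sigma$, which clear the $\sqrt{4\sigma+1}$ factors and express everything in terms of $\omega$, $\xi$, and $d$. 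Factoring $1-\omega^{2d} = (1-\omega^d)(1+\omega^d)$ and collecting powers of $\omega$ then produces the numerator $2\omega^{2d+1} - \xi\omega^{2d} + 2\xi d\omega^d - 2\omega + \xi$ over $\sigma^2\xi^3(1-\omega^d)^2$. As an independent check, one may instead use $\beta = \gamma + \sigma\,\partial_\sigma\gamma$ (obtained by differentiating $1/\lambda_i$ in $\sigma$ and summing) and differentiate the closed form of $\gamma$; either route meets the same simplification.

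Finally, the bound $\beta \in (0,1)$ requires no closed form at all: each $\lambda_i = 1 + 2\sigma\big(1-\cos(2\pi i/d)\big) \ge 1$, with strict inequality for every $i \ne d$ once $\sigma > 0$. Hence $1/\lambda_i^2 \in (0,1]$, the term $i = d$ equals $1$, and all others are strictly below $1$, so $0 < \sum_{i=1}^d 1/\lambda_i^2 < d$ and therefore $\beta \in (0,1)$.
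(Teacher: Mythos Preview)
Your argument is correct, and it takes a genuinely different route from the paper. The paper computes $\beta$ by Fourier analysis: it interprets the sum as the aliased sampling of $F(\theta)^2=1/\big(1+2\sigma(1-\cos\theta)\big)^2$, evaluates the inverse DTFT coefficients $f[k]=\frac{1}{2\pi}\int_0^{2\pi}F(\theta)^2e^{ik\theta}\,d\theta$ via a contour integral with a second–order pole at $z=\omega$ (Residue Theorem), obtaining $f[k]=\frac{(k+1)\omega^k}{4\sigma+1}+\frac{2\sigma\,\omega^{k+1}}{(4\sigma+1)^{3/2}}$ for $k\ge 0$, and then sums $\sum_{\ell\in\mathbb Z} f[\ell d]$. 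You instead stay entirely in linear algebra: you identify $\beta=\frac{1}{d}\Tr(\Ab_\sigma^{-2})=\sum_{m=0}^{d-1}c_m^2$, solve the finite circulant system $\Ab_\sigma\vc=\ve_0$ by the two–term recurrence with characteristic roots $\omega,1/\omega$, and sum geometric series. The two calculations are secretly the same object---your $c_m=A(\omega^m+\omega^{d-m})$ is exactly the aliased series $\sum_{\ell\in\mathbb Z}\omega^{|m+\ell d|}/\sqrt{4\sigma+1}$ that the paper's method produces for $F$---but your path avoids complex analysis altogether and is more self–contained. Your side remark $\beta=\gamma+\sigma\,\partial_\sigma\gamma$ is a nice shortcut the paper does not use, and your eigenvalue argument for $\beta\in(0,1)$ is cleaner than anything the paper writes down (it simply asserts the bound).
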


%The number $\beta$ is also strictly between $0$ and and $1$. 
It is worth noting that if we use the $\ell_2$-norm instead of the induced norm, we have the following utility guarantee
\begin{comment}
\begin{align*}
     \EE\|\nabla F(\tilde \wb)\|_2^2 &\leq \frac{\EE\|\nabla F(\tilde \wb)\|_{\Ab_{\sigma}^{-1}}^2}{\lambda_{\min} (\Ab_{\sigma}^{-1})}\\
     &\leq (1+4\sigma)\EE\|\nabla F(\tilde \wb)\|_{\Ab_{\sigma}^{-1}}^2\\
     &\leq 4\sqrt{\frac{1}{d}\sum_{i=1}^d \frac{(1+4\sigma)^2}{(1+2\sigma - 2\sigma \cos(2\pi i/d))^2 } }\frac{ G\sqrt{6dL(2D_F+LG^2)\log(1/\delta)}}{n\epsilon}\\
     & = 4 \zeta \frac{ G\sqrt{6dL(2D_F+LG^2)\log(1/\delta)}}{n\epsilon}
\end{align*}
\end{comment}
%\begin{align*}
$$
{\footnotesize
\EE\|\nabla F(\tilde \wb)\|_2^2 \leq \frac{\EE\|\nabla F(\tilde \wb)\|_{\Ab_{\sigma}^{-1}}^2}{\lambda_{\min} (\Ab_{\sigma}^{-1})}
\leq (1+4\sigma)\EE\|\nabla F(\tilde \wb)\|_{\Ab_{\sigma}^{-1}}^2
\leq 4 \zeta \frac{ G\sqrt{6dL(2D_F+LG^2)\log(1/\delta)}}{n\epsilon}
}
$$
%\end{align*}
where $\zeta = \sqrt{\frac{1}{d}\sum_{i=1}^d \frac{(1+4\sigma)^2}{(1+2\sigma - 2\sigma \cos(2\pi i/d))^2 } } > 1$. In the $\ell_2$-norm, DP-LSSGD has a bigger utility upper bound than DP-SGD (set $\sigma=0$ in $\zeta$). However, this does not mean that DP-LSSGD has worse performance. To see this point, let us consider the following simple nonconvex function
\begin{equation}
\label{Nonconvex-Example}
f(x, y) = \begin{cases}
        \frac{x^2}{4} + y^2, & \text{for } \frac{x^2}{4} + y^2\leq 1\\
        \sin\left(\frac{\pi}{2} \left(\frac{x^2}{4} + y^2\right)\right), & \text{for } \frac{x^2}{4} + y^2 > 1.
        \end{cases}
\end{equation}
For two points $\mathbf{a}_1 = (2, 0)$ and $\mathbf{a}_2 = (1, \sqrt{3}/2)$, the distance to the local minima $\mathbf{a}^* = (0, 0)$ are $2$ and $\sqrt{7}/2$, while $\|\nabla f(\mathbf{a}_1)\|_2 = 1$ and $\|\nabla f(\mathbf{a}_2)\|_2 = \sqrt{13}/2$.
%Therefore, 
So $\mathbf{a}_2$ is closer to the local minima $\mathbf{a}^*$ than $\mathbf{a}_1$ while its gradient has a larger $\ell_2$-norm. 
%This example is valid for the case by changing the constant $1$ in %Eq.~(\ref{Nonconvex-Example}) to any any constant. 
%This example shows $\|\nabla F\|_2$ is not the optimal measure in comparing the utility bound for nonconvex optimization. 

%The fact that DP-LSSGD leads to a smaller distance to the local minima but a larger gradient norm provides evidence that DP-LSSGD might move along the direction less shallow than DP-SGD. 
%We will further verify this in Section~\ref{Section-Experiments}.

\section{Experiments}\label{Section-Experiments}
In this section, we verify the efficiency of DP-LSSGD in training multi-class logistic regression and CNNs for MNIST and CIFAR10 classification. 
%We perform ResNet20 experiments on the CIFAR10 dataset with standard data augmentation \cite{ResNet}, logistic regression and SVM experiments on the benchmark MNIST classification. 
%Based on the range of gradient values of each model, 
We use 
%the formula 
$\mathbf{v}\leftarrow \mathbf{v}/\max{\left(1, \|\mathbf{v}\|_2/C \right)}$ \citep{Abadi:2016} to clip the gradient $\ell_2$-norms of the CNNs to $C$. The gradient clipping guarantee the Lipschitz condition for the objective functions. We train all the models below with $(\epsilon, 10^{-5})$-DP guarantee for different $\epsilon$. For Logistic regression we use the privacy budget given by Theorem~\ref{Theorem-privacy-guarantee}, and for CNNs we use the privacy budget in the Tensorflow privacy \citep{Tensorflow:Privacy}. We checked that these two privacy budgets are consistent.
%We check the privacy budget in Tensorflow and Theorem~\ref{Theorem-privacy-guarantee} are consistent.

%For all experiments, we train both logistic regression and SVM with $(\epsilon, 2\times 10^{-5})$-DP guarantee, and ResNet20 with $(\epsilon, 10^{-5})$-DP guarantee. 

%We regard the DP-SGD as the benchmark.

\subsection{Logistic Regression for MNIST Classification}
We ran $50$ epochs of DP-LSSGD with learning rate scheduled as $1/t$ with $t$ being the index of the iteration to train the $\ell_2$-regularized (regularization constant $10^{-4}$) multi-class logistic regression.
%(the objective function of both models are strongly convex), using an $\ell_2$ penalty with regularization coefficient $\lambda = 1e-4$. 
We split the training data into $50$K/$10$K with batch size $128$ for cross-validation. 
%The models with best validation accuracy are used for testing. 
%The batch size is set to $128$. 
We plot the evolution of training and validation loss over iterations for privacy budgets $(0.2, 10^{-5})$ and $(0.1, 10^{-5})$ in Figure~\ref{fig:Training-Testing-Loss-Softmax}. We see that the training loss curve of DP-SGD ($\sigma=0$) is much higher and more oscillatory (log-scale on the $y$-axis) than that of DP-LSSGD ($\sigma=1, 3$).
Also, the validation loss of the model trained by DP-LSSGD decays faster and has a much smaller loss value than that of the model trained by DP-SGD. Moreover, when the privacy guarantee gets stronger, the utility improvement by DP-LSSGD becomes more significant.
%the improvements by DP-LSSGD becomes more significant as the privacy guarantee gets stronger.

\begin{figure}[!ht]
\centering
\begin{tabular}{cc}
\includegraphics[clip, trim=0.5cm 4.5cm 1.8cm 5cm,width=0.35\columnwidth]{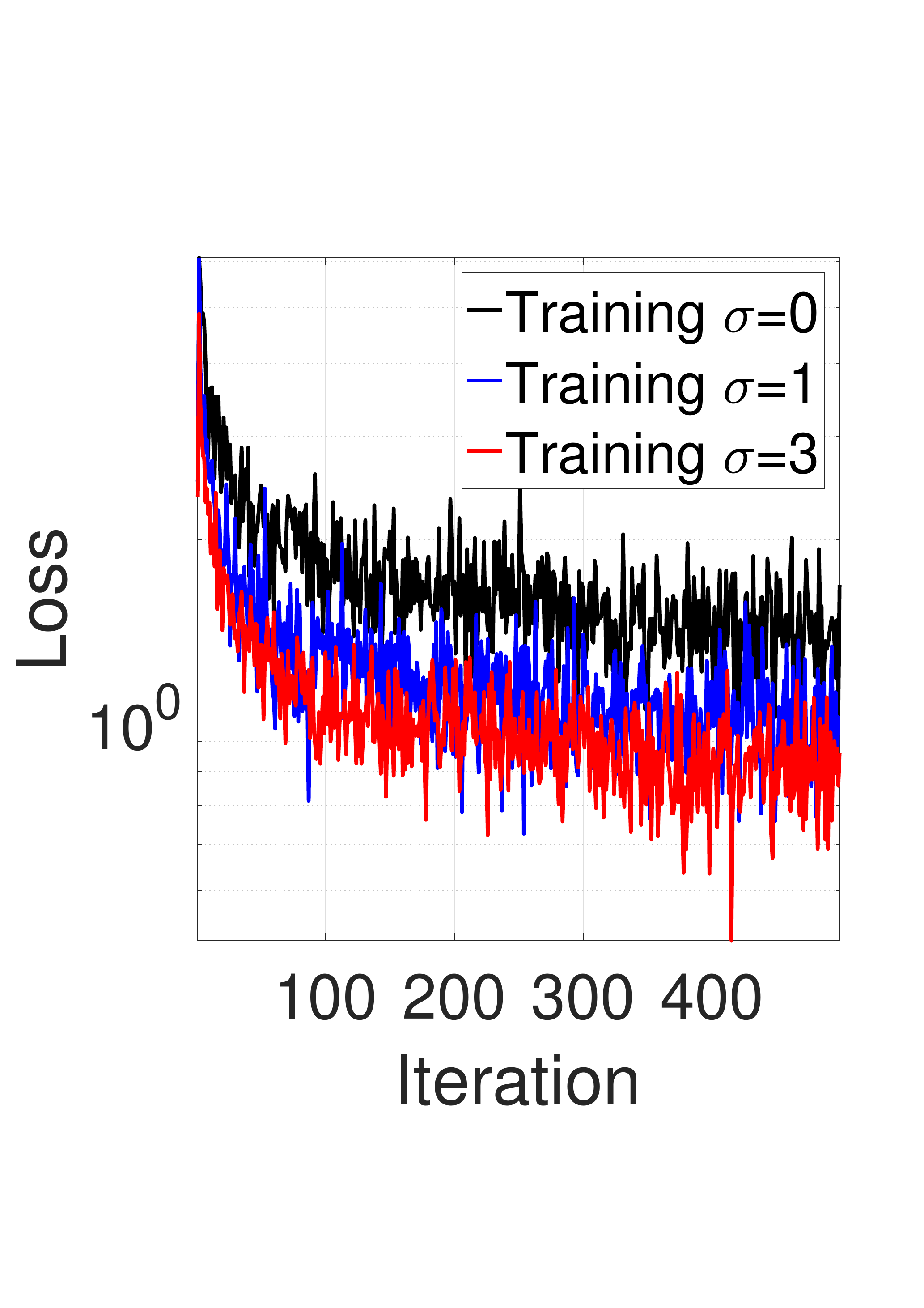}&
\includegraphics[clip, trim=0.5cm 4.5cm 1.8cm 5cm,width=0.35\columnwidth]{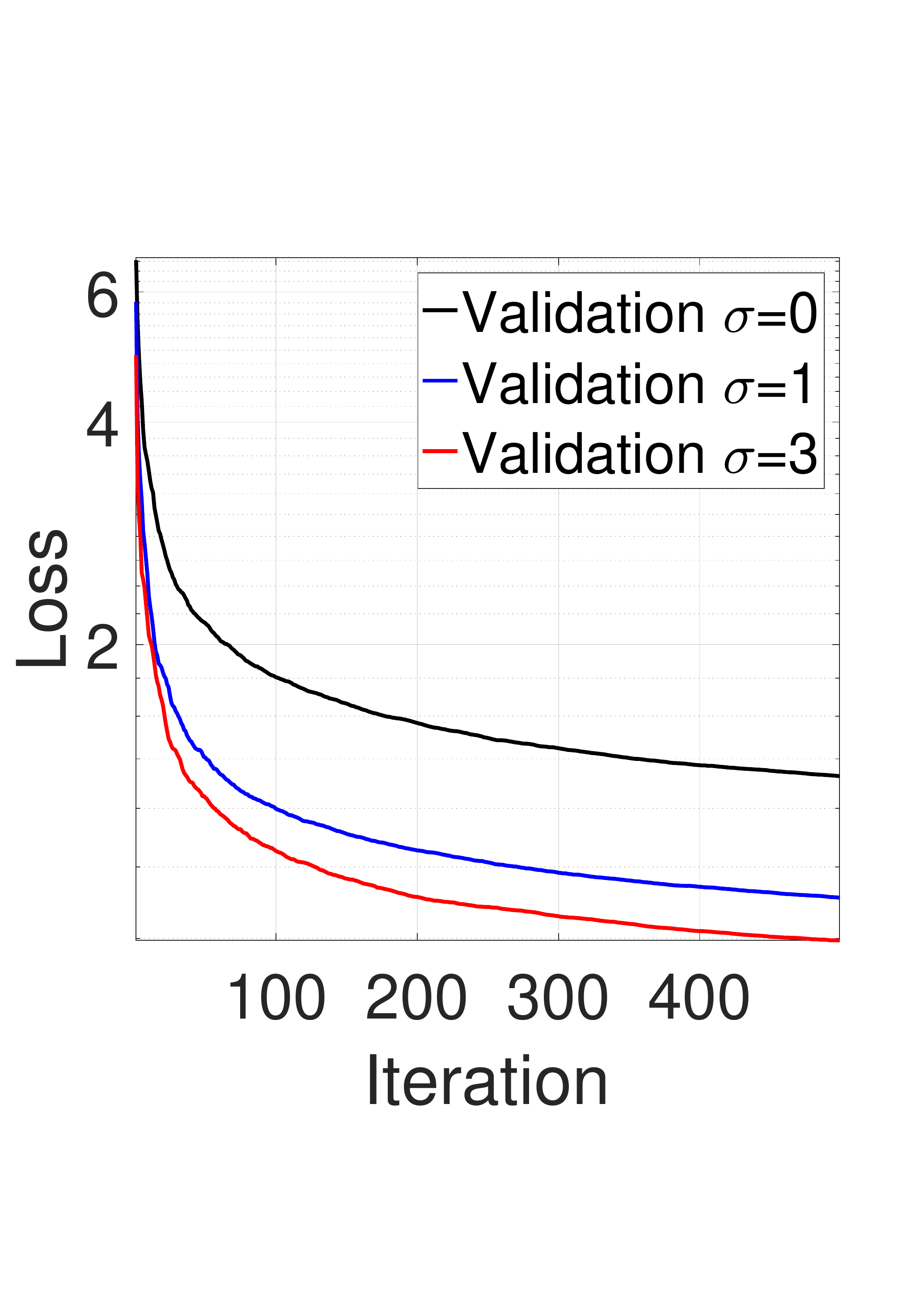}\\
\includegraphics[clip, trim=0.5cm 4.5cm 1.8cm 5cm,width=0.35\columnwidth]{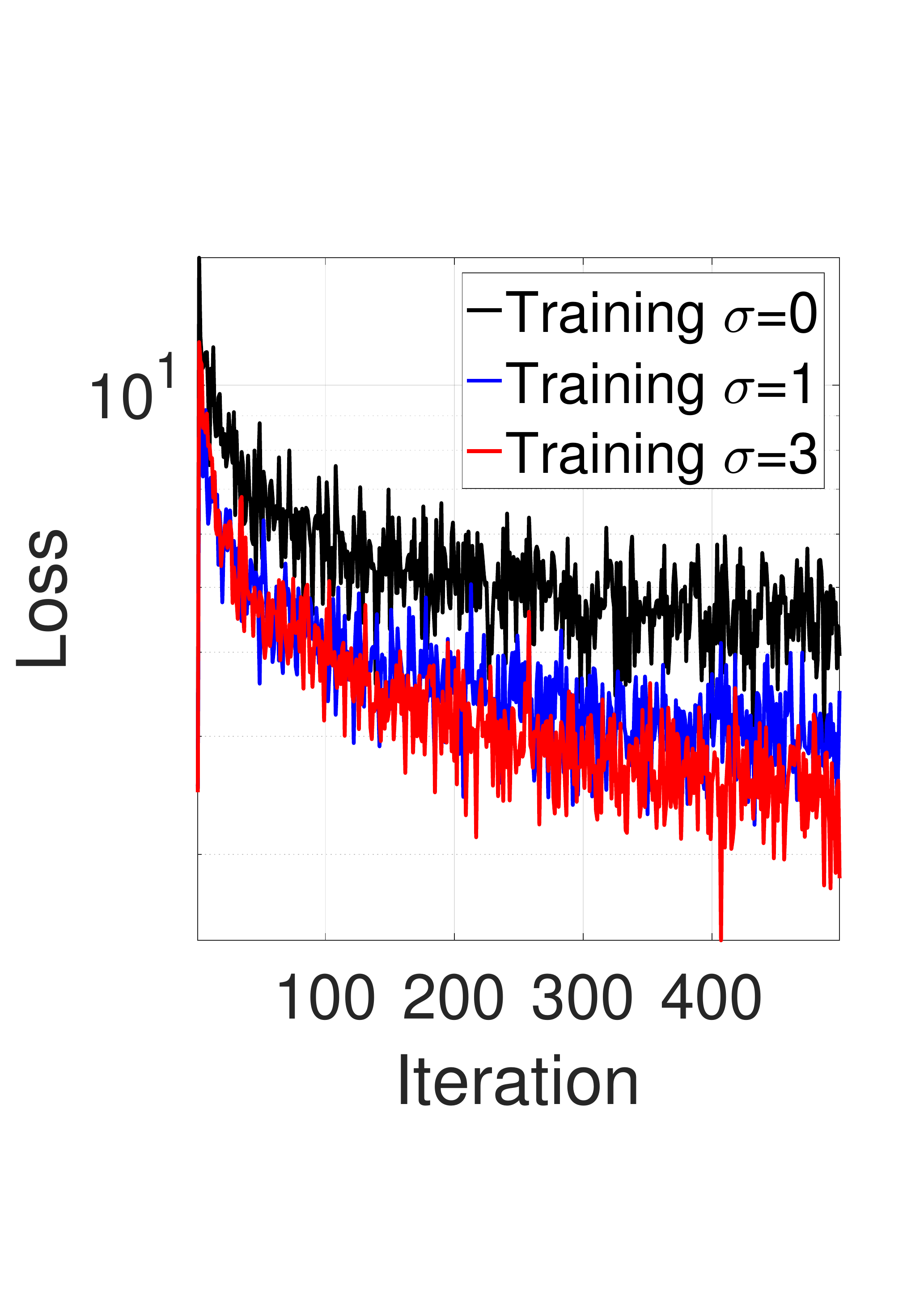}&
\includegraphics[clip, trim=0.5cm 4.5cm 1.8cm 5cm,width=0.35\columnwidth]{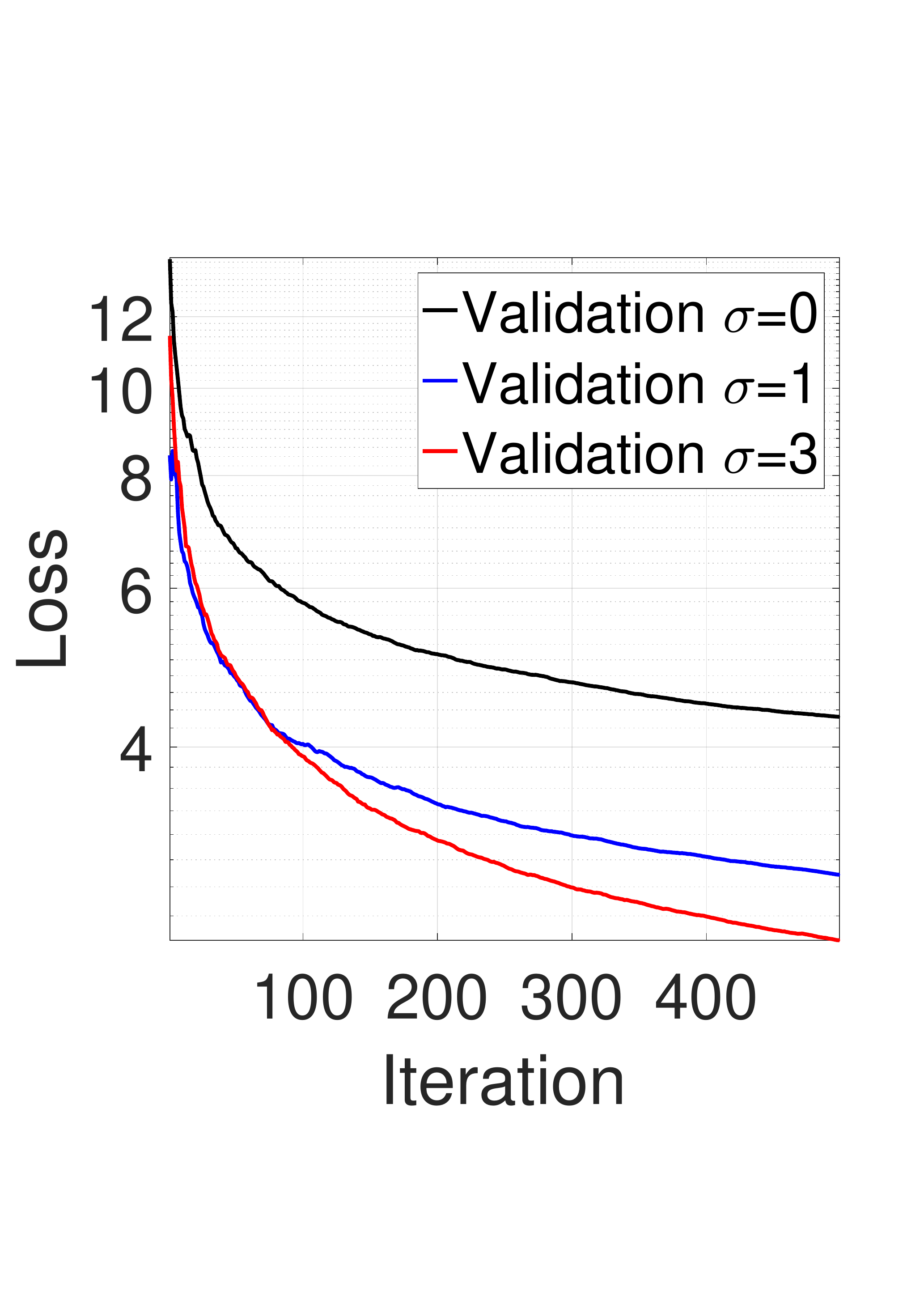}\\
\end{tabular}
%\vskip -0.35cm
\caption{Training and validation losses of the multi-class logistic regression by DP-LSSGD. (a) and (b): training and validation curves with $(0.2, 10^{-5})$-DP guarantee; (c) and (d): training and validation curves with $(0.1, 10^{-5})$-DP guarantee. (Average over 5 runs)
}
\label{fig:Training-Testing-Loss-Softmax}
\end{figure}
Next, consider the testing accuracy of the multi-class logistic regression trained with $(\epsilon, 10^{-5})$-DP guarantee by DP-LSSGD includes $\sigma=0$, i.e., DP-SGD. We list the test accuracy of logistic regression trained in different settings in Table~\ref{Table1:Gaussian:Softmax:WeightDecay}. These results reveal that DP-LSSGD with $\sigma=1, 2, 3$ can improve the accuracy of the trained private model and also reduce the variance, especially when the privacy guarantee is very strong, e.g., $(0.1, 10^{-5})$.

%Third, consider the testing accuracy of logistic regression and SVM trained in different scenarios. 
%The corresponding testing accuracy are listed in Tables.~\ref{Table1:Gaussian:Softmax:WeightDecay}, and \ref{Table1:Gaussian:SVM:WeightDecay}.
%All the numbers reported in the above tables and the tables below are the results averaged over three independent experiments. These results reveal that the multi-class logistic regression model is remarkably more accurate than SVM for various levels of DP-guarantee. Both logistic regression and SVM trained by DP-LSSGD with $\sigma = 1, 2, 3$ are more accurate than that trained by DP-SGD over different levels of DP-guarantee.
%\vskip -0.4cm
\begin{table}[!ht]
\centering
\fontsize{8.5}{8.5}\selectfont
\begin{threeparttable}
\caption{Testing accuracy of the multi-class logistic regression trained by DP-LSSGD with $(\epsilon, \delta=10^{-5})$-DP guarantee and different LS parameter $\sigma$. Unit: \%. (5 runs)
%\textcolor{red}{Add uncertainty here.}
}\label{Table1:Gaussian:Softmax:WeightDecay}
\begin{tabular}{ccccccc}
\toprule[1.0pt]
\ \ \ \ \ \ \ \ \ \  $\epsilon$\ \ \ \ \ \ \ \ \ \    &\ \ \ \ \ \ \ \ \ \  0.30\ \ \ \ \ \ \ \ \   &\ \ \ \ \ \ \   0.25\ \ \ \ \ \ \   &\ \ \ \ \ \ \ 0.20\ \ \ \ \ \ \   &\ \ \ \ \ \ \   0.15\ \ \ \ \ \ \   &\ \ \ \ \ \ \  0.10\ \ \ \ \ \ \     \cr
\midrule[0.8pt]
$\sigma=0$  & 81.74 $\pm$ 0.96 & 81.45 $\pm$ 1.59 & 78.92 $\pm$ 1.14 & 77.03 $\pm$ 0.69& 73.49 $\pm$ 1.60\cr
$\sigma=1$  & 84.21 $\pm$ 0.51 & 83.27 $\pm$ 0.35 & 81.56 $\pm$ 0.79 & 79.46 $\pm$ 1.33 &76.29 $\pm$ 0.53\cr
$\sigma=2$  & 84.23 $\pm$ 0.65 & {\bf 83.65 $\pm$ 0.76} & 82.15 $\pm$ 0.59 & 80.77 $\pm$ 1.26 &  76.31 $\pm$ 0.93\cr
$\sigma=3$  & {\bf 85.11 $\pm$ 0.45} & 82.97 $\pm$ 0.48 & {\bf 82.22 $\pm$ 0.28} & {\bf 80.81 $\pm$ 1.03} & {\bf 77.13 $\pm$ 0.77}\cr
\bottomrule[1.0pt]
\end{tabular}
\end{threeparttable}
\end{table}

%\vskip -0.4cm
\begin{figure}[!ht]
\centering
\begin{tabular}{cc}
\includegraphics[clip, trim=0.2cm 4.0cm 1.2cm 5cm,width=0.42\columnwidth]{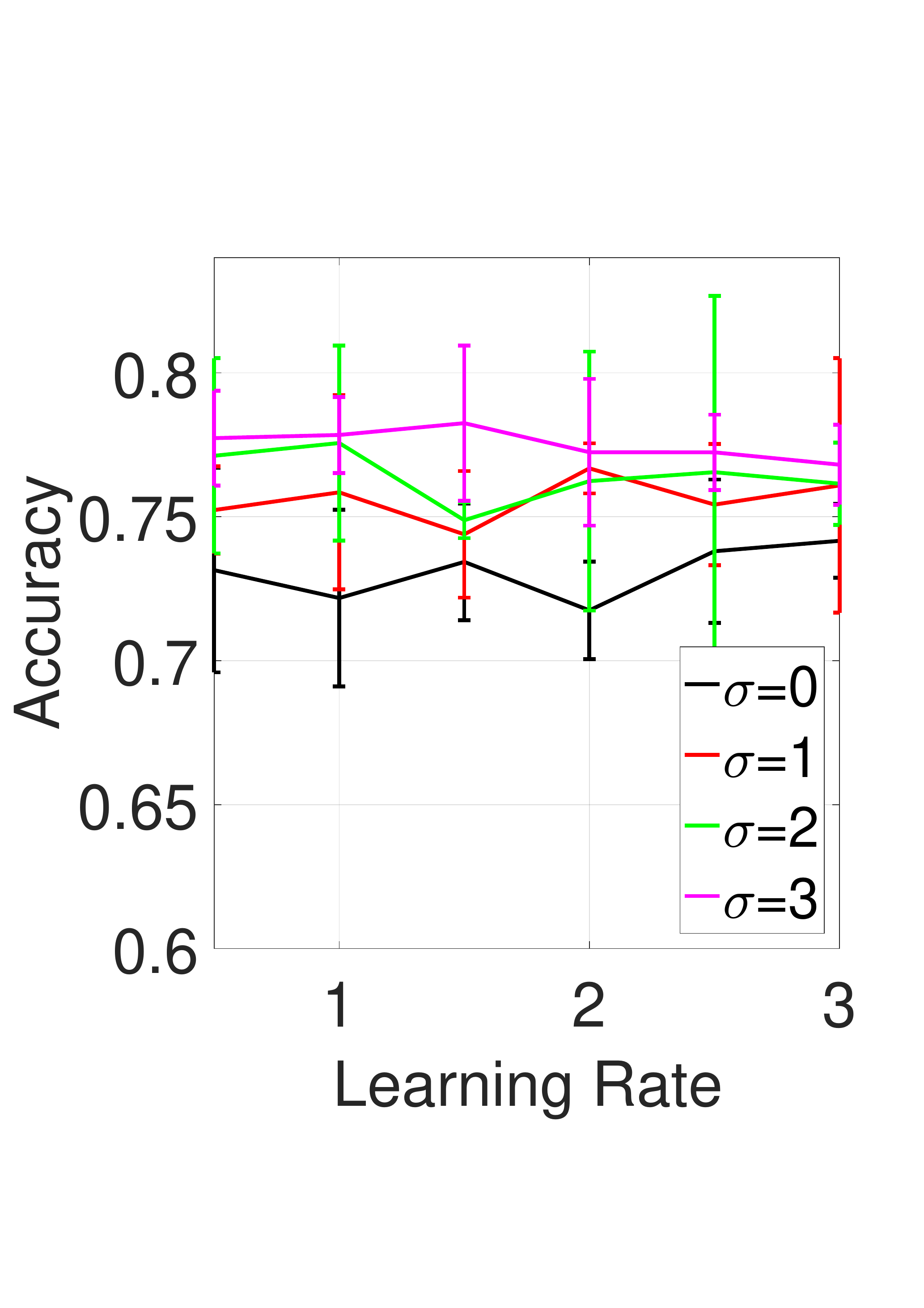}&
\includegraphics[clip, trim=0.2cm 4.0cm 1.2cm 5cm,width=0.42\columnwidth]{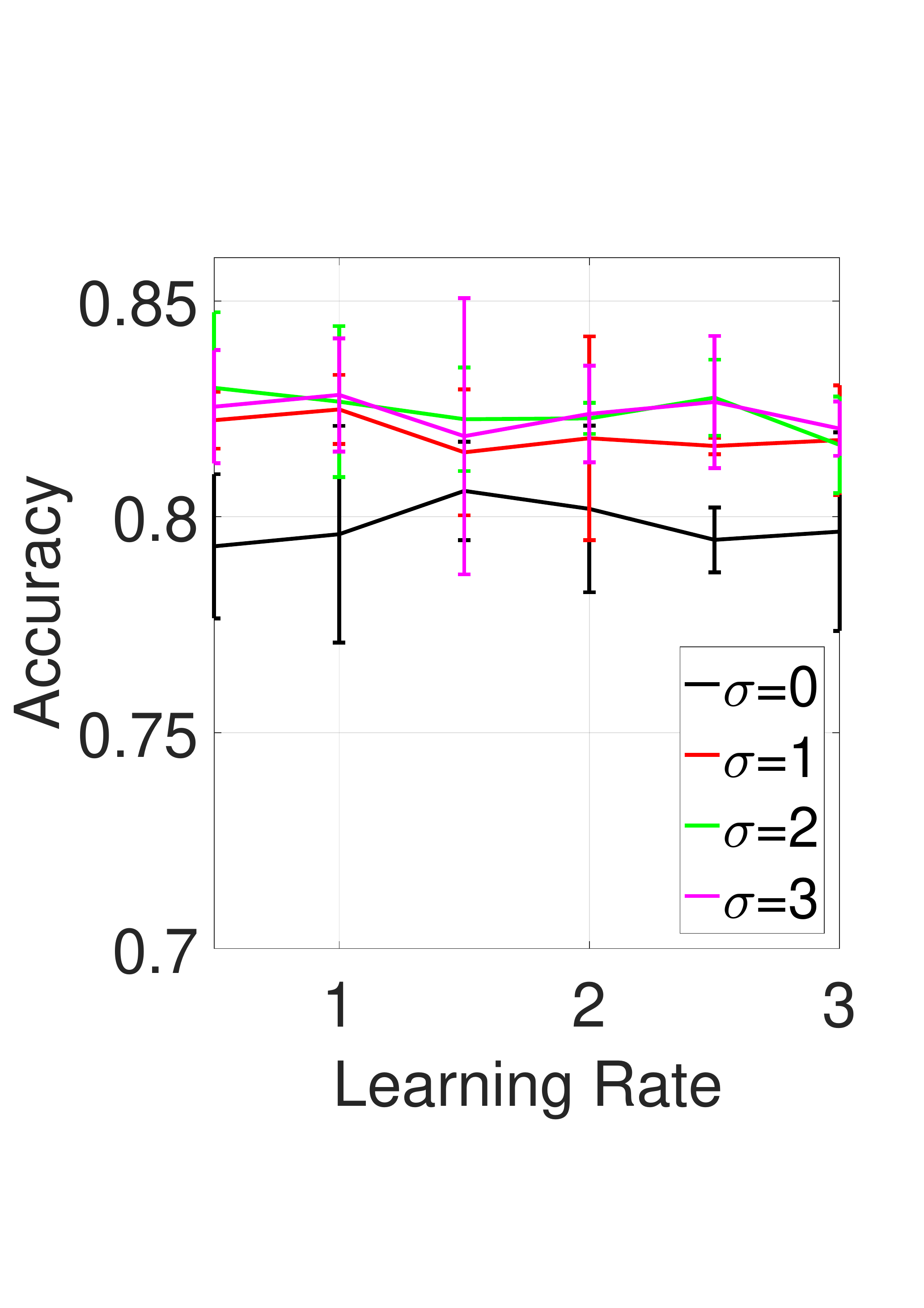}\\
\end{tabular}
%\vskip -0.35cm
\caption{Accuracy of the logistic regression on MNIST when different learning rates are used to train the model. Left: $(0.1, 10^{-5})$-DP; Right: $(0.2, 10^{-5})$-DP.}
\label{fig:LearningRate-Softmax}
\end{figure}

\subsubsection{The Effects of Step Size}
We know that the step size in DP-SGD/DP-LSSGD may affect the accuracy of the trained private models. We try different step size scheduling of the form $\{a/t| a= 0.5, 1.0, 1.5, 2.0, 2.5, 3.0\}$, where $t$ is again the index of iteration, and all the other hyper-parameters are used the same as before. Figure.~\ref{fig:LearningRate-Softmax} plots the test accuracy of the logistic regression model trained with different learning rate scheduling and different  %DP-guarantee. 
privacy budget. We see that the private logistic regression model trained by DP-LSSGD always outperforms DP-SGD.

%Table~\ref{Table1:Acc-Sigma} lists the testing accuracy (averaged over three runs) of both private logistic regression and SVM trained by DP-LSSGD with different $\sigma$. It shows that accuracy improvement is stable to $\sigma$. As $\sigma$ increases, the testing accuracy increases initially and then decays. In practice, DP-LSSGD is as fast as DP-SGD, so for a given objective function we might try a few different $\sigma$ to find the optimal one.

\subsection{CNN for MNIST and CIFAR10 Classification}
In this subsection, we consider training a small CNN \footnote{\url{github.com/tensorflow/privacy/blob/master/tutorials/mnist_dpsgd_tutorial.py}} with DP-guarantee for MNIST classification. We implement DP-LSSGD and DP-LSAdam \citep{kingma2014adam} (simply replace the noisy gradient in DP-Adam in the Tensorflow privacy with the Laplacian smoothed surrogate) into the Tensorflow privacy framework \citep{Tensorflow:Privacy}. We use the default learning rate $0.15$ for DP-(LS)SGD and $0.001$ for DP-(LS)Adam and decay them by a factor of $10$ at the $10$K-th iteration, norm clipping ($1$), batch size ($256$), and micro-batches ($256$). We vary the noise multiplier (NM), and larger NM guarantees stronger DP. As shown in Figure~\ref{fig-CNN-MNIST-Acc-Eps}, the %DP-guarantee 
privacy budget increases at exactly the same speed (dashed red line) for four optimization algorithms. When the NM is large, i.e., DP-guarantee is strong, DP-SGD performs very well in the initial period. However, after a few epochs, the validation accuracy gets highly oscillatory and decays. DP-LSSGD can mitigate the training instability issue of DP-SGD. DP-Adam outperforms DP-LSSGD, and DP-LSAdam can further improve validation accuracy on top of DP-Adam.

% Say which benchmark we are using, what is the defauly parameters
%\vskip -0.3cm
\begin{figure}[!ht]
\centering
\begin{tabular}{cc}
\includegraphics[clip, trim=0.1cm 5cm 0.1cm 6.5cm,width=0.35\columnwidth]{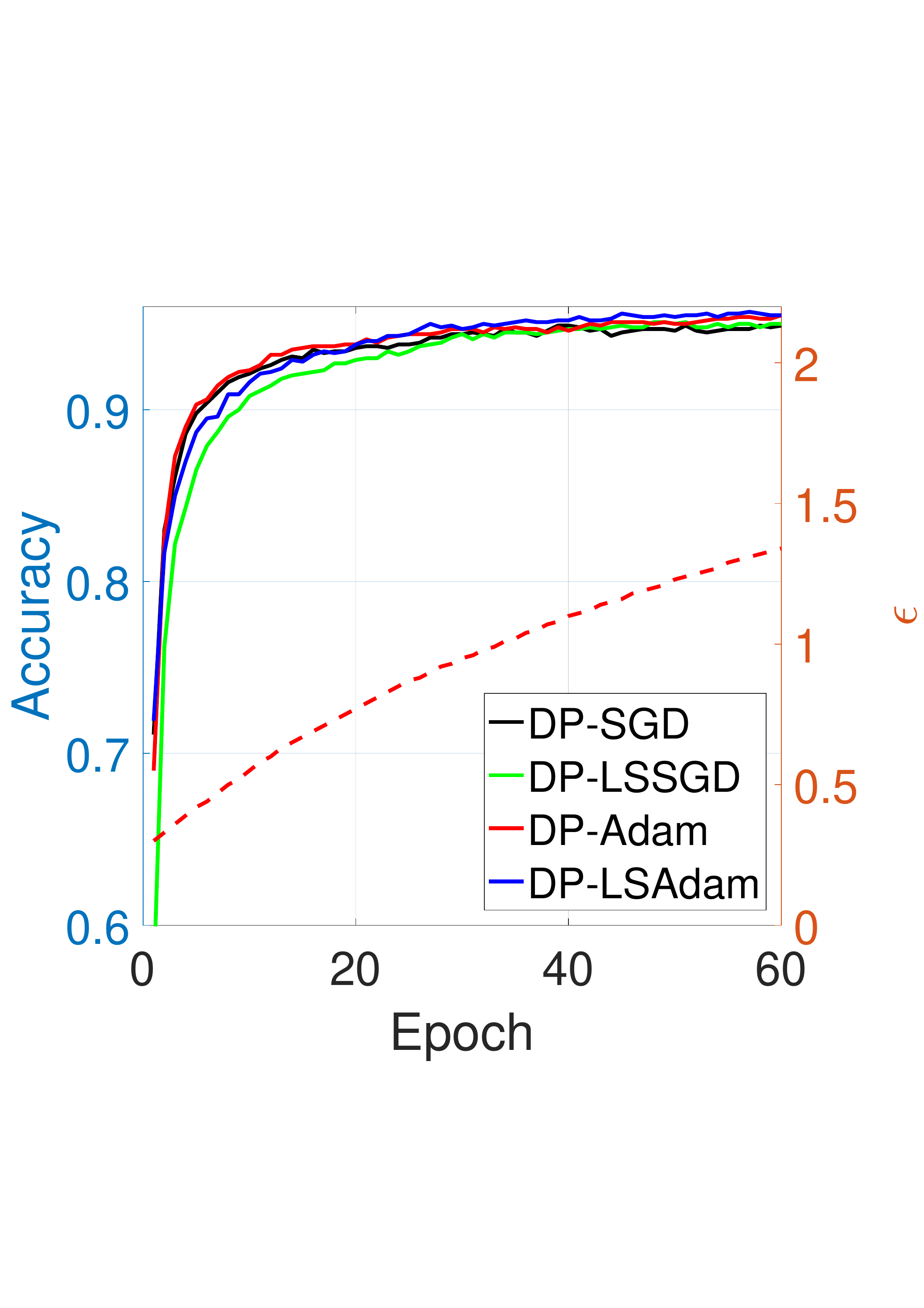}&
\includegraphics[clip, trim=0.1cm 5cm 0.1cm 6.5cm,width=0.35\columnwidth]{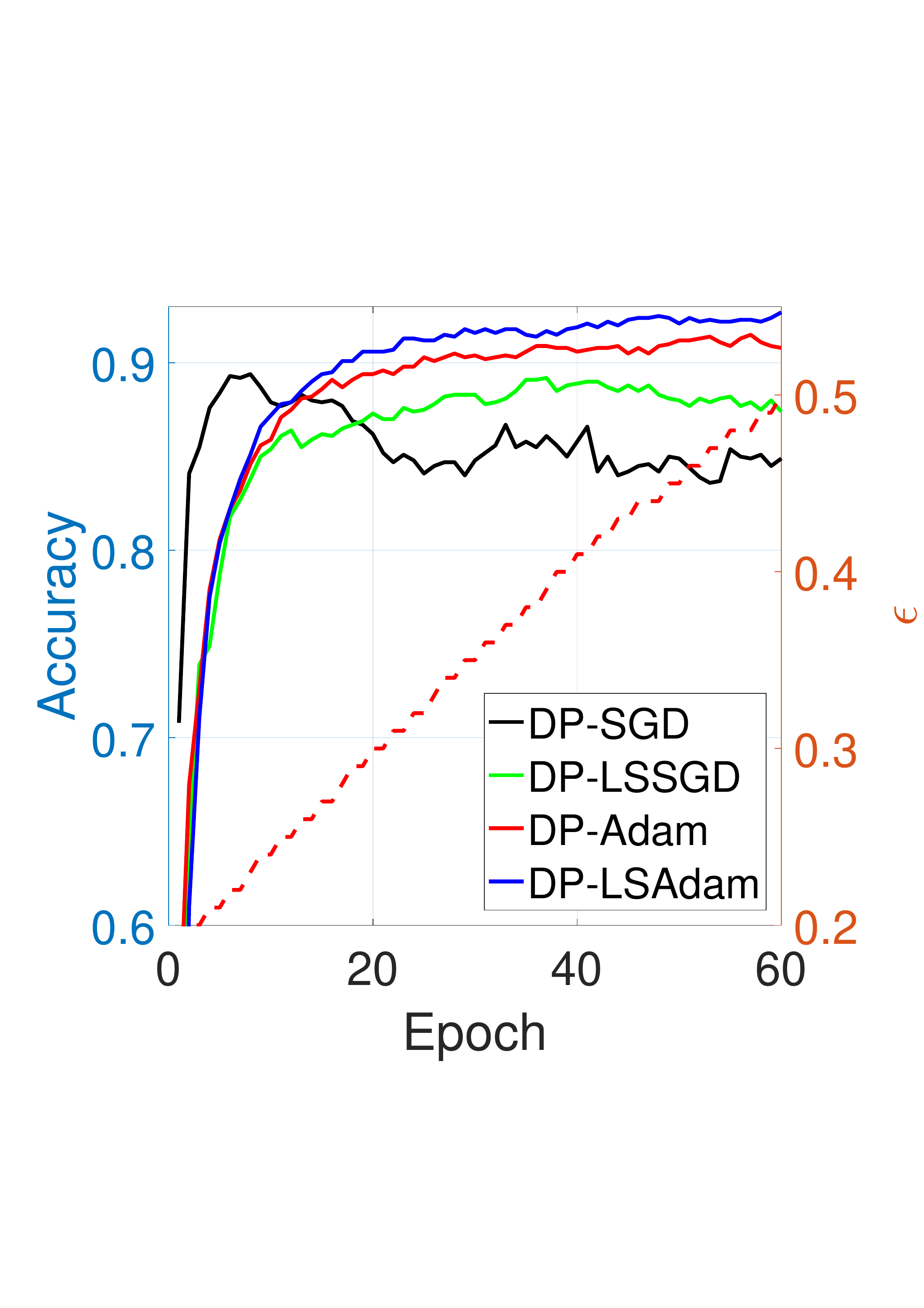}\\
{\footnotesize NM $2.0$ (LS: $\sigma=0.2$)}  & {\footnotesize NM $5.0$ (LS: $\sigma=1.0$)} \\
\includegraphics[clip, trim=0.1cm 5cm 0.1cm 6.5cm,width=0.35\columnwidth]{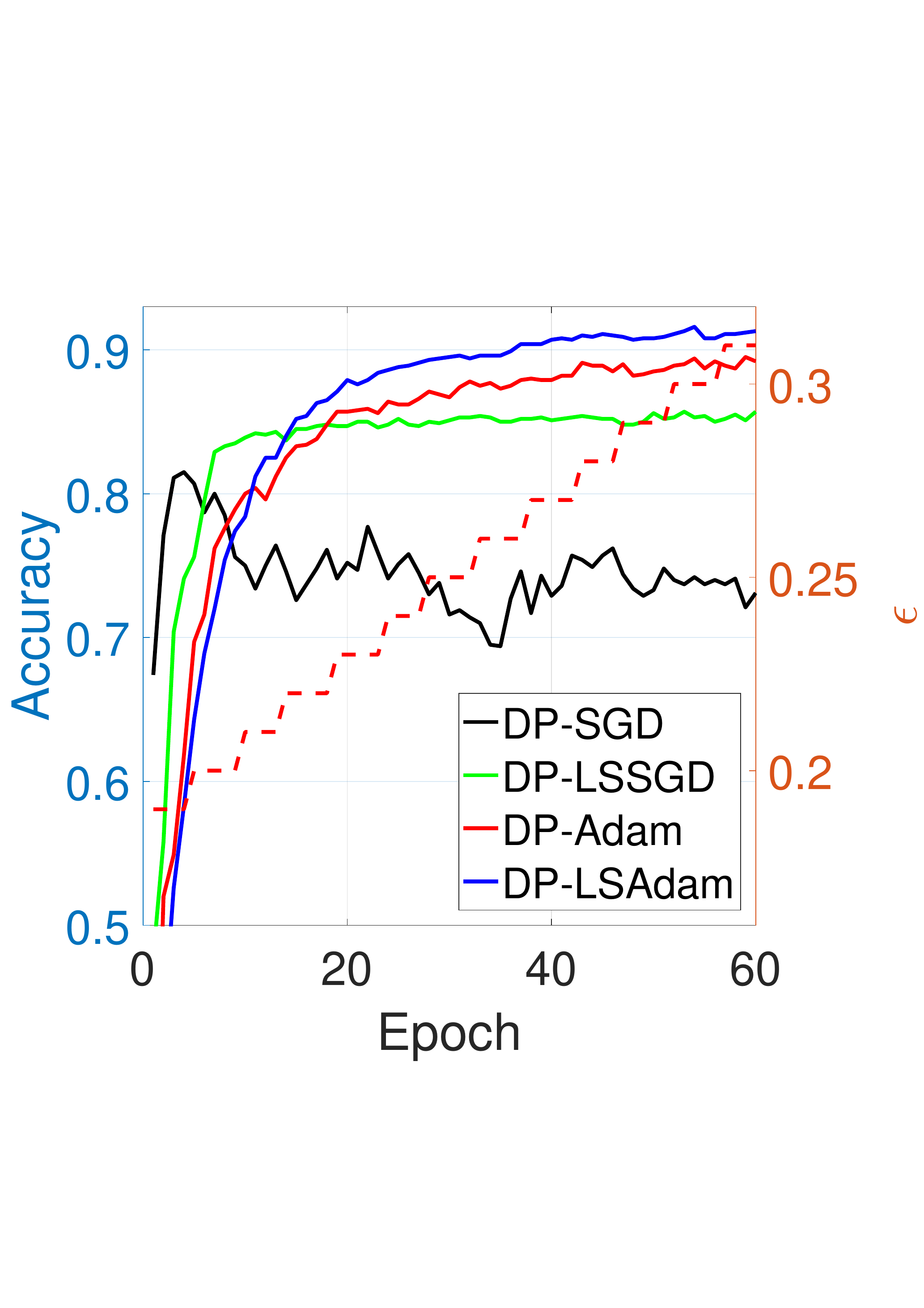}&
\includegraphics[clip, trim=0.1cm 5cm 0.1cm 6.5cm,width=0.35\columnwidth]{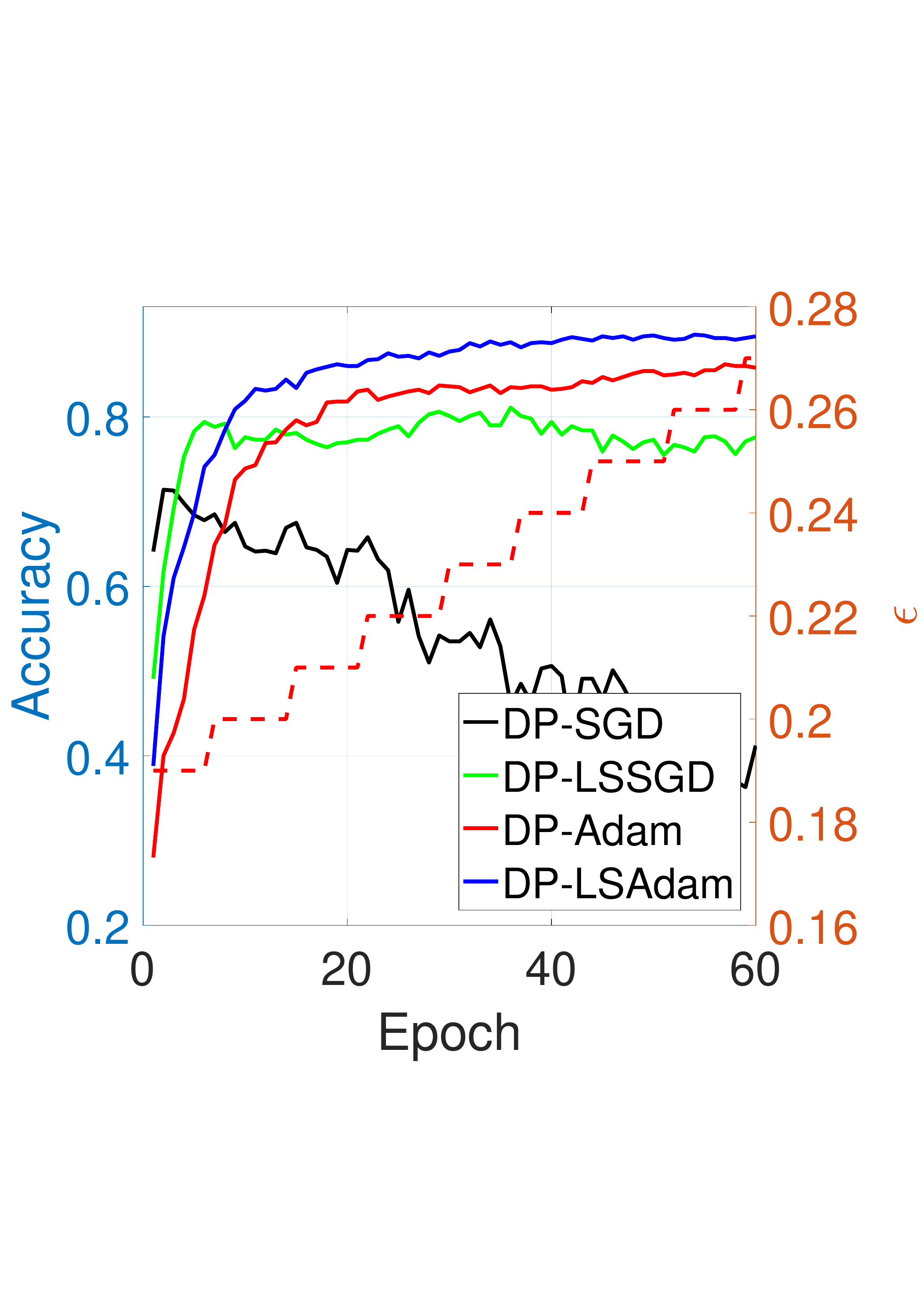}\\
 {\footnotesize NM $8.0$ (LS: $\sigma=1.0$)}  & {\footnotesize NM $10.0$ (LS: $\sigma=1.0$)}  \\
\end{tabular}
%\vskip -0.2cm
\caption{Performance comparison (validation accuracy) between different DP optimization algorithms in training CNN for MNIST classification with a fixed $\delta=10^{-5}$.}
\label{fig-CNN-MNIST-Acc-Eps}
\end{figure}

%Discuss different sigma, different learning rate

Next, we consider the effects of the LS constant ($\sigma$) and the learning rate in training the DP-CNN 
%mentioned above 
for MNIST classification. We fixed the NM to be $10$, and run $60$ epochs of DP-SGD and DP-LSSGD with different $\sigma$ and different learning rate. We show the comparison of DP-SGD with DP-LSSGD with different $\sigma$ in the left panel of Figure~\ref{fig-CNN-MNIST-Different-Parameters}, and we see that as $\sigma$ increases it becomes more stable in training CNNs with DP-guarantee even though initially it becomes slightly slower. In the middle panel of Figure~\ref{fig-CNN-MNIST-Different-Parameters}, we plot the evolution of validation accuracy curves of the DP-CNN trained by DP-SGD and DP-LSSGD with different learning rate, where the solid lines represent results for DP-LSSGD and dashed lines for DP-SGD. DP-LSSGD outperforms DP-SGD in all learning rates tested, and DP-LSSGD is much more stable than DP-SGD when a larger learning rate is used.

Finally, we go back to the accuracy degradation problem raised in Figure~\ref{Degrade-performance}. As shown in Figure~\ref{fig:Training-Testing-Loss-Softmax}, LS can efficiently reduce both training and validation losses in training multi-class logistic regression for MNIST classification. Moreover, as shown in the right panel of Figure~\ref{fig-CNN-MNIST-Different-Parameters}, DP-LSSGD can improve the testing accuracy of the CNN used above significantly. In particular, DP-LSSGD improves the testing accuracy of CNN by $3.2$\% and $5.0$\% for $(0.4, 10^{-5})$ and $(0.2, 10^{-5})$, respectively, on top of DP-SGD. DP-LSAdam can further boost test accuracy. {All the accuracies associated with any given privacy budget 
%different $\epsilon$, 
in Figure~\ref{fig-CNN-MNIST-Different-Parameters} (right panel), are the optimal ones searched over the results obtained in the above experiments with different learning rate, number of epochs, and NM.}
%optimal learning rate and the number of iteration 
%for DP-SGD and DP-Adam, and we take the same hyper-parameters for DP-LSSGD and DP-LSAdam, respectively.

%Due to page limitation, we put the results of DP-CNN for CIFAR10 classification in the  appendix.
%\vskip -0.6cm

\subsection{CNN for CIFAR10 Classification}
In this section, we will show that LS can also improve the utility of the DP-CNN trained by DP-SGD and DP-Adam for CIFAR10 classification. We simply replace the CNN architecture used above for MNIST classification with the benchmark architecture in the Tensorflow tutorial \footnote{github.com/tensorflow/models/tree/master/tutorials/image/cifar10} for CIFAR10 classification. Also, we use the same set of parameters as that used for training DP-CNN for MNIST classification except we fixed the noise multiplier to be $2.0$ and clip the gradient $\ell_2$ norm to $3$. As shown in Figure~\ref{fig-CIFAR10}, 
%the validation accuracy of the model trained by DP-LSSGD remarkably outperforms that of DP-SGD and DP-Adam. 
LS can significantly improve the validation accuracy of the model trained by DP-SGD and DP-Adam, and the DP guarantee for all these algorithms are the same (dashed line in Figure~\ref{fig-CIFAR10}).

\begin{figure}[!ht]
\centering
\begin{tabular}{c}
\includegraphics[clip, trim=0.1cm 5cm 0.1cm 6cm,width=0.4\columnwidth]{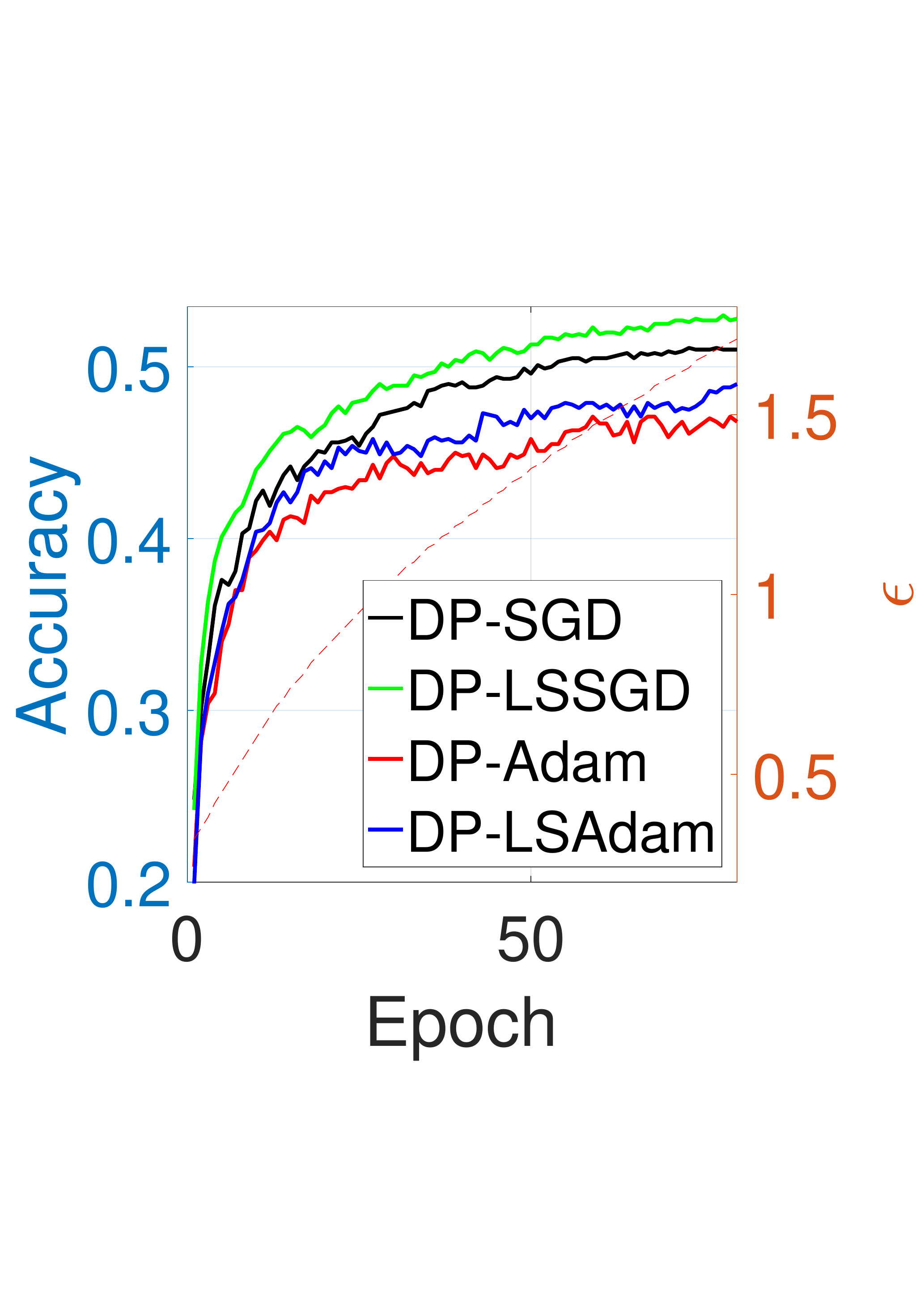}\\
\end{tabular}
\caption{Performance comparison between different differentially private optimization algorithms in training CNN for CIFAR10 classification with a fixed $\delta=10^{-5}$.}
\label{fig-CIFAR10}
\end{figure}

\begin{figure}[!ht]
\centering
\begin{tabular}{ccc}
\includegraphics[clip, trim=0.1cm 5cm 0.1cm 6.5cm,width=0.25\columnwidth]{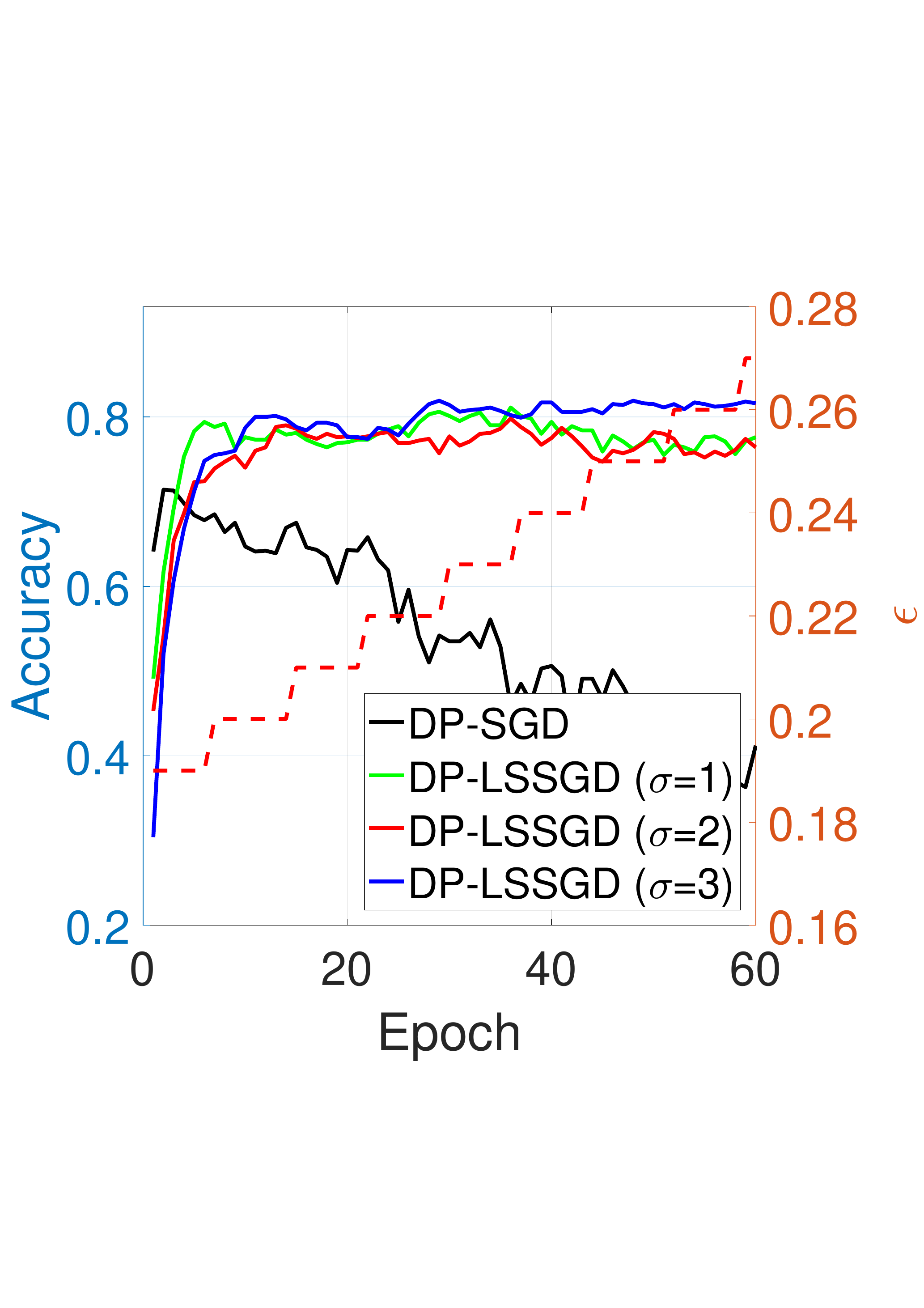}&
\includegraphics[clip, trim=0.1cm 5cm 0.1cm 6.5cm,width=0.25\columnwidth]{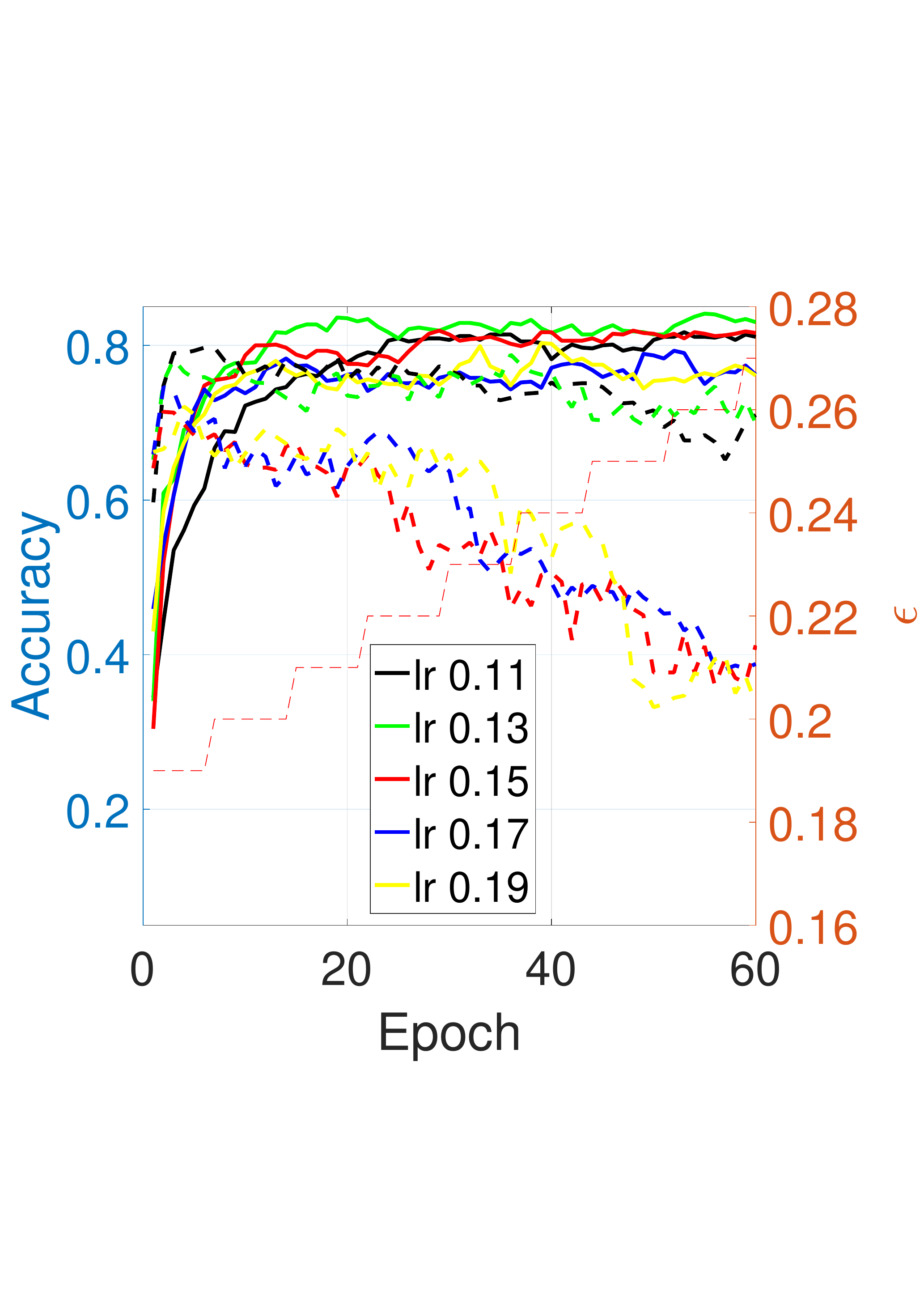}&
\includegraphics[clip, trim=0.1cm 5cm 0.1cm 4.cm,width=0.25\columnwidth]{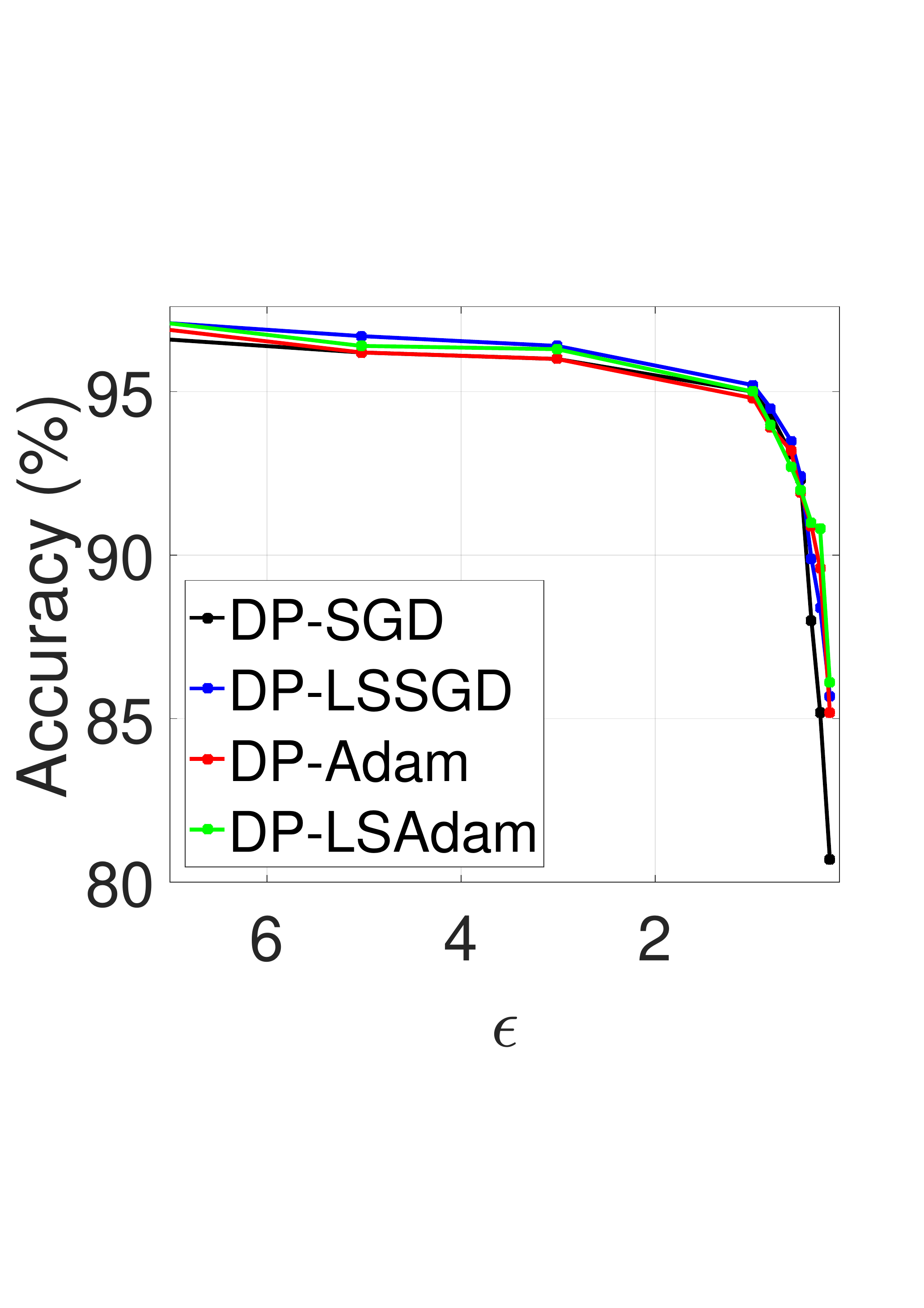}\\ % TODO
Diff LS ($\sigma$)  
&  Diff Learning Rate & $\epsilon$ vs. Accuracy \\
\end{tabular}
%\vskip -0.2cm
\caption{Left \& middle panels: Contrasting performance (validation acc) of DP-SGD and DP-LSSGD with different $\sigma$ and different learning rate. Right panel: $\epsilon$ vs. Testing accuracy of the private models trained by different DP-optimization algorithms with a fixed $\delta=10^{-5}$.}
\label{fig-CNN-MNIST-Different-Parameters}
\end{figure}

\begin{comment}
\begin{figure}[!ht]
\centering
\begin{tabular}{c}
\includegraphics[width=0.5\columnwidth]{}\\
\end{tabular}
\caption{$\epsilon$ vs. Testing accuracy of the private models trained by different differentially private optimization algorithms with a fixed $\delta=10^{-5}$.}
\label{fig-CNN-MNIST-OptimalPerformance}
\end{figure}
\end{comment}

% Plot the optimal performance of DP-SGD, DP-LSSGD, DP-Adam, DP-LSAdam that corresponding to Fig~1.

%\subsubsection{Small Training Set}
%500

%\subsection{CNN for CIFAR10 classification}

\section{Conclusions}\label{Conclusion}
In this paper, we integrated Laplacian smoothing with DP-SGD for privacy-presrving ERM.
%we proposed a new differentially private stochastic optimization algorithm, DP-LSSGD, inspired by the recently proposed LSSGD. 
The resulting algorithm is simple to implement and the extra computational cost compared with the DP-SGD is almost negligible. We show that DP-LSSGD can improve the utility of the trained private ML models both numerically and theoretically. It is straightforward to combine LS with other variance reduction technique, e.g., SVRG \citep{SVRG}.

\section*{Acknowledgments}
This material is based on research sponsored by the National Science Foundation under grant number DMS-1924935 and DMS-1554564 (STROBE). The Air Force Research Laboratory under grant numbers FA9550-18-0167 and MURI FA9550-18-1-0502, the Office of Naval Research under grant number N00014-18-1-2527. QG is partially supported by the National Science Foundation under grant number SaTC-1717950.

%\bibliography{references.bib}

\begin{thebibliography}{10}

\bibitem{Abadi:2016}
M.~Abadi, A.~Chu, I.~Goodfellow, H.~Mc{M}ahan, I.~Mironov, K.~Talwar, and
  L.~Zhang.
\newblock {Deep Learning with Differential Privacy}.
\newblock In {\em 23rd ACM Conference on Computer and Communications Security
  (CCS 2016)}, 2016.

\bibitem{Tensorflow:Privacy}
G.~Andrew and et~al.
\newblock {TensorFlow Privacy}.
\newblock \url{https://github.com/tensorflow/privacy}, 2019.

\bibitem{Balcan:2017}
M.~Balcan, T.~Dick, Y.~Liang, W.~Mou, and H.~Zhang.
\newblock {Differentially Private Clustering in High-Dimensional Euclidean
  Spaces}.
\newblock In {\em 34th International Conference on Machine Learning (ICML
  2017)}, 2017.

\bibitem{balle2018improving}
B.~Balle and Y.~Wang.
\newblock {Improving the Gaussian Mechanism for Differential Privacy:
  Analytical Calibration and Optimal Denoising}.
\newblock In {\em International Conference on Machine Learning}, pages
  403--412, 2018.

\bibitem{barak2007privacy}
B.~Barak, K.~Chaudhuri, C.~Dwork, S.~Kale, F.~McSherry, and K.~Talwar.
\newblock Privacy, accuracy, and consistency too: a holistic solution to
  contingency table release.
\newblock In {\em Proceedings of the twenty-sixth ACM SIGMOD-SIGACT-SIGART
  symposium on Principles of database systems}, pages 273--282. ACM, 2007.

\bibitem{Bassily:2014}
R.~Bassily, A.~Smith, and A.~Thakurta.
\newblock {Private Empirical Risk Minimization: Efficient Algorithms and Tight
  Error Bounds}.
\newblock In {\em 55th Annual IEEE Symposium on Foundations of Computer Science
  (FOCS 2014)}, 2014.

\bibitem{bernstein2017differentially}
G.~Bernstein, R.~McKenna, T.~Sun, D.~Sheldon, M.~Hay, and G.~Miklau.
\newblock Differentially private learning of undirected graphical models using
  collective graphical models.
\newblock In {\em Proceedings of the 34th International Conference on Machine
  Learning-Volume 70}, pages 478--487. JMLR. org, 2017.

\bibitem{Bun:2016}
M.~Bun and T.~Steinke.
\newblock {Concentrated Differential Privacy: Simplifications, Extensions, and
  Lower Bounds}.
\newblock {\em ArXiv:1605.02065}, 2016.

\bibitem{Chaudhuri:2008}
K.~Chaudhuri and C.~Monteleoni.
\newblock {Privacy-Preserving Logistic Regression}.
\newblock In {\em Advances in Neural Information Processing Systems (NIPS
  2008)}, 2008.

\bibitem{Chaudhuri:2011}
K.~Chaudhuri, C.~Monteleoni, and A.~Sarwate.
\newblock {Differentially Private Empirical Risk Minimization}.
\newblock {\em Journal of Machine Learning Research}, 12, 2011.

\bibitem{Duchi:2014}
J.~Duchi, M.~Jordan, and M.~Wainwright.
\newblock {Privacy Aware Learning}.
\newblock {\em Journal of the Association for Computing Machinery}, 61(6),
  2014.

\bibitem{Dwork:2006-Basic}
C.~Dwork, K.~Kenthapadi, F.~McSherry, I.~Mironov, and M.~Naor.
\newblock Our data, ourselves: Privacy via distributed noise generation.
\newblock In {\em Annual International Conference on the Theory and
  Applications of Cryptographic Techniques}, pages 486--503. Springer, 2006.

\bibitem{Dwork:2016}
C.~Dwork and G.~Rothblum.
\newblock {Concentrated Diffentially Privacy}.
\newblock {\em ArXiv:1603.01887}, 2016.

\bibitem{Dwork:2010}
C.~Dwork, G.~Rothblum, and S.~Vadhan.
\newblock {Boosting and Differential Privacy}.
\newblock In {\em 51th Annual IEEE Symposium on Foundations of Computer Science
  (FOCS 2010)}, 2010.

\bibitem{Feng:2017}
W.~Feng, Z.~Yan, H.~Zhang, K.~Zeng, Y.~Xiao, and Y.~T. Hou.
\newblock {A Survey on Security, Privacy and Trust in Mobile Crowdsourcing}.
\newblock {\em IEEE Internet of Things Journal}, 2017.

\bibitem{Fredrikson:2015}
M.~Fredrikson, S.~Jha, and T.~Ristenpart.
\newblock {Model Inversion Attacks that Exploit Confidence Information and
  Basic Countermeasures}.
\newblock In {\em 22nd ACM SIGSAC Conference on Computer and Communications
  Security (CCS 2015)}, 2015.

\bibitem{Gilbert:2017}
A.~Gilbert and A.~McMillan.
\newblock {Local Differential Privacy for Physical Sensor Data and Sparse
  Recovery}.
\newblock {\em arXiv:1706.05916}, 2017.

\bibitem{hay2009accurate}
M.~Hay, C.~Li, G.~Miklau, and D.~Jensen.
\newblock Accurate estimation of the degree distribution of private networks.
\newblock In {\em 2009 Ninth IEEE International Conference on Data Mining},
  pages 169--178. IEEE, 2009.

\bibitem{Jain:2012}
P.~Jain, P.~Kothari, and A.~Thakurta.
\newblock {Differentially Private Online Learning}.
\newblock In {\em 25th Conference on Learning Theory (COLT 2012)}, 2012.

\bibitem{Jain:2018}
P.~Jain, O.~Thakkar, and A.~Thakurta.
\newblock {Differentially Private Matrix Completion}.
\newblock In {\em 35th International Conference on Machine Learning (ICML
  2018)}, 2018.

\bibitem{Jayaraman:2018}
B.~Jayaraman, L.~Wang, D.~Evans, and Q.~Gu.
\newblock {Distributed Learning without Distress: Privacy-Preserving Empirical
  Risk Minimization}.
\newblock In {\em Advances in Neural Information Processing Systems (NIPS
  2018)}, 2018.

\bibitem{SVRG}
R.~Johoson and T.~Zhang.
\newblock {Accelerating Stochastic Gradient Descent using Predictive Variance
  Reduction}.
\newblock In {\em Advances in Neural Information Processing Systems}, 2013.

\bibitem{Kairouz:2015}
P.~Kairouz, S.~Oh, and P.~Viswanath.
\newblock {The Composition Theorem for Differential Privacy}.
\newblock In {\em 32nd International Conference on Machine Learning (ICML
  2015)}, 2015.

\bibitem{kingma2014adam}
D.~Kingma and J.~Ba.
\newblock Adam: A method for stochastic optimization.
\newblock {\em arXiv preprint arXiv:1412.6980}, 2014.

\bibitem{Lee:2018}
J.~Lee and D.~Kifer.
\newblock {Concentrated Differentially Private Gradient Descent with Adaptive
  per-iteration Privacy Budget}.
\newblock {\em ArXiv:1808.09501}, 2018.

\bibitem{Liu:2017}
Y.~Liu, K.~Gadepalli, M.~Norouzi, G.~Dahl, T.~Kohlberger, A.~Boyko,
  S.~Venugopalan, A.~Timofeev, P.~Nelson, G.~Corrado, and et~al.
\newblock {Detecting Cancer Metastases on Gigapixel Pathology Images}.
\newblock {\em arXiv:1703.02442}, 2017.

\bibitem{mironov2017renyi}
I.~Mironov.
\newblock {Renyi Differential Privacy}.
\newblock In {\em Computer Security Foundations Symposium (CSF), 2017 IEEE
  30th}, pages 263--275. IEEE, 2017.

\bibitem{nikolov2013geometry}
A.~Nikolov, K.~Talwar, and L.~Zhang.
\newblock The geometry of differential privacy: the sparse and approximate
  cases.
\newblock In {\em Proceedings of the forty-fifth annual ACM symposium on Theory
  of computing}, pages 351--360. ACM, 2013.

\bibitem{LSGD:2018}
S.~Osher, B.~Wang, P.~Yin, X.~Luo, M.~Pham, and A.~Lin.
\newblock {Laplacian Smoothing Gradient Descent}.
\newblock {\em ArXiv:1806.06317}, 2018.

\bibitem{Papernot:2017}
N.~Papernot, M.~Abadi, U.~Erlingsson, I.~Goodfellow, and K.~Talwar.
\newblock {Semisupervised Knowledge Transfer for Deep Learning from Private
  Training Data}.
\newblock In {\em 5th International Conference on Learning Representation (ICLR
  2017)}, 2017.

\bibitem{Papernot:2018}
N.~Papernot, S.~Song, I.~Mironov, A.~Raghunathan, K.~Talwar, and U.~Erlingsson.
\newblock {Scalable Private Learning with PATE}.
\newblock In {\em International Conference on Learning Representations (ICLR
  2018)}, 2018.

\bibitem{Park:2016}
M.~Park, J.~Foulds, K.~Chaudhuri, and M.~Welling.
\newblock {Private Topic Modeling}.
\newblock {\em arXiv:1609:04120}, 2016.

\bibitem{Shokri:2015}
R.~Shokri and V.~Shmatikov.
\newblock {Privacy-Preserving Deep Learning}.
\newblock In {\em 22nd ACM SIGSAC Conference on Computer and Communications
  Security (CCS 2015)}, 2015.

\bibitem{Shokri:2017}
R.~Shokri, M.~Stronati, C.~Song, and V.~Shmatikov.
\newblock {Membership Inference Attacks Against Machine Learning Models}.
\newblock {\em Proceedings of the 2017 IEEE Symposium on Security and Privacy},
  2017.

\bibitem{Song:2013}
S.~Song, K.~Chaudhuri, and A.~Sarwate.
\newblock {Stochastic Gradient Descent with Differentially Private Updates}.
\newblock In {\em GlobalSIP Conference}, 2013.

\bibitem{Su:2015}
D.~Su, J.~Cao, N.~Li, E.~Bertino, and H.~Jin.
\newblock {Differentially Private k-Means Clustering}.
\newblock {\em arXiv:1504.05998}, 2015.

\bibitem{talwar2015nearly}
K.~Talwar, A.~Thakurta, and L.~Zhang.
\newblock {Nearly optimal private lasso}.
\newblock In {\em Advances in Neural Information Processing Systems}, pages
  3025--3033, 2015.

\bibitem{wang2019differential}
L.~Wang and Q.~Gu.
\newblock {Differentially Private Iterative Gradient Hard Thresholding for
  Sparse Learning}.
\newblock In {\em Proceedings of the 28th International Joint Conference on
  Artificial Intelligence}, 2019.

\bibitem{wang2019efficient}
Lingxiao Wang, Bargav Jayaraman, David Evans, and Quanquan Gu.
\newblock Efficient privacy-preserving nonconvex optimization.
\newblock {\em arXiv preprint arXiv:1910.13659}, 2019.

\bibitem{Wang:2016}
Y.~Wang, J.~Lei, and S.~Fienberg.
\newblock {Learning with Differential Privacy: Stability, Learnability and the
  Sufficiency and Necessity of ERM Principle}.
\newblock {\em ArXiv:1502.06309}, 2016.

\bibitem{Wang:2015-Clustering}
Y.~Wang Y.~Wang and A.~Singh.
\newblock {Differentially Private Subspace Clustering}.
\newblock In {\em Advances in Neural Information Processing Systems (NIPS
  2015)}, 2015.

\bibitem{Yuen:2011}
M.~Yuen, I.~King, and K.~Leung.
\newblock {A Survey of Crowdsourcing Systems}.
\newblock In {\em Proceedings of the IEEE international conference on social
  computing (Socialcom 2011)}, 2011.

\bibitem{Zhang:2017}
J.~Zhang, K.~Zheng, W.~Mou, and L.~Wang.
\newblock {Efficient Private ERM for Smooth Objectives}.
\newblock In {\em The Twenty-Sixth International Joint Conference on Artificial
  Intelligence (IJCAI 2017)}, 2017.

\end{thebibliography}
%\bibliographystyle{plain}

\newpage
\appendix

\section{Proof of the Main Theorems}

\subsection{Privacy Guarantee}
To prove the privacy guarantee in Theorem \ref{Theorem-privacy-guarantee}, we first introduce the following $\ell_2$-sensitivity.
\begin{definition}[$\ell_2$-Sensitivity] \label{L2-Sensitivity}
For any given function $f(\cdot)$, the $\ell_2$-sensitivity of $f$ is defined by
$$
\Delta (f) = \max_{\|S-S'\|_1=1} \|f(S) - f(S')\|_2,
$$
where $\|S-S'\|_1=1$ means the data sets $S$ and $S'$ differ in only one entry.
\end{definition}

We will adapt the concepts and techniques of R\'enyi DP (RDP) to prove the DP-guarantee of the proposed DP-LSSGD.

\begin{definition}[RDP]\label{RDP}
For $\alpha>1 \ \mbox{and}\ \rho>0$, a randomized mechanism $\cM:\cS^n\rightarrow\cR$ satisfies $(\alpha, \rho)$-R\'enyi DP, i.e., $(\alpha, \rho)$-RDP, if for all adjacent datasets $S, S^\prime \in \cS^n$ differing by one element, we have 
\begin{align*}
D_{\alpha}\big(\cM(S)||\cM(S^\prime)\big):=\frac{1}{\alpha-1}\log \EE\bigg(\frac{\cM(S)}{\cM(S^\prime)}\bigg)^\alpha\leq \rho,
\end{align*}
where the expectation is taken over $\cM(S^\prime)$.
\end{definition}

\begin{lemma}\citep{wang2019efficient}\label{lemma:GaussianM_RDP}
	Given a function $q:\cS^n\rightarrow\cR$, the Gaussian Mechanism $\cM=q(S)+\ub$, where $\ub\sim N(0,\sigma^2\Ib)$, satisfies $(\alpha,\alpha\Delta^2(q)/(2\sigma^2))$-RDP. In addition, if we apply the mechanism $\cM$ to a subset of samples using uniform sampling without replacement, $\cM$ satisfies $(\alpha, 5\tau^2\Delta^2(q)\alpha/\sigma^2)$-RDP given $\sigma^{\prime2}=\sigma^2/\Delta^2(q)\geq 1.5$, $\alpha\leq \log(1/\tau\big(1+\sigma^{\prime2})\big)$, where $\tau$ is the subsample rate.
% 	Given a function $q:\cS^n\rightarrow\cR$, the Gaussian Mechanism $\cM=q(S)+\nb$, where $\nb\sim N(0,\nu^2\Ib)$, satisfies $(\alpha,\alpha\Delta^2(q)/(2\nu^2))$-RDP. In addition, if we apply the mechanism $\cM$ to a subset of samples using uniform sampling without replacement, $\cM$ satisfies $(\alpha, \tau^2\Delta_2^2(q)\alpha/\nu^2)$-RDP when $\nu^2/\Delta_2^2(q)\geq 1/1.25$, with $\tau$ denoting the subsample rate.
\end{lemma}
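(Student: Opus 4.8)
The statement bundles two claims, and I would prove them in sequence: first the plain Gaussian mechanism, then amplification by subsampling. For the first claim, fix adjacent $S,S'$ and observe that $\cM(S)\sim N(q(S),\sigma^2\Ib)$ and $\cM(S')\sim N(q(S'),\sigma^2\Ib)$ are Gaussians with a common covariance. The order-$\alpha$ R\'enyi divergence between two equal-covariance Gaussians has the closed form $D_\alpha\big(N(\mu_1,\sigma^2\Ib)\,\|\,N(\mu_2,\sigma^2\Ib)\big)=\alpha\|\mu_1-\mu_2\|_2^2/(2\sigma^2)$, which I would recover directly by evaluating the Gaussian integral $\EE_{x\sim N(\mu_2,\sigma^2\Ib)}\big[(p_{\mu_1}(x)/p_{\mu_2}(x))^\alpha\big]$. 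Since $\|q(S)-q(S')\|_2\le\Delta(q)$ by Definition~\ref{L2-Sensitivity}, maximizing over adjacent pairs gives the first bound $\alpha\Delta^2(q)/(2\sigma^2)$.

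\textbf{Reduction for the subsampled claim.} By rotational invariance the divergence above depends on the means only through $\|\mu_1-\mu_2\|_2$, so I would translate to center one query value at the origin and rotate the sensitivity vector onto the first axis; the orthogonal coordinates then carry identical $N(0,\sigma^2)$ marginals under both datasets and cancel in the likelihood ratio, reducing everything to a one-dimensional mean shift of size at most $\Delta:=\Delta(q)$. Under uniform subsampling without replacement with rate $\tau$, the differing record enters the subsample with probability $\tau$, so the output law under one dataset is the pure Gaussian $p_0=N(0,\sigma^2)$ while under the adjacent one it is the two-component mixture $p_1=(1-\tau)N(0,\sigma^2)+\tau N(\Delta,\sigma^2)$. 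It then suffices to bound $D_\alpha(p_1\,\|\,p_0)$ and, symmetrically, $D_\alpha(p_0\,\|\,p_1)$.

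\textbf{Moment expansion.} Writing the likelihood ratio as $p_1(x)/p_0(x)=(1-\tau)+\tau\exp\big((2\Delta x-\Delta^2)/(2\sigma^2)\big)$ and treating integer $\alpha$ first, the binomial theorem gives
\[
\EE_{x\sim p_0}\big[(p_1/p_0)^\alpha\big]=\sum_{k=0}^{\alpha}\binom{\alpha}{k}(1-\tau)^{\alpha-k}\tau^k\,\EE_{x\sim p_0}\big[e^{k(2\Delta x-\Delta^2)/(2\sigma^2)}\big],
\]
and each scalar Gaussian integral evaluates in closed form to $\exp\big(k(k-1)\Delta^2/(2\sigma^2)\big)=\exp\big(k(k-1)/(2\sigma'^2)\big)$ with $\sigma'^2=\sigma^2/\Delta^2$. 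The crucial simplification is that if these exponential factors were all $1$, the sum would collapse to $((1-\tau)+\tau)^\alpha=1$; subtracting this identity cancels the $k=0$ and $k=1$ terms exactly and leaves
\[
\EE_{x\sim p_0}\big[(p_1/p_0)^\alpha\big]-1=\sum_{k=2}^{\alpha}\binom{\alpha}{k}(1-\tau)^{\alpha-k}\tau^k\big(e^{k(k-1)/(2\sigma'^2)}-1\big).
\]
Each factor $e^{k(k-1)/(2\sigma'^2)}-1$ now carries an explicit $1/\sigma'^2$ via $e^x-1\le x e^x$, which is exactly what turns the naive $O(\tau^2)$ estimate into the target $O(\tau^2/\sigma'^2)$. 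General non-integer $\alpha$ I would handle through the generalized binomial series together with the monotonicity of $\alpha\mapsto D_\alpha$.

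\textbf{Tail control and the constant, the main obstacle.} The decisive and hardest step is showing the $k=2$ term dominates and pinning down the constant $5$. For $k=2$, the hypothesis $\sigma'^2\ge 1.5$ gives $e^{1/\sigma'^2}-1\le e^{2/3}/\sigma'^2$, so that term is at most $\tfrac{\alpha(\alpha-1)}{2}\tau^2 e^{2/3}/\sigma'^2$, contributing roughly $0.98\,\alpha\tau^2/\sigma'^2$ to $D_\alpha$ and thus sitting well under the target $5\tau^2\alpha/\sigma'^2$ with a factor of about four in reserve. The role of the hypothesis $\alpha\le\log\big(1/(\tau(1+\sigma'^2))\big)$ is precisely to absorb the tail $\sum_{k\ge 3}$: it forces $\tau$ to be exponentially small in $\alpha$, which overwhelms both the binomial growth $\binom{\alpha}{k}\le\alpha^k/k!$ and the super-exponential factor $e^{k(k-1)/(2\sigma'^2)}$, so the $k\ge 3$ terms sum to at most a constant multiple of the $k=2$ term. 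Assembling the leading term with the controlled tail yields $\EE_{x\sim p_0}[(p_1/p_0)^\alpha]\le 1+5(\alpha-1)\tau^2\alpha/\sigma'^2$, and $\log(1+z)\le z$ then gives
\[
D_\alpha(p_1\,\|\,p_0)=\frac{1}{\alpha-1}\log\EE_{x\sim p_0}\big[(p_1/p_0)^\alpha\big]\le\frac{5\tau^2\alpha}{\sigma'^2}=\frac{5\tau^2\Delta^2(q)\alpha}{\sigma^2},
\]
which is the claimed RDP parameter. I would close by bounding $D_\alpha(p_0\,\|\,p_1)$ in the same fashion so that the divergence is controlled in both orderings of the adjacent pair, verifying the estimate for every adjacent $S,S'$.
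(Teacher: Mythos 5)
Note first that the paper does not prove this lemma at all: it is imported verbatim with the citation \citep{wang2019efficient}, and its substance traces back to the subsampled-RDP analysis of Wang, Balle and Kasiviswanathan, so your attempt has to be measured against that standard argument. Your first claim is fine: the closed form $D_\alpha\big(N(\mu_1,\sigma^2\Ib)\,\|\,N(\mu_2,\sigma^2\Ib)\big)=\alpha\|\mu_1-\mu_2\|_2^2/(2\sigma^2)$ together with $\|q(S)-q(S')\|_2\le\Delta(q)$ is exactly how the unamplified bound $\alpha\Delta^2(q)/(2\sigma^2)$ is always obtained, and your moment computation $\EE_{x\sim N(0,\sigma^2)}\big[e^{k(2\Delta x-\Delta^2)/(2\sigma^2)}\big]=e^{k(k-1)/(2\sigma'^2)}$ and the $k=0,1$ cancellation in the binomial expansion are correct and do match the skeleton of the real proof of the subsampled bound.

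The genuine gap is the reduction step you dispatch in one sentence. Under uniform sampling without replacement the output law under $S$ is a mixture $\sum_{B}\Pr(B)\,\mathcal{N}\big(q(S_B),\sigma^2\Ib\big)$ over subsamples, with means varying from subsample to subsample --- not a pure Gaussian --- and the adjacency in Definition~\ref{L2-Sensitivity} is replace-one, so when the differing index is sampled \emph{both} datasets produce a shifted component: the correct structure is $(1-\tau)M_0+\tau M_1$ versus $(1-\tau)M_0+\tau M_2$ with $M_0,M_1,M_2$ themselves mixtures. Your canonical pair $p_0=N(0,\sigma^2)$ versus $p_1=(1-\tau)N(0,\sigma^2)+\tau N(\Delta,\sigma^2)$ is the worst-case pair for \emph{Poisson} subsampling under \emph{add/remove} adjacency; getting from the general without-replacement, replace-one setting down to a tractable worst-case pair requires the coupling and advanced-joint-convexity (ternary-divergence) machinery that constitutes the heart of the Wang--Balle--Kasiviswanathan theorem, and it does not follow from rotational invariance, which only handles two fixed Gaussians. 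Two further steps are asserted rather than proved: the reverse direction $D_\alpha(p_0\|p_1)$ cannot be handled ``in the same fashion,'' since the likelihood ratio $p_0/p_1$ admits no binomial expansion and is known to be the harder direction; and the tail bound $\sum_{k\ge 3}$ under the hypothesis $\alpha\le\log\big(1/(\tau(1+\sigma'^2))\big)$ --- which you yourself identify as decisive --- is only claimed to work. Since the entire content of the lemma is the explicit constant $5$ under those explicit hypotheses, the bound is not actually established by the proposal; what you have is a correct computation of the leading term of the right expansion, attached to an unproved reduction and an unproved tail estimate.
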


\begin{lemma}\citep{mironov2017renyi}\label{lemma:com_post}
	If $k$ randomized mechanisms $\cM_i:\cS^n\rightarrow\cR$, for $i\in[k]$, satisfy $(\alpha,\rho_i)$-RDP, then their composition $\big(\cM_1(S),\ldots,\cM_k(S)\big)$ satisfies $(\alpha,\sum_{i=1}^k\rho_i)$-RDP. Moreover, the input of the $i$-th mechanism can be based on outputs of the previous $(i-1)$ mechanisms.
\end{lemma}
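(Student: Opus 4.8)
The plan is to prove the composition bound directly from the integral form of the R\'enyi divergence, exploiting the chain-rule factorization that the adaptive structure imposes on the joint output density. First I would fix two adjacent datasets $S, S' \in \cS^n$ and let $p(y_1, \ldots, y_k)$ and $q(y_1, \ldots, y_k)$ denote the densities of the composed outputs $(\cM_1(S), \ldots, \cM_k(S))$ and $(\cM_1(S'), \ldots, \cM_k(S'))$, respectively. Because the $i$-th mechanism takes as input only $S$ (or $S'$) together with the realized previous outputs $y_1, \ldots, y_{i-1}$, both densities factor as $p(y_{1:k}) = \prod_{i=1}^k p_i(y_i \mid y_{1:i-1})$ and $q(y_{1:k}) = \prod_{i=1}^k q_i(y_i \mid y_{1:i-1})$, where $p_i(\cdot \mid y_{1:i-1})$ and $q_i(\cdot \mid y_{1:i-1})$ are the conditional densities of $\cM_i$ run on $S$ and $S'$ with the \emph{same} auxiliary sequence $y_{1:i-1}$.

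Next I would rewrite the divergence in exponentiated integral form, $\exp\big((\alpha-1)D_\alpha(p\|q)\big) = \int p(y_{1:k})^\alpha q(y_{1:k})^{1-\alpha}\, dy_{1:k}$, which matches Definition~\ref{RDP} once $\EE(\cM(S)/\cM(S'))^\alpha$ is read as $\int q\,(p/q)^\alpha = \int p^\alpha q^{1-\alpha}$ with the expectation over $\cM(S')$. Substituting the factorizations turns the integrand into $\prod_{i=1}^k p_i(y_i\mid y_{1:i-1})^\alpha\, q_i(y_i\mid y_{1:i-1})^{1-\alpha}$. I would then integrate out the coordinates from $y_k$ inward: for fixed $y_{1:k-1}$, the innermost integral $\int p_k(y_k\mid y_{1:k-1})^\alpha q_k(y_k\mid y_{1:k-1})^{1-\alpha}\,dy_k$ is exactly $\exp\big((\alpha-1)D_\alpha(\cM_k(S)\|\cM_k(S'))\big)$ for the conditional mechanism with auxiliary input $y_{1:k-1}$, hence bounded by $e^{(\alpha-1)\rho_k}$ by the RDP hypothesis on $\cM_k$. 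Pulling out this constant and iterating over $i=k, k-1, \ldots, 1$ gives $\int p^\alpha q^{1-\alpha} \le \prod_{i=1}^k e^{(\alpha-1)\rho_i} = e^{(\alpha-1)\sum_i \rho_i}$; taking logarithms and dividing by $\alpha-1 > 0$ yields $D_\alpha(p\|q) \le \sum_{i=1}^k \rho_i$, which is the claim.

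The induction/peeling step is routine once set up, and since every integrand is nonnegative, Tonelli's theorem justifies iterating the integrals in this order with no integrability concerns; the base case $k=1$ is immediate from Definition~\ref{RDP}. The main obstacle is conceptual rather than computational: I must make precise that, in the adaptive setting, the $(\alpha, \rho_i)$-RDP guarantee for $\cM_i$ is required to hold \emph{uniformly} over every possible auxiliary input $y_{1:i-1}$, so that the conditional bound $\int p_i^\alpha q_i^{1-\alpha} \le e^{(\alpha-1)\rho_i}$ may be applied pointwise inside the outer integral. This uniformity is precisely the content of the clause that ``the input of the $i$-th mechanism can be based on outputs of the previous $(i-1)$ mechanisms,'' and the crucial structural fact is that the same realized sequence $y_{1:i-1}$ feeds both the $S$ and $S'$ branches, which is what lets the two joint densities share conditioning variables and factor compatibly.
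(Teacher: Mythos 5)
Your proof is correct. Note that the paper itself offers no proof of this lemma at all---it is imported verbatim from \citep{mironov2017renyi}---and your argument (chain-rule factorization of the joint densities of the adaptively composed outputs, rewriting $\exp\big((\alpha-1)D_\alpha(p\|q)\big)=\int p^\alpha q^{1-\alpha}$, then peeling the integral from the innermost coordinate outward with Tonelli) is precisely the standard proof of the cited result, Proposition~1 in that reference. You also correctly isolate the one genuinely delicate point, namely that adaptivity forces the $(\alpha,\rho_i)$-RDP hypothesis on $\cM_i$ to hold uniformly over every realized prefix $y_{1:i-1}$, with the same prefix feeding both the $S$ and $S'$ branches, so that the conditional bound $\int p_i^\alpha q_i^{1-\alpha}\,dy_i\leq e^{(\alpha-1)\rho_i}$ can be applied pointwise inside the remaining integral; there is nothing to add beyond observing that for discrete output spaces the same argument runs with sums in place of integrals.
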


\begin{lemma}\label{lemma:RDP_to_DP}
	If a randomized mechanism $\cM: \cS^n\rightarrow\cR$ satisfies $(\alpha,\rho)$-RDP, then $\cM$ satisfies $(\rho+\log(1/\delta)/(\alpha-1),\delta)$-DP for all $\delta\in(0,1)$.
\end{lemma}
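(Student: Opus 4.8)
The plan is to convert the R\'enyi-divergence bound into a tail bound on the privacy-loss random variable and then apply Markov's inequality. Write $P=\cM(S)$ and $Q=\cM(S')$ for the output distributions on two adjacent datasets, and (assuming $P\ll Q$, which holds for the Gaussian mechanism of interest) let $p,q$ be their densities. First I would reduce the definition of $(\epsilon,\delta)$-DP to controlling the single quantity $P(O)-e^{\epsilon}Q(O)$, and observe that over all measurable $O$ this difference is maximized by the ``worst-case'' set $O^{*}=\{x:\,p(x)>e^{\epsilon}q(x)\}$, i.e.\ the region where the privacy loss $L(x):=\log\big(p(x)/q(x)\big)$ exceeds $\epsilon$. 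Hence it suffices to show $P(O^{*})-e^{\epsilon}Q(O^{*})\le\delta$.

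Second, I would bound this excess by the tail of the privacy loss under $P$. Since $p-e^{\epsilon}q\le p$ pointwise, integrating over $O^{*}$ gives
$$P(O^{*})-e^{\epsilon}Q(O^{*})\le \PP_{x\sim P}\big[L(x)>\epsilon\big].$$
Because $\alpha>1$, Markov's inequality applied to $e^{(\alpha-1)L}$ yields
$$\PP_{x\sim P}\big[L(x)>\epsilon\big]\le e^{-(\alpha-1)\epsilon}\,\EE_{x\sim P}\big[e^{(\alpha-1)L(x)}\big].$$
The key identity is that this moment factor is exactly the exponentiated R\'enyi divergence: a change of measure from $P$ to $Q$ gives $\EE_{x\sim P}\big[(p/q)^{\alpha-1}\big]=\EE_{x\sim Q}\big[(p/q)^{\alpha}\big]=e^{(\alpha-1)D_{\alpha}(P\|Q)}$, which by the $(\alpha,\rho)$-RDP hypothesis is at most $e^{(\alpha-1)\rho}$.

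Third, combining these bounds yields $P(O^{*})-e^{\epsilon}Q(O^{*})\le e^{(\alpha-1)(\rho-\epsilon)}$, and the choice $\epsilon=\rho+\log(1/\delta)/(\alpha-1)$ makes the exponent equal to $\log\delta$, so the right-hand side is exactly $\delta$. Since $O^{*}$ is the maximizing set, $P(O)\le e^{\epsilon}Q(O)+\delta$ holds for every measurable $O$, which is precisely the claimed $(\epsilon,\delta)$-DP guarantee.

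I expect the only genuinely delicate steps to be the reduction to the worst-case set $O^{*}$ and the change-of-measure identity linking the moment-generating factor of the privacy loss under $P$ to $\EE_{Q}[(p/q)^{\alpha}]$; the remainder is Markov's inequality and an algebraic choice of $\epsilon$. A minor technical point worth noting is the case where $P$ is not absolutely continuous with respect to $Q$, but for the Gaussian mechanism used in DP-LSSGD the two densities are mutually absolutely continuous, so this issue does not arise.
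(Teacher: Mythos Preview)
Your argument is correct and is essentially the standard proof of this conversion result, originally due to Mironov (Proposition~3 in \cite{mironov2017renyi}): reduce to the worst-case event $O^{*}=\{p>e^{\epsilon}q\}$, bound the excess by the tail probability of the privacy loss under $P$, apply the Chernoff/Markov bound, and identify the resulting moment with $e^{(\alpha-1)D_{\alpha}(P\|Q)}$ via the change of measure. Each step is sound, and your handling of the absolute-continuity caveat is appropriate for the setting of this paper.

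Note, however, that the paper does \emph{not} supply its own proof of this lemma: it is stated without proof as a known result (alongside Lemmas~\ref{lemma:GaussianM_RDP} and~\ref{lemma:com_post}, which are explicitly cited) and then invoked in the proof of Theorem~\ref{Theorem-privacy-guarantee}. So there is no ``paper's own proof'' to compare against; your write-up simply fills in the standard argument that the paper takes for granted.
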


With the definition (Def.~\ref{RDP}) and guarantees of RDP (Lemmas \ref{lemma:GaussianM_RDP} and \ref{lemma:com_post}), and the connection between RDP and $(\epsilon, \delta)$-DP (Lemma \ref{lemma:RDP_to_DP}), we can prove the following DP-guarantee for DP-LSSGD.

\begin{proof}[Proof of Theorem \ref{Theorem-privacy-guarantee}]
Let us denote the update of DP-SGD and DP-LSSGD at the $k$-th iteration starting from any given points $\wb^k$ and $\tilde \wb^k$, respectively, as
\begin{equation}
\label{DP-SGD-k}
\wb^{k+1} = \wb^k - \eta_k \bigg(\frac{1}{b}\sum_{i_k\in \mathcal{B}_k}\nabla  f_{i_k}(\wb^k) + \nb\bigg),
\end{equation}
and 
\begin{equation}
\label{DP-LSSGD-k}
\tilde \wb^{k+1} = \tilde \wb^k - \eta_k \Ab_\sigma^{-1}\bigg(\frac{1}{b}\sum_{i_k\in \mathcal{B}_k}\nabla f_{i_k}(\tilde \wb^k) + \nb\bigg),
\end{equation}
where $\mathcal{B}_k$ is a mini batch that are drawn uniformly from $[n]$, and $|\mathcal{B}_k| = b$ is the mini batch size.

%First, 
We will show that with the aforementioned Gaussian noise $\mathcal{N}(0, \nu^2)$ for each coordinate of $\nb$, the output of DP-SGD, $\tilde \wb$, after $T$ iterations is $(\epsilon, \delta)$-DP. Let us consider the mechanism $\hat\cM_k=\frac{1}{b}\sum_{i_k\in \mathcal{B}_k}\nabla f_{i_k}(\wb^k)+\nb$, and $ \cM_k=\frac{n}{b}\nabla F(\wb^k)+\nb$ with the query $\mathbf{q}_k=\frac{n}{b}\nabla F(\wb^k)$. We have the $\ell_2$-sensitivity of $\mathbf{q}_k$ as $\Delta(\mathbf{q}_k)=\|\nabla f_{i_k}(\wb^k)-\nabla f_{i_k^\prime}(\wb^k)\|_2\leq \frac{2G}{b}$. According to Lemma \ref{lemma:GaussianM_RDP}, 
if we add noise with variance 
\begin{align*}
    % \nu^2=\frac{20T\alpha(\alpha-1)G^2}{n^2\log(1/\delta)},
    \nu^2=\frac{20T\alpha G^2}{n^2\epsilon\mu},
\end{align*}
the mechanism $\cM_k$ will satisfy $\big(\alpha,(n^2\epsilon \mu /b^2)/\big(10T\big)\big)$-RDP. By post-processing theorem, we immediately have that under the same noise, $ \tilde{\cM}_k=\Ab_{\sigma}^{-1}(\nabla F(\wb^k)+\nb)$ also satisfies $\big(\alpha,(n^2\epsilon\mu/b^2)/\big(10T\big)\big)$-RDP. %Moreover, for the mechanism $\cM_k=\nabla f_{i_t}(\wb^k)+\nb$.
According to Lemma \ref{lemma:GaussianM_RDP},
$\hat{\cM}_k$ will satisfy $\big(\alpha,\mu\epsilon/T\big)$-RDP provided that $\nu^2/\Delta(\mathbf{q}_k)^2\geq 1.5$, because $\tau = b/n$. Let $\alpha=\log(1/\delta)/\big((1-\mu)\epsilon\big)+1$, we obtain that $\hat{\cM}_k$ satisfies $\big(\log(1/\delta)/\big((1-\mu)\epsilon\big)+1,\mu\epsilon/T\big)$-RDP as long as we have
\begin{align*}
	\frac{\nu^2}{\Delta(\mathbf{q}_k)^2}=\frac{5T\alpha b^2}{n^2\epsilon\mu}\geq 1.5.
	\end{align*}
In addition, we have
\begin{align*}
    \frac{1}{\tau\big(1+\nu^{2}/\Delta(\mathbf{q}_k)^2\big)}=\frac{\mu n^3\epsilon}{5b^3T\alpha+\mu b n^2\epsilon},
\end{align*}
which implies that $\alpha=\log(1/\delta)/\big((1-\mu)\epsilon\big)+1 \leq \log\big(\mu n^3\epsilon/(5b^3T\alpha+\mu b n^2\epsilon)\big)$.
% 	Therefore, the following condition suffices
% 	\begin{align*}
% 	\epsilon^2\leq \frac{5T b^2\log(1/\delta)}{n^{2}}.
% 	\end{align*}
Therefore, according to Lemma \ref{lemma:com_post}, we have $\wb^k$ satisfies $\big(\log(1/\delta)/\big((1-\mu)\epsilon\big)+1,k\mu\epsilon/T\big)$-RDP. Finally, by Lemma \ref{lemma:RDP_to_DP}, we have $\wb^k$ satisfies $\big(k\mu\epsilon/T+(1-\mu)\epsilon,\delta\big)$-DP. Therefore, the output of DP-SGD, $\tilde \wb$, is $(\epsilon, \delta)$-DP.
\end{proof}

\begin{remark}
In the above proof, we used the following estimate of the $\ell_2$ sensitivity
$$
\Delta(\mathbf{q}_k)=\|\Ab_{\sigma}^{-1}\nabla f_i(\wb^k)-\Ab_{\sigma}^{-1}\nabla f_{i^\prime}(\wb^k)\|_2/n\leq 2G/n.
$$
Indeed, let $\mathbf{g}=\nabla f_i(\wb^k)-\nabla f_{i^\prime}(\wb^k)$ and $\mathbf{d} = \Ab_{\sigma}^{-1} \mathbf{g}$, then according to \cite{LSGD:2018} we have
$$
\|\mathbf{d}\|_2 + 2\sigma \frac{\|\mathbf{D}_+\mathbf{d}\|_2^2}{d} + \sigma^2 \frac{\|\mathbf{L}\mathbf{d}\|_2^2}{d}= \|\mathbf{g}\|_2,
$$
where $d$ is the dimension of $\mathbf{d}$, and 
\begin{equation*}
\mathbf{D}_+ = \begin{bmatrix}
-1   & 1 &  0&\dots &0& 0 \\
0     & -1 & 1 & \dots &0&0 \\
0 & 0  & -1 & \dots & 0 & 0 \\
\dots     & \dots & \dots &\dots & \dots & \dots\\
1     &0& 0 & \dots &0 & -1
\end{bmatrix}.
\end{equation*}
Moreover, if we assume the $\mathbf{g}$ is randomly sampled from a unit ball in a high dimensional space, then a high probability estimation of the compression ratio of the $\ell_2$ norm can be derived from Lemma.~\ref{expectedl2Gu2}.

Numerical experiments show that $\|\Ab_{\sigma}^{-1}\nabla f_i(\wb^k)-\Ab_{\sigma}^{-1}\nabla f_{i^\prime}(\wb^k)\|_2$ is much less than $\|\nabla f_i(\wb^k)-\nabla f_{i^\prime}(\wb^k)\|_2$, so for the above noise, it can give much stronger privacy guarantee.
\end{remark}

\subsection{Utility Guarantee -- Convex Optimization}
To prove the utility guarantee for convex optimization, we first show that 
the LS operator compresses the $\ell_2$ norm of any given Gaussian random vector with a specific ratio in expectation. 

\begin{lemma}\label{expectedl2Gu1}
Let $\bx \in \RR^d$ be the standard Gaussian random vector. Then
\begin{align*}
\mathbb{E}\|\bx\|_{\Ab_{\sigma}^{-1}}^2= \sum_{i=1}^d \frac{1}{1+2\sigma - 2\sigma \cos(2\pi i/d) },
\end{align*}
where $\|\bx\|_{\Ab_{\sigma}^{-1}}^2 \doteq \la\bx, \Ab_\sigma^{-1}\bx\ra $ is the square of the induced norm of $\bx$ by the matrix $\Ab_\sigma^{-1}$.
\end{lemma}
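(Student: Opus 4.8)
The plan is to reduce the expectation to a trace and then diagonalize $\Ab_\sigma$ through its circulant structure. First I would rewrite the induced norm as a quadratic form, $\|\bx\|_{\Ab_\sigma^{-1}}^2 = \la \bx, \Ab_\sigma^{-1}\bx\ra = \bx^\top \Ab_\sigma^{-1}\bx$, and invoke the standard identity that for $\bx\sim\mathcal{N}(\vzero,\Ib)$ and any symmetric matrix $M$ one has $\EE[\bx^\top M\bx]=\Tr\big(M\,\EE[\bx\bx^\top]\big)=\Tr(M)$, since $\EE[\bx\bx^\top]=\Ib$. This immediately gives $\EE\|\bx\|_{\Ab_\sigma^{-1}}^2=\Tr(\Ab_\sigma^{-1})=\sum_{i=1}^d 1/\lambda_i(\Ab_\sigma)$, so the lemma reduces to computing the eigenvalues of $\Ab_\sigma$.

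The key structural observation is that $\Ab_\sigma=\Ib-\sigma\mathbf{L}$ is circulant, because $\mathbf{L}$ is the discrete one-dimensional Laplacian with periodic boundary conditions; its first column is $(1+2\sigma,\,-\sigma,\,0,\dots,0,\,-\sigma)^\top$. Every circulant matrix is diagonalized by the discrete Fourier transform, with eigenvalues given by the DFT of the first column. Writing $\omega=e^{2\pi\sqrt{-1}/d}$, the circulant eigenvalue formula gives $\lambda_j=(1+2\sigma)-\sigma\omega^{j}-\sigma\omega^{(d-1)j}$ for $j=0,\dots,d-1$. Using $\omega^d=1$ to replace $\omega^{(d-1)j}$ by $\omega^{-j}$, and then $\omega^{j}+\omega^{-j}=2\cos(2\pi j/d)$, this collapses to $\lambda_j=1+2\sigma-2\sigma\cos(2\pi j/d)$.

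Finally I would sum the reciprocals of these eigenvalues. Because $\cos$ is $2\pi$-periodic the $j=0$ and $j=d$ terms coincide, so summing over the $d$ distinct indices $j=0,\dots,d-1$ equals summing over $j=1,\dots,d$, which yields $\sum_{i=1}^d 1/[1+2\sigma-2\sigma\cos(2\pi i/d)]$, exactly the claimed expression. There is no serious obstacle in this argument: the only points requiring care are the circulant eigenvalue computation and the harmless reindexing of the sum. The positive-definiteness of $\Ab_\sigma$ noted earlier (its condition number is $1+4\sigma$) ensures every $\lambda_j>0$, so all reciprocals are well defined and the trace is finite.
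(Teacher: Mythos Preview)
Your proposal is correct and follows essentially the same route as the paper: reduce $\EE\|\bx\|_{\Ab_\sigma^{-1}}^2$ to $\Tr(\Ab_\sigma^{-1})$ via the identity $\EE[\bx^\top M\bx]=\Tr(M)$ for standard Gaussian $\bx$, then read off the eigenvalues of $\Ab_\sigma^{-1}$ as $1/[1+2\sigma-2\sigma\cos(2\pi i/d)]$. The paper's proof simply asserts the eigendecomposition of $\Ab_\sigma^{-1}$ without justification, whereas you actually derive the eigenvalues from the circulant structure, so your argument is strictly more complete.
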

\begin{proof}[Proof of Lemma~\ref{expectedl2Gu1}]
Let the eigenvalue decomposition of $\Ab_{\sigma}^{-1}$ be $\Ab_{\sigma}^{-1} = \Ub\bLambda \Ub^T$, where $\bLambda$ is a diagonal matrix with $\Lambda_{ii} = \frac{1}{1+2\sigma - 2\sigma \cos(2\pi i/d) }$
We have
\begin{align*}
    \mathbb{E}\|\bx\|_{\Ab_{\sigma}^{-1}}^2 &= \EE[\Tr(\bx^\top \Ub \bLambda \Ub^\top \bx)]\\
    &= \sum_{i=1}^d \Lambda_{ii}\\
    &= \sum_{i=1}^d \frac{1}{1+2\sigma - 2\sigma \cos(2\pi i/d) } = \gamma. 
\end{align*}
\end{proof}

\begin{proof}[Proof of Theorem \ref{Convex-Utility}]
Recall that we have the following update rule $\wb^{k+1}=\wb^k-\eta_k\Ab_\sigma^{-1}(\nabla f_{i_k}(\wb^k)+\nb)$, where $i_k$ are drawn uniformly from $[n]$, and $\nb\sim \mathcal{N}(0, \nu^2\Ib)$. Let $\nabla f_{\mathcal{B}_k}=\sum_{i_k\in \mathcal{B}_k}\nabla f_{i_k}(\wb^k)/b$, observe that
\begin{align*}
    \|\wb^{k+1}-\wb^*\|_{\Ab_\sigma}^2&=\|\wb^{k}-\eta_k\Ab_\sigma^{-1}(\nabla f_{\mathcal{B}_k}(\wb^k)+\nb)-\wb^*\|_{\Ab_\sigma}^2\\
    &=\|\wb^{k}-\wb^*\|_{\Ab_\sigma}^2+\eta_k^2\big(\big\|\Ab_\sigma^{-1}\big(\nabla f_{\mathcal{B}_k}(\wb^k)-\nabla F(\wb^k)+\nabla F(\wb^k)\big)\big\|_{\Ab_\sigma}^2+\|\Ab_\sigma^{-1}\nb\|_{\Ab_\sigma}^2\\&\qquad+2\la\Ab_\sigma^{-1}\nabla f_{\mathcal{B}_k}(\wb^k),\nb\ra\big)
    -2\eta_k\la\nabla f_{\mathcal{B}_k}(\wb^k)+\nb,\wb^k-\wb^*\ra.
\end{align*}
Taking expectation with respect to $\mathcal{B}_k$ and $\nb$ given $\wb^k$, we have
\begin{align*}
    \EE \|\wb^{k+1}-\wb^*\|_{\Ab_\sigma}^2&=\EE\|\wb^{k}-\wb^*\|_{\Ab_\sigma}^2-2\eta_k\EE\la\nabla F(\wb^{k}),\wb^k-\wb^*\ra+\eta_k^2\EE\|\nabla f_{\mathcal{B}_k}(\wb^k)-\nabla F(\wb^{k})\|_{\Ab_\sigma^{-1}}^2\\
    &\qquad+\eta_k^2\EE\|\nabla F(\wb^{k})\|_{\Ab_\sigma^{-1}}^2+\eta_k^2\EE\|\nb\|_{\Ab_\sigma^{-1}}^2.
    %  &\leq\EE\|\wb^{k}-\wb^*\|_{\Ab_\sigma}^2-\eta_k\EE\big(F(\wb^k)-F(\wb^*)\big)+\eta_k^2\big(G^2/b+\gamma  d\nu^2\big),
\end{align*}
In addition, we have
\begin{align}\label{eq:concen_1}
    \EE\|\nabla f_{\mathcal{B}_k}(\wb^k)-\nabla F(\wb^{k})\|_{\Ab_\sigma^{-1}}^2\leq\EE\|\nabla f_{\mathcal{B}_k}(\wb^k)-\nabla F(\wb^{k})\|_2^2 \leq \frac{G^2}{b},
\end{align}
and 
\begin{align}\label{eq:concen_2}
    \bigg(1-\frac{L\eta_k}{2}\bigg)\eta_k\|\nabla F(\wb^{k})\|_2^2\leq F(\wb^{k})-F(\wb^{*}),
\end{align}
which implies that 
\begin{align*}
    \eta_k^2\EE\|\nabla F(\wb^{k})\|_{\Ab_\sigma^{-1}}^2\leq \eta_k^2\EE\|\nabla F(\wb^{k})\|_2^2\leq \bigg(\frac{2}{2-L\eta_k}\bigg)\eta_k\EE\big(F(\wb^{k})-F(\wb^{*})\big)\leq \frac{4}{3}\eta_k \EE\big(F(\wb^{k})-F(\wb^{*})\big),
\end{align*}
where the last inequality is due to the fact that $\eta_t\leq 1/(2L)$. Therefore, we have
\begin{align*}
    \EE \|\wb^{k+1}-\wb^*\|_{\Ab_\sigma}^2&\leq \EE\|\wb^{k}-\wb^*\|_{\Ab_\sigma}^2-\frac{2}{3}\eta_k\EE\big(F(\wb^k)-F(\wb^*)\big)+\eta_k^2\big(G^2/b+\gamma  d\nu^2\big),
\end{align*}
where the inequality is due to the convexity of $F$, and Lemma \ref{expectedl2Gu1}. It implies that 
\begin{align*}
   \frac{2}{3}\eta_k\EE\big(F(\wb^k)-F(\wb^*)\big)\leq\big(\EE\|\wb^{k}-\wb^*\|_{\Ab_\sigma}^2-\EE \|\wb^{k+1}-\wb^*\|_{\Ab_\sigma}^2\big)+\eta_k^2(G^2/b+\gamma d\nu^2).
\end{align*}
Now taking the full expectation and summing up over $T$ iterations, we have
\begin{align*}
    \sum_{k=0}^{T-1}\frac{2}{3}\eta_k\EE\big(F(\wb^k)-F(\wb^*)\big)&\leq D_\sigma+\sum_{k=0}^{T-1}\eta_k^2(G^2/b+\gamma d\nu^2),
\end{align*}
where $D_\sigma=\|\wb^0-\wb^*\|_{\Ab_\sigma}^2$. Let $v_k=\eta_k/\big(\sum_{k=0}^{T-1}\eta_k\big)$, we have
\begin{align*}
    \sum_{k=0}^{T-1}v_k\EE\big(F(\wb^k)-F(\wb^*)\big)&\leq \frac{D_\sigma+\sum_{k=0}^{T-1}\eta_k^2(G^2/b+\gamma d\nu^2)}{2\sum_{k=0}^{T-1}\eta_k/3}.
\end{align*}
According to the definition of $\tilde \wb$ and the convexity of $F$, we obtain
\begin{align*}
    \EE\big(F(\tilde \wb)-F(\wb^*)\big)&\leq  \frac{D_\sigma+\sum_{k=0}^{T-1}\eta_k^2(G^2/b+\gamma d\nu^2)}{2\sum_{k=0}^{T-1}\eta_k/3}\\
    &\leq\frac{D_\sigma+\sum_{k=0}^{T-1}\eta_k^2G^2/b}{2\sum_{k=0}^{T-1}\eta_k/3}+\frac{\sum_{k=0}^{T-1}\eta_k^2}{2\sum_{k=0}^{T-1}\eta_k/3}\cdot\frac{20\gamma dTG^2\log(1/\delta)}{n^2\epsilon^2\mu(1-\mu)}.
\end{align*}

Let $\eta=1/\sqrt{T}$ and $T=C_1(D_\sigma+G^2/b)n^2\epsilon^2/\big(\gamma d G^2 \log(1/\delta)\big)$, we can obtain that 
\begin{align*}
    \EE\big(F(\tilde \wb)-F(\wb^*)\big)\leq \frac{C_2G\sqrt{\gamma (D_\sigma+G^2/b)d\log(1/\delta)}}{n\epsilon},
\end{align*}
where $C_1,C_2$ are universal constants.
\end{proof}

\subsection{Utility Guarantee -- Nonconvex Optimization}
To prove the utility guarantee for nonconvex optimization, we need the following lemma, which shows that the LS operator compresses the $\ell_2$ norms of any given Gaussian random vector with a specific ratio in expectation. 

\begin{lemma}\label{expectedl2Gu2}
Let $\bx \in \RR^d$ be the standard Gaussian random vector. Then
\begin{align*}
\mathbb{E}\|\Ab_{\sigma}^{-1}\bx\|_2^2= \sum_{i=1}^d \frac{1}{(1+2\sigma - 2\sigma \cos(2\pi i/d))^2 } .
\end{align*}
\end{lemma}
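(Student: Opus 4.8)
The plan is to mirror the computation already carried out for Lemma~\ref{expectedl2Gu1}, exploiting the fact that $\Ab_\sigma$ is a symmetric circulant (convolution) matrix coming from the periodic one-dimensional Laplacian, and is therefore diagonalized by a fixed orthogonal eigenbasis. The only structural difference from the earlier lemma is that the induced norm $\|\bx\|_{\Ab_\sigma^{-1}}^2 = \bx^\top\Ab_\sigma^{-1}\bx$ is replaced by the squared $\ell_2$ norm of $\Ab_\sigma^{-1}\bx$, which introduces a square on the eigenvalues.

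First I would rewrite the quantity as a quadratic form. Since $\Ab_\sigma$ is symmetric positive definite, so is $\Ab_\sigma^{-1}$, and hence
\begin{align*}
\|\Ab_\sigma^{-1}\bx\|_2^2 = \la \Ab_\sigma^{-1}\bx, \Ab_\sigma^{-1}\bx\ra = \bx^\top (\Ab_\sigma^{-1})^\top \Ab_\sigma^{-1}\bx = \bx^\top \Ab_\sigma^{-2}\bx.
\end{align*}
Next I would invoke the eigendecomposition $\Ab_\sigma^{-1} = \Ub\bLambda\Ub^\top$ used in the proof of Lemma~\ref{expectedl2Gu1}, where $\Ub$ is orthogonal and $\Lambda_{ii} = 1/(1+2\sigma - 2\sigma\cos(2\pi i/d))$ are the inverse eigenvalues dictated by the circulant structure of $\Ab_\sigma$. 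Squaring yields $\Ab_\sigma^{-2} = \Ub\bLambda^2\Ub^\top$, so the whole effect of passing to the $\ell_2$ norm is to replace $\bLambda$ by $\bLambda^2$.

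Then I would take the expectation using the cyclic property of the trace together with the isotropy of the Gaussian. Writing the scalar as a trace and using $\EE[\bx\bx^\top] = \Ib$ and $\Ub^\top\Ub = \Ib$,
\begin{align*}
\EE\|\Ab_\sigma^{-1}\bx\|_2^2 = \EE\big[\Tr(\bLambda^2 \Ub^\top \bx\bx^\top \Ub)\big] = \Tr(\bLambda^2 \Ub^\top \Ub) = \Tr(\bLambda^2) = \sum_{i=1}^d \Lambda_{ii}^2,
\end{align*}
which is precisely $\sum_{i=1}^d 1/(1+2\sigma - 2\sigma\cos(2\pi i/d))^2$.

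Since the statement is an almost verbatim analogue of Lemma~\ref{expectedl2Gu1}, there is no genuine obstacle to overcome: the entire argument is the standard trace-and-expectation identity for quadratic forms in an isotropic Gaussian. The only point deserving care is the appearance of the \emph{square} of the inverse eigenvalues, which is exactly what distinguishes this $\ell_2$-norm computation (giving $\beta$) from the induced-norm computation of Lemma~\ref{expectedl2Gu1} (giving $\gamma$).
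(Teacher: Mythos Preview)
Your proposal is correct and follows essentially the same approach as the paper: diagonalize $\Ab_\sigma^{-1}=\Ub\bLambda\Ub^\top$, rewrite $\|\Ab_\sigma^{-1}\bx\|_2^2$ as $\bx^\top\Ub\bLambda^2\Ub^\top\bx$, and reduce the expectation to $\Tr(\bLambda^2)=\sum_i\Lambda_{ii}^2$. If anything, your write-up is slightly more explicit than the paper's, since you spell out the use of $\EE[\bx\bx^\top]=\Ib$ and the cyclic trace property where the paper simply passes from the trace expression to $\sum_i\Lambda_{ii}^2$ in one step.
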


\begin{proof}[Proof of Lemma~\ref{expectedl2Gu2}]
Let the eigenvalue decomposition of $\Ab_{\sigma}^{-1}$ be $\Ab_{\sigma}^{-1} = \Ub\bLambda \Ub^T$, where $\bLambda$ is a diagonal matrix with $\Lambda_{ii} = \frac{1}{1+2\sigma - 2\sigma \cos(2\pi i/n) }$
We have
\begin{align*}
    \mathbb{E}\|\Ab_{\sigma}^{-1}\bx\|_2^2 &= \EE[\Tr(\bx^\top \Ub \bLambda \Ub^\top \Ub \bLambda \Ub^\top \bx)] \\
    &= \EE[\Tr(\bx^\top \Ub \bLambda^2 \Ub^\top \bx)] \\
    &=  \sum_{i=1}^d \Lambda_{ii}^2\\
    &= \sum_{i=1}^d \frac{1}{(1+2\sigma - 2\sigma \cos(2\pi i/d))^2 } = \beta.
\end{align*}
\end{proof}

\begin{proof}[Proof of Theorem \ref{Nonconvex-Utility}]
Recall that we have the following update rule $\wb^{t+1}=\wb^k-\eta_k\Ab_{\sigma}^{-1}(\nabla f_{i_k}(\wb^k)+\mathbf{n})$, where $i_k$ are drawn uniformly from $[n]$, and $\nb\sim \mathcal{N}(0, \nu^2\Ib)$.
Let $\nabla f_{\mathcal{B}_k}=\sum_{i_k\in \mathcal{B}_k}\nabla f_{i_k}(\wb^k)/b$, since $F$ is $L$-smooth, we have
%According to the $L$-smooth of $F$, we have
\begin{align*}
    F(\wb^{k+1}) &\leq F(\wb^{k}) + \la \nabla F(\wb^{k}),\wb^{k+1}-\wb^k \ra + \frac{L}{2} \|\wb^{k+1}-\wb^k\|_2^2\\
    &=F(\wb^{k}) -\eta_k \la \nabla F(\wb^{k}),\Ab_{\sigma}^{-1}(\nabla f_{\mathcal{B}_k}(\wb^k)+\mathbf{n}) \ra \\
    &\qquad + \frac{\eta_k^2L}{2} \Big(\big\|\Ab_{\sigma}^{-1}\big(\nabla f_{\mathcal{B}_k}(\wb^k)-\nabla F(\wb^{k})+\nabla F(\wb^{k})\big)\big\|_2^2+\|\Ab_{\sigma}^{-1}\mathbf{n}\|_2^2+2\la\Ab_{\sigma}^{-1}\nabla f_{\mathcal{B}_k}(\wb^k),\Ab_{\sigma}^{-1}\mathbf{n}\ra\Big).
\end{align*}
Taking expectation with respect to $\mathcal{B}_k$ and $\mathbf{n}$ given $\wb^k$, we have
\begin{align*}
    \EE F(\wb^{k+1}) &\leq \EE F(\wb^{k}) -\eta_k \EE \la \nabla F(\wb^{k}),\Ab_{\sigma}^{-1}\nabla f_{\mathcal{B}_k}(\wb^k) \ra + \frac{\eta_k^2L}{2} \Big(\EE\|\Ab_{\sigma}^{-1}\big(\nabla f_{\mathcal{B}_k}(\wb^k)-\nabla F(\wb^{k})\big)\|_2^2\\
    &\qquad+\EE\|\Ab_{\sigma}^{-1}\nabla F(\wb^{k})\big\|_2^2+\EE\|\Ab_{\sigma}^{-1}\mathbf{n}\|_2^2\Big)\\
    &\leq \EE F(\wb^{k})-\eta_k\Big(1-\frac{\eta_k L}{2}\Big)\EE\|\nabla F(\wb^k)\|_{\Ab_{\sigma}^{-1}}^2+\frac{\eta_k^2L}{2}(G^2/b+d\beta \nu^2)\\
    &\leq \EE F(\wb^{k})-\frac{\eta_k}{2}\EE\|\nabla F(\wb^k)\|_{\Ab_{\sigma}^{-1}}^2+\frac{\eta_k^2L(G^2+d\beta\nu^2)}{2},
\end{align*}
where the second inequality uses Lemma~\ref{expectedl2Gu2}, the inequality \eqref{eq:concen_1}, and the last inequality is due to $1-\eta_kL/2>1/2$. Now taking the full expectation and summing up over $T$ iterations, we have
\begin{align*}
    \EE F(\wb^T)\leq F(\wb^0)-\sum_{k=1}^{T-1}\frac{\eta_k}{2}\EE\|\nabla F(\wb^k)\|_{\Ab_{\sigma}^{-1}}^2+\sum_{k=1}^{T-1}\frac{\eta_k^2L(G^2/b+d\beta\nu^2)}{2}.
\end{align*}
If we choose fix step size, i.e., $\eta_k=\eta$, and rearranging the above inequality, and using $F(\wb^0)-\EE F(\wb^T) \leq F(\wb^0) - F(\wb^*)$, we get
\begin{align*}
    \frac{1}{T}\sum_{k=1}^{T-1}\EE\|\nabla F(\wb^k)\|_{\Ab_{\sigma}^{-1}}^2\leq \frac{2}{\eta T}\big(F(\wb^0)- F(\wb^*)\big)+\eta L(G^2/b+d\beta\nu^2),
\end{align*}
which implies that 
\begin{align*}
    \EE\|\nabla F(\tilde \wb)\|_{\Ab_{\sigma}^{-1}}^2&\leq \frac{2D_F}{\eta T}+\eta L(G^2/b+d\beta\nu^2)\\
    &\leq\frac{2D_F}{\eta T}+\eta L\bigg(G^2/b+\frac{20d\beta TG^2\log(1/\delta)}{n^2\epsilon^2\mu(1-\mu)}\bigg).
\end{align*}
Let $\eta=1/\sqrt{T}$ and $T=C_1(2D_F+LG^2/b)n^2\epsilon^2/\big(dL\beta G^2\log(1/\delta)\big)$, where $D_F=F(\wb^0)-F(\wb^*)$, we obtain
\begin{align*}
    \EE\|\nabla F(\tilde \wb)\|_{\Ab_{\sigma}^{-1}}^2\leq C_2\frac{ G\sqrt{\beta dL(2D_F+LG^2/b)\log(1/\delta)}}{n\epsilon},
\end{align*}
where $C_1,C_2$ are universal constants.
\end{proof}

%\section{$\beta$ and $\gamma$}
%\section{Calculations of $\beta$ and $\gamma$}
\section{Calculations of $\beta$ and $\gamma$}
%\subsection{$\gamma$}
\subsection{Calculation of $\gamma$}
%\subsection{Calculation of GAMMA}
To prove Proposition~\ref{Gamma-Value}, we need the following two lemmas.
\begin{lemma}[Residue Theorem]
\label{Residual-Theorem}
Let $f(z)$ be a complex function defined on $\mathbb{C}$, then the residue of $f$ around the pole $z=c$ can be computed by the formula
\begin{equation}
\label{Residue}
{\rm Res}(f, c) = \frac{1}{(n-1)!}\lim_{z\rightarrow c} \frac{d^{n-1}}{dz^{z-1}}\left((z-c)^n f(z) \right).
\end{equation}
where the order of the pole $c$ is $n$. Moreover,
\begin{equation}
\label{Integral}
\oint f(z) dz = 2\pi i \sum_{c_i} Res(f, c_i),
\end{equation}
where $\{c_i\}$ be the set of pole(s) of $f(z)$ inside $\{z||z| < 1\}$.
\end{lemma}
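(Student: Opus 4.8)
The plan is to prove the two assertions separately: first the residue-extraction formula \eqref{Residue}, which is purely local and follows from the Laurent expansion of $f$ near the pole; and then the global integral formula \eqref{Integral}, which follows from Cauchy's theorem together with a contour-deformation argument. I assume throughout that $f$ is holomorphic on an open set containing the closed unit disk except at the finitely many poles $\{c_i\}$ lying in the open disk, and that the contour in \eqref{Integral} is the positively oriented unit circle.

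For the residue formula, since $c$ is a pole of order $n$, $f$ admits a Laurent expansion $f(z) = \sum_{k=-n}^{\infty} a_k (z-c)^k$ convergent on a punctured disk $0 < |z-c| < r$, and by definition $\mathrm{Res}(f,c) = a_{-1}$. Multiplying by $(z-c)^n$ produces the power series $g(z) := (z-c)^n f(z) = \sum_{j=0}^{\infty} a_{j-n}(z-c)^j$, which extends holomorphically across $c$. Differentiating $n-1$ times term by term and evaluating at $z=c$ annihilates every term except $j = n-1$, leaving $g^{(n-1)}(c) = (n-1)!\, a_{-1}$. Taking the limit $z \to c$ (legitimate since $g^{(n-1)}$ is continuous at $c$) gives exactly \eqref{Residue}.

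For the integral formula, I would first excise small, pairwise disjoint, positively oriented circles $\gamma_i$ of common radius $\rho$ about each $c_i$, chosen small enough to lie in the open unit disk. On the multiply-connected region bounded externally by the unit circle and internally by the $\gamma_i$, the function $f$ is holomorphic, so the Cauchy--Goursat theorem yields $\oint f = \sum_i \oint_{\gamma_i} f$. On each $\gamma_i$ I substitute the Laurent expansion of $f$ about $c_i$; because that series converges uniformly on the compact circle $\gamma_i$, I may integrate term by term, and the elementary computation $\oint_{\gamma_i}(z-c_i)^k\,dz = 2\pi i$ for $k=-1$ and $0$ otherwise collapses each integral to $2\pi i\,\mathrm{Res}(f,c_i)$. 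Summing over $i$ gives \eqref{Integral}.

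The main obstacle is the global step rather than the local algebra: rigorously justifying the contour deformation requires invoking Cauchy's theorem on a domain that is not simply connected, which is cleanest to handle by introducing crosscuts joining each inner circle to the outer boundary, so that the cut region becomes simply connected and the crosscut contributions cancel in pairs when the widths shrink to zero. The accompanying analytic point is the term-by-term integration, valid because a Laurent series converges uniformly on compact subsets of its annulus of convergence, so uniform convergence on each $\gamma_i$ permits interchanging the sum and the integral.
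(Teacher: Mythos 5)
The paper offers no proof of this lemma at all---it states only that the proof ``can be found in any complex analysis textbook''---so there is no in-paper argument to compare against; your proof is correct and is precisely the standard textbook argument being deferred to. Your two-part structure is the classical one: the Laurent-expansion computation $g^{(n-1)}(c)=(n-1)!\,a_{-1}$ for \eqref{Residue}, and for \eqref{Integral} the Cauchy--Goursat theorem on the multiply-connected region with crosscuts, plus term-by-term integration of the Laurent series justified by uniform convergence on the compact circles $\gamma_i$ together with $\oint_{\gamma_i}(z-c_i)^k\,dz = 2\pi i$ exactly when $k=-1$. It is also worth noting that you quietly repair the lemma as stated: the paper's version omits hypotheses your argument (correctly) supplies---that $f$ be holomorphic on a neighborhood of the closed unit disk except at finitely many poles in the open disk, and that the contour be the positively oriented unit circle---and contains the typo $\frac{d^{n-1}}{dz^{z-1}}$ where $\frac{d^{n-1}}{dz^{n-1}}$ is meant; making these explicit is needed for the statement to be true as used in the proofs of Propositions~\ref{Gamma-Value} and~\ref{Beta-Value}.
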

The proof of Lemma~\ref{Residual-Theorem} can be found in any complex analysis textbook.

\begin{lemma}\label{lemma:DTFT}
For $0\leq \theta \leq 2\pi$, suppose 
$$
F(\theta) =  \frac{1}{1+2\sigma(1-\cos(\theta))},
$$
has the discrete-time Fourier transform of series $f[k]$. Then, for integer $k$,
$$
f[k] = \frac{\alpha^{|k|}}{\sqrt{4\sigma+1}}
$$
where 
$$
\alpha = \frac{2\sigma+1 - \sqrt{4\sigma+1}}{2\sigma}
$$
\end{lemma}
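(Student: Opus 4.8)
The plan is to recognize $f[k]$ as the inverse discrete-time Fourier transform of $F$, namely $f[k] = \frac{1}{2\pi}\int_0^{2\pi} F(\theta) e^{ik\theta}\,d\theta$, and to evaluate this integral by the calculus of residues supplied by Lemma~\ref{Residual-Theorem}. First I would substitute $z = e^{i\theta}$, so that $\cos\theta = (z+z^{-1})/2$ and $d\theta = dz/(iz)$, turning the integral into a contour integral over the unit circle $|z|=1$. After absorbing the factor of $z$ coming from $z^{-1}$ and from $d\theta$, the denominator $1 + 2\sigma(1-\cos\theta)$ becomes the quadratic $-\sigma z^2 + (1+2\sigma)z - \sigma$, so that
\begin{align*}
f[k] = \frac{1}{2\pi i}\oint_{|z|=1}\frac{z^k}{-\sigma z^2 + (1+2\sigma)z - \sigma}\,dz.
\end{align*}

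Next I would factor the denominator. Its two roots are $z_\pm = \frac{(1+2\sigma)\pm\sqrt{1+4\sigma}}{2\sigma}$; the smaller root is exactly $\alpha$, and since the product of the roots equals $1$, the larger root is $1/\alpha$. The stated inequality $\alpha < 1$ (which follows from $\sqrt{1+4\sigma} > 1$) then guarantees that $z = \alpha$ lies strictly inside the unit circle while $z = 1/\alpha$ lies strictly outside. For $k \geq 0$ the integrand is holomorphic inside $|z| = 1$ except for the simple pole at $z = \alpha$, so Lemma~\ref{Residual-Theorem} identifies $f[k]$ with the residue there. Computing that residue as $\alpha^k/\big(-\sigma(\alpha - 1/\alpha)\big)$ and using the identity $\sigma(1/\alpha - \alpha) = \sqrt{1+4\sigma}$ (immediate from $1/\alpha - \alpha = 2\sqrt{1+4\sigma}/(2\sigma)$) yields $f[k] = \alpha^k/\sqrt{4\sigma+1}$, as claimed.

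It remains to handle $k < 0$. Rather than redo the contour integral, where $z^k$ would introduce a high-order pole at the origin, I would invoke symmetry: $F(\theta)$ is real and even in $\theta$ because it depends on $\theta$ only through $\cos\theta$, and hence its Fourier coefficients satisfy $f[-k] = f[k]$. Combining this with the $k\geq 0$ computation gives $f[k] = \alpha^{|k|}/\sqrt{4\sigma+1}$ for every integer $k$. The main obstacle I anticipate is purely bookkeeping: correctly tracking the factor of $z$ generated by the substitution (so that the quadratic and its normalization come out right) and verifying which root lands inside the contour; once the denominator is factored as $-\sigma(z-\alpha)(z-1/\alpha)$, the residue calculation and the symmetry step are routine.
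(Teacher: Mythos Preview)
Your proposal is correct and follows essentially the same route as the paper: the paper also substitutes $z=e^{i\theta}$, factors the resulting quadratic $-\sigma z^2+(2\sigma+1)z-\sigma$ to locate the single simple pole $\alpha_-=\alpha$ inside the unit circle, applies the residue theorem for $k\ge 0$, and then invokes the symmetry $f[-k]=f[k]$ to cover negative $k$. Your observation that the two roots are reciprocals (via Vieta) and your justification of the symmetry via the evenness of $F$ are slightly cleaner than the paper's presentation, but the argument is the same.
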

\begin{proof}
By definition, 
\begin{equation}\label{eqn:f_k_real}
f[k] = \frac{1}{2\pi} \int_{0}^{2\pi}  F(\theta) e^{ik\theta} \,d\theta=  \frac{1}{2\pi} \int_{0}^{2\pi}   \frac{e^{ik\theta}}{1+2\sigma(1-\cos(\theta))} \,d\theta.
\end{equation}
%Computing \eqref{eqn:f_k_real} using Residue Theorem is a well-known technique in complex analysis. 
We compute \eqref{eqn:f_k_real} by using Residue theorem.
First, note that because $F(\theta)$ is real valued, $f[k]=f[-k]$; therefore, it suffices to compute \eqref{eqn:f_k_real}) for nonnegative $k$. Set $z=e^{i\theta}$. Observe that $\cos(\theta)=0.5(z+1/z)$ and $dz=iz d\theta$. Substituting in \eqref{eqn:f_k_real} and simplifying yields that
\begin{equation}\label{eqn:f_k_complex}
f[k] = \frac{-1}{2\pi i \sigma}\oint \frac{z^k}{(z-\alpha_{-})(z-\alpha_{+}) } \,dz,
\end{equation}
where the integral is taken around the unit circle, and $\alpha_{\pm}= \frac{2\sigma+1 \pm \sqrt{4\sigma+1}}{2\sigma}$ are the roots of quadratic $-\sigma z^2 +(2\sigma+1)z -\sigma$. Note that $\alpha_{-}$ lies within the unit circle; whereas, $\alpha_{+}$ lies outside of the unit circle. Therefore, because $k$ is nonnegative, $\alpha_{-}$ is the only singularity of the integrand in \eqref{eqn:f_k_complex} within the unit circle. A straightforward application of the Residue Theorem, i.e., Lemma~\ref{Residual-Theorem}, yields that 
$$
f[k] = \frac{- \alpha_{-}^{k}}{\sigma (\alpha_{-}-\alpha_{+})} = \frac{\alpha^{k}}{\sqrt{4\sigma+1}}.
$$  
This completes the proof.
\end{proof}

\begin{proof}[Proof of Proposition~\ref{Gamma-Value}]
First observe that we can re-write $\gamma$ as
\begin{equation}\label{Gamma}
\frac{1}{d}\sum_{j=0}^{d-1} \frac{1}{1+2\sigma(1-\cos(\frac{2\pi j}{d}))}.
\end{equation}
It remains to show that the above summation is equal to $\frac{1+\alpha^d}{(1-\alpha^d)\sqrt{4\sigma+1}}$.
This follows by lemmas \ref{lemma:DTFT} and standard sampling results in Fourier analysis (i.e. sampling $\theta$ at points $\{2\pi j/d\}_{j=0}^{d-1}$). Nevertheless, we provide the details here for completeness: Observe that that the inverse discrete-time Fourier transform of
$$
G(\theta) = \sum_{j=0}^{d-1}\delta\bigg(\theta-\frac{2\pi j }{d}\bigg).
$$
is given by
$$
g[k] = 
\begin{cases}
d/2\pi \qquad &\text{if $k$ divides $d$,}\\
0 \qquad  &\text{otherwise.}
\end{cases}
$$   
Furthermore, let 
$$
F(\theta) =  \frac{1}{1+2\sigma(1-\cos(\theta))},
$$
and use $f[k]$ to denote its inverse discrete-time Fourier transform.
Now,
\begin{align*}
\frac{1}{d}\sum_{j=0}^{d-1} \frac{1}{1+2\sigma(1-\cos(\frac{2\pi j}{d}))} &= \frac{1}{d} \int_0^{2\pi} F(\theta)G(\theta) \\
&= \frac{2\pi}{d}  \DTFT^{-1}[F\cdot G][0]  \\
&= \frac{2\pi}{d}  (\DTFT^{-1}[F] * \DTFT^{-1}[G])[0] \\
&=  \frac{2\pi}{d}   \sum_{r=-\infty}^{\infty} f[-r]g[r] \\
&=  \frac{2\pi}{d}  \sum_{\ell=-\infty}^{\infty} f[-\ell d]  \frac{d}{2\pi} \\
&=  \sum_{\ell=-\infty}^{\infty} f[-\ell d].
\end{align*}
The proof is completed by substituting the result of lemma \ref{lemma:DTFT} in the above sum and simplifying.
\end{proof}

We list some typical values of $\gamma$ in Table~\ref{Gamma-Value}.
\begin{table}[!ht]
\centering
\fontsize{10}{10}\selectfont
\begin{threeparttable}
\caption{The values of $\gamma$ corresponding to some $\sigma$ and $d$.}\label{Gamma-Table}
\begin{tabular}{cccccc}
\toprule[1.0pt]
$\sigma$   &\ \ \ \ \ \  1 \ \ \ \ \ \ &\ \ \ \ \ \ \ \  2\ \ \ \ \ \ \ \  &\ \ \ \ \ \ \ \  3\ \ \ \ \ \ \ \  &\ \ \ \ \ \ \ \  4\ \ \ \ \ \ \ \  &\ \ \ \ \ \ \ \  5\ \ \ \ \ \ \ \   \cr
\midrule[0.8pt]
$d=1000$    & 0.447 & 0.333 & 0.277 & 0.243 & 0.218 \cr
$d=10000$   & 0.447 & 0.333 & 0.277 & 0.243 & 0.218 \cr
$d=100000$  & 0.447 & 0.333 & 0.277 & 0.243 & 0.218 \cr
\bottomrule[1.0pt]
\end{tabular}
\end{threeparttable}
\end{table}

\subsection{Calculation of $\beta$}
%\subsection{Beta}
The proof of Proposition~\ref{Beta-Value} is similar as the proof of Proposition~\ref{Gamma-Value}. The only difference is that we need to compute
\begin{equation}
\label{f_k_2}
f[k] = \frac{1}{2\pi}\int_0^{2\pi} \frac{e^{ik\theta}}{\left(1+2\sigma(1-\cos\theta)\right)^2}d\theta.
\end{equation}

By Residue theorem, for $k>0$ (note that $f[-k] = f[k]$
%similar to handle $k<0$
), we have
\begin{align*}
f[k] &= \frac{1}{2\pi}\int_0^{2\pi} \frac{e^{ik\theta}}{\left(1+2\sigma(1-\cos\theta)\right)^2}d\theta\\
&= \frac{1}{2\pi i}\oint \frac{z^{k+1}}{ (z + \sigma(2z-z^2-1))^2 } dz\\
&= \lim_{z\rightarrow \alpha^-} \frac{d}{dz} \left((z-\alpha^-)^2 \frac{z^{k+1}}{ (z + \sigma(2z-z^2-1))^2 } \right)\\
&= \lim_{z\rightarrow \alpha^-} \frac{d}{dz} \left(\frac{z^{k+1}}{ \sigma^2(z-\alpha^+)^2 } \right)\\
&= \frac{(k+1)\alpha^k}{4\sigma+1} + \frac{2\sigma\alpha^{k+1}}{(4\sigma+1)^{3/2}},\\
\end{align*}
where $\alpha_- = \frac{2\sigma+1-\sqrt{4\sigma+1}}{2\sigma}$.
Therefore, we have 
%$\beta = \frac{1}{\sigma^2}(\frac{1-\alpha_-^d}{(1-\alpha_-)^2} - \frac{d\alpha^d}{1-\alpha}) + \frac{2}{\sigma\sqrt{1+4\sigma}}\frac{\alpha_--\alpha_-^{k+1}}{1-\alpha_-}$
$$
\beta = \frac{2\alpha^{2d+1}-\xi\alpha^{2d}+2\xi d\alpha^d-2\alpha+\xi}{\sigma^2\xi^3(1-\alpha^d)^2}.
$$
We list some typical values of $\beta$ in Table~\ref{Beta-Value}.
\begin{table}[!ht]
\centering
\fontsize{10}{10}\selectfont
\begin{threeparttable}
\caption{The values of $\beta$ corresponding to some $\sigma$ and $d$.}\label{Beta-Table}
\begin{tabular}{cccccc}
\toprule[1.0pt]
$\sigma$   &\ \ \ \ \ \ \ \  1 \ \ \ \ \ \ \ \ &\ \ \ \ \ \ \ \  2\ \ \ \ \ \ \ \  &\ \ \ \ \ \ \ \  3\ \ \ \ \ \ \ \  &\ \ \ \ \ \ \ \  4\ \ \ \ \ \ \ \  &\ \ \ \ \ \ \ \  5\ \ \ \ \ \ \ \   \cr
\midrule[0.8pt]
$d=1000$    & 0.268 & 0.185 & 0.149 & 0.128 & 0.114 \cr
$d=10000$   & 0.268 & 0.185 & 0.149 & 0.128 & 0.114 \cr
$d=100000$  & 0.268 & 0.185 & 0.149 & 0.128 & 0.114 \cr
\bottomrule[1.0pt]
\end{tabular}
\end{threeparttable}
\end{table}

\section{Laplacian Smoothing and Diffusion Equation}
Let $u(x, t)$ be a function defined on the space-time domain $[0, 1]\times [0, +\infty)$, suppose it satisfies the following diffusion equation with the Neumann boundary condition
\begin{eqnarray}
\label{Eq:Diffusion}
\begin{cases}
\frac{\partial u}{\partial t} = \frac{\partial^2 u}{\partial x^2},\ \ (x, t)\in [0, 1]\times [0, +\infty),\\
\frac{\partial u(0, t)}{\partial x} = \frac{\partial u(1, t)}{\partial x} = 0,\ t\in [0, +\infty)\\
u(x, 0) = f(x),\ \ x\in [0, 1]
\end{cases}
\end{eqnarray}

If we apply the backward Euler in time and central finite difference in space to discretize the governing equation in \eqref{Eq:Diffusion}, we get
$$
\mathbf{v}^{\Delta t} - \mathbf{v}^{0} = \Delta t\mathbf{L}\mathbf{v}^{\Delta t},
$$
where $\mathbf{v}^0$ is the discretization of $f(x)$, and $\mathbf{v}^{\Delta t}$ is the numerical solution of \eqref{Eq:Diffusion} at time $\Delta t$. Therefore, we have
$$
\mathbf{v}^{\Delta t} = (I-\Delta t\mathbf{L})^{-1}\mathbf{v}^0,
$$
which is the LS with $\sigma=\Delta t$.

\end{document}